\synctex=1

\documentclass[nohyperref]{article}

\usepackage[accepted]{icml2022_arxiv}


\newif\ifmakefigs 
\makefigstrue    

\usepackage{titlesec}
\titlespacing\section{0pt}{0pt plus 2pt minus 0pt}{-2pt plus 2pt minus 0pt}
\titlespacing\subsection{0pt}{0pt plus 2pt minus 0pt}{-2pt plus 2pt minus 0pt}

\usepackage{preamble_general}
\usepackage{preamble_for_specific_document}
\usepackage{preamble_from_icml}


\renewcommand{\red}[1]{}
\renewcommand{\redx}{\color{red} \xmark \color{black}}

\begin{document}

\addtocontents{toc}{\protect\setcounter{tocdepth}{0}}

\twocolumn[
\icmltitle{Easy Variational Inference for Categorical Models via an Independent Binary Approximation}

\icmltitlerunning{Easy Variational Inference for Categorical Models via an Independent Binary Approximation}



\icmlsetsymbol{equal}{*}

\begin{icmlauthorlist}
\icmlauthor{Michael T. Wojnowicz}{tufts,disc}
\icmlauthor{Shuchin Aeron}{tufts,ece}
\icmlauthor{Eric L. Miller}{tufts,ece}
\icmlauthor{Michael C. Hughes}{tufts,cs}
\end{icmlauthorlist}

\icmlaffiliation{disc}{Data Intensive Studies Center}
\icmlaffiliation{ece}{Dept. of Electrical and Computer Engineering}
\icmlaffiliation{cs}{Dept. of Computer Science}
\icmlaffiliation{tufts}{Tufts University, Medford, MA, USA}
\icmlcorrespondingauthor{Email:}{michael.wojnowicz@tufts.edu}

\icmlkeywords{Machine Learning, ICML}

\vskip 0.3in
]

\printAffiliationsAndNotice{}

\setlength{\floatsep}{8pt plus 4pt minus 4pt}
\setlength{\textfloatsep}{8pt plus 4pt minus 3pt}

%
%
\setlength{\abovedisplayskip}{2pt plus 3pt}
\setlength{\belowdisplayskip}{2pt plus 3pt}

\begin{abstract}
We pursue tractable Bayesian analysis of generalized linear models (GLMs) for categorical data.
Thus far, GLMs are difficult to scale to more than a few dozen categories due to non-conjugacy or strong posterior dependencies when using conjugate auxiliary variable methods.
We define a new class of GLMs for categorical data called \textit{categorical-from-binary} (\CB) models. Each  \CB~model has a likelihood that is bounded by the product of binary likelihoods, suggesting a natural posterior approximation.  This approximation makes inference  straightforward and fast; using well-known auxiliary variables for probit or logistic regression, the product of binary models admits conjugate closed-form variational inference that is embarrassingly parallel across categories and invariant to category ordering. Moreover, an independent binary model simultaneously approximates   \textit{multiple} \CB~models. Bayesian model averaging over these can improve the quality of the approximation for any given dataset. We show that our approach scales to thousands of categories, outperforming posterior estimation competitors like Automatic Differentiation Variational Inference (ADVI) and No U-Turn Sampling (NUTS) in the time required to achieve fixed prediction quality.
\end{abstract}

\setlength{\tabcolsep}{0.1cm}
\begin{table*}[!t]
\caption{Assessment of categorical regression models in terms of the presence (\greencheck) or absence (\redx) of desirable features for fast, scalable Bayesian inference. \textit{Rows:} 
PGA refers to \pga \citep{polson2013bayesian}. SB-Softmax  refers to softmax regression with a stick-breaking link function \citep{linderman2015dependent}.
MNP+ACA stands for multinomial probit with \citet{albert1993bayesian} augmentation.
\IB~refers to our proposed independent binary regression (Sec.~\ref{sec:independent_binary_model}).
The first two rows are categorical models, the next three rows are categorical models with augmentation, and the last two rows are not categorical models, but in this paper we show how (and justify why) they can be used for approximate inference.
See Sec.~\ref{sec:extended_caption_for_feature_table} for an extended version of this table caption.}
\label{tab:comparison_of_Bayesian_categorical_models}
\begin{tabular}{l|ccccccc}
\textbf{Model} & \multicolumn{7}{c}{\textbf{Inference Feature}} \\
& Invariance to & Latent  &  Auxiliary & Closed-form & Conditional & Closed-form  & Embarassingly \\
 & category & linear & variable & likelihood &  conjugacy & variational & parallel across \\
 & ordering & regression &  independence &  &  & inference & categories \\
\toprule 
Softmax & \greencheck & \redx  & \greencheck & \greencheck & \redx & \redx & \redx \\
MNP & \greencheck & \redx  & \greencheck & \redx & \redx & \redx & \redx \\
\midrule 
Softmax+PGA  & \greencheck &  \greencheck & \greencheck & \greencheck & \greencheck & \redx  & \redx \\
SB-Softmax+PGA & \redx & \greencheck  & \greencheck &\greencheck & \greencheck & \greencheck & \redx \\
MNP+ACA  & \greencheck  & \greencheck & \redx & \redx & \greencheck & \greencheck & \redx \\
\midrule
\IB-Probit+ACA & \greencheck &   \greencheck &  \greencheck &  \greencheck &  \greencheck &  \greencheck & \greencheck \\
\IB-Logit+PGA & \greencheck &   \greencheck &  \greencheck &  \greencheck &  \greencheck &  \greencheck & \greencheck 
\end{tabular}
\end{table*}

\section{Introduction} \label{sec:intro}

We consider the problem of modeling categorical data informed by covariates using the machinery of generalized linear models (GLMs). 
Because our intended big data applications may involve rare events or little data for some quantities of interest, %
we take a Bayesian approach in order to estimate \emph{distributions} over unknown parameters given available data, and then average over these distributions when making predictions.
While many generalized linear models for categorical data have been proposed, Bayesian analysis of these models remains difficult with substantial active research due to the need for methods that are simultaneously accurate, tractable, and scalable.


The most common modeling choice for categorical data is  multi-class logistic regression, which uses a softmax (a.k.a. multi-logit) function to produce category probabilities.
The softmax likelihood is not conjugate to any standard prior over weight parameters (such as Gaussian), so estimating posteriors over weights requires expensive sampling methods \citep{hoffman2014no} or non-conjugate variational optimization methods \citep{wang2013variational,braun2010variational,kucukelbir2017automatic}.
Recent auxiliary variable methods~\citep{polson2013bayesian} have yielded conjugate conditionals amenable to Gibbs sampling, but closed-form variational updates for multiple categories require \emph{stick-breaking}~\citep{linderman2015dependent}.
Stick-breaking imposes an asymmetric order over categories, yet in many cases it is unnatural to view category selection as a sequential process.
In practice, this asymmetry complicates prior specification and inference quality~\citep{zhang2017permuted}. 

An alternative model is multi-class probit regression, whose link function is the cumulative distribution function of the Normal distribution.
The probit admits conjugate inference under a well-known auxiliary variable representation~\citep{albert1993bayesian,held2006bayesian}.
However, multi-class probit models encode strong posterior dependence among entries of the auxiliary parameter vectors. This dependence requires one-entry-at-a-time sampling instead of joint sampling~\citep{johndrow2013diagonal}, yielding poor mixing performance as the number of categories grows.
Furthermore, implementations often require picking a ``base category''; this choice can impact the practical results of inference \citep{burgette2021symmetric}.
Finally, the multinomial probit lacks closed-form category probabilities~\citep{johndrow2013diagonal},  which has prevented adoption within more complicated models~\citep{holsclaw2017bayesian}.

Motivated by difficulties that arise from these previous efforts (summarized in Table 1), we present a new class of categorical models -- \emph{categorical-from-binary} (CB) models\footnote{Code: \href{\codeURL}{\codeURLShort}} -- whose defining feature is that each one's likelihood can be lower-bounded by the likelihood of an independent binary model.
To perform approximate posterior estimation for such models, we fit the independent binary model via coordinate-ascent variational methods, taking advantage of well-known closed-form updates for binary logit or probit models.
This approach is scalable to thousands of categories, even more so because it is \emph{embarrassingly parallel} across categories, meaning we can fit a separate model for each category with no inter-worker communication overhead~\citep{fosterSecParallelAlgorithm1995}.
Even without parallelization, we demonstrate heldout predictions of comparable prediction quality to other categorical GLMs in far less time (see Fig.~\ref{fig:holdout_perf_over_time}), with competitive likelihoods only slightly below the expensive gold standards.
Our accurate predictions are possible via a \emph{Bayesian model average} (BMA) over members of our CB model class which we can deploy cheaply using only one posterior fit for the surrogate model.
Our experiments reveal that our proposed methods offer a promising first-line approach for fast Bayesian analysis of big categorical data, especially when the number of categories is large.




\subsection{Problem formulation} \label{sec:problem_formulation}

\begin{subequations}
Consider a given training set of $N$ 
paired observations, $\{(\+x_i, y_i)\}_{i=1}^N$, where each observation (indexed by $i$) consists of $\+x_i \in \R^M$, a (fixed) vector of covariates, and integer $y_i \in \set{1,...,K}$, indicating which of the $K$ categories $i$ belongs to. 
We treat $y_i$ as a random variable generated as:
\begin{align}
y_i &\sim \text{Cat}(\+s_i), \quad  \+s_i = (s_{i1}, \ldots s_{iK})^T \in \Delta_{K-1}, \\
\+s_i &= f(\+\eta_i), \quad \+\eta_i= \+B^T \+x_i.
\end{align}
Here, $\+B \in \R^{M \times K}$ are unobserved regression weights, whose matrix-vector product with covariates $\+x_i$ yields the so-called linear predictor $\+\eta_i$.
The function $f : \R^K \to \Delta_{K-1}$ maps the real-valued vector $\+\eta_i$ to a vector $\+s_i$ of $K$ non-negative values that sum to one.    We refer to this model as a \textit{categorical regression} or a generalized linear model (GLM) for categorical data.  Note that $f$ need not be invertible, so the model need not be identifiable (i.e., there may exist $\+B_1 \neq \+B_2$ which yield identical distributions over $y_i$). 
\label{eqn:categorical_regression}
\end{subequations}

We wish to pursue Bayesian inference, treating the parameter $\+B$ as a random variable with prior $\pi(\+B)$.
We use a Gaussian prior in practice.
Our primary interest is the \emph{prediction task}: given $N$ training pairs $(\+x_i, y_i)$ and a new covariate vector $\+x_*$, we wish to make probabilistic prediction of the new category label $y_*$ via the posterior predictive $p( y_* | \{y_i\}_{i=1}^N ) = \int p( y_* | \+B) p( \+B | \{y_i\}_{i=1}^N ) d \+B$. 
This prediction requires the completion of a \emph{posterior estimation} task: given a fixed training set of size $N$, estimate $p( \+B | \{y_i \}_{i=1}^N)$.
To keep the formal statements of both tasks simple, we treat covariates as fixed knowns and suppress conditioning on $\+x_i$ in notation.
We stress that our focus is on the posterior predictive, as category outcomes $y_*$ are relevant to applications while the weights $\+B$ are intermediate quantities whose non-identifiability can make assessment challenging; large differences in parameter space may not imply notable changes in prediction quality.

 
\textbf{Contributions.}
Our contribution is to define a class of categorical models (choices of the function $f$) that we call \emph{categorical-from-binary} models.
Using this class, we show that a well-justified approximation is possible such that posterior estimation enjoys all the beneficial properties in Table~\ref{tab:comparison_of_Bayesian_categorical_models}.
To our knowledge, out of all previous models listed in Table~\ref{tab:comparison_of_Bayesian_categorical_models}, only our approach yields tractable category probabilities, provides scalable yet closed-form variational optimization, is invariant to category order, and can be integrated into more complex graphical models.
We further provide a prediction method that averages across models to obtain accurate categorical predictions from our approximate posterior.

\section{Models}

\subsection{Overview} \label{sec:models_overview}
Bayesian inference for \textit{categorical} regressions (Eq.~(\ref{eqn:categorical_regression})) is difficult, in the sense that no current approach has all the features given in Table~\ref{tab:comparison_of_Bayesian_categorical_models}. In contrast, Bayesian inference for \textit{binary} regression is far more straightforward.  Using \pga \citep{polson2013bayesian} for  logistic regression, or the Normal augmentation of  \citet{albert1993bayesian} for probit regression, one may obtain conditionally conjugate models and closed-form variational inference \citep{durante2019conditionally, consonni2007mean,armagan2011note,fasano2019scalable}.
These properties extend to regression models for $K$-bit binary vectors that treat each bit independently (see Sec.~\ref{sec:independent_binary_model}).  In fact, if each bit were to indicate the presence of a particular category, then every desirable inferential feature listed in Table~\ref{tab:comparison_of_Bayesian_categorical_models} would be present. We would like to exploit this collection of features for \emph{categorical} modeling.  The problem is that independent binary regression allows for multiple non-zero bits while categorical models require exactly one non-zero bit.
 To address this problem, we construct categorical models around independent binary models (Sec.~\ref{sec:categorical_from_binary_models}), which enables the efficient posterior estimation (Sec.~\ref{sec:posterior_estimation}).


\subsection{A model for independent binary vectors } \label{sec:independent_binary_model}

Consider a general univariate binary regression likelihood \citep{albert1993bayesian} of the form
\begin{align*}
\BINARYy_i \cond \BINARYvecbeta &\indsim \Bernoulli \big( H(\BINARYeta_i) \big),  \quad i=1,...,n 
\labelit \label{eqn:general_binary_regression}
\end{align*}
where $\BINARYy_i \in \{0, 1\}$ are binary response random variables, the linear predictor $\BINARYeta_i = \+x_i^T \BINARYvecbeta$ is formed from known covariates $\+x_i \in \R^M$ and unknown parameters $\BINARYvecbeta \in \R^M$, and $H$ is an arbitrary cdf that is referred to as an inverse link function. Logistic regression sets $H$ to be the standard logistic cdf and probit regression sets $H$ to the standard Gaussian cdf.
We use the breve notation to distinguish random variables here from those in later categorical models.
For additional intuition about Eq.~\eqref{eqn:general_binary_regression}, see Sec.~\ref{sec:additional_interpretation_for_the_general_binary_regression_model}.   


Now let us consider modeling binary \textit{vectors}: $\IBvecy_i = (\IBy_{i1}, ..., \IBy_{iK}) \in \set{0,1}^K, i=1,2, \hdots, N$. Crucially, each $\IBvecy_i$ is a K-bit vector, and \emph{not} a one-hot vector: any number of entries could be 1 or 0.   Taking the product of binary regression likelihoods of the form in Eq.~\eqref{eqn:general_binary_regression}, we obtain a model which we call \emph{independent binary} (\IB) \emph{regression},
\begin{align*}
\IBy_{ik} \cond \IBvecbeta_k &\indsim \Bernoulli \big( H(\IBeta_{ik} ) \big)
\labelit \label{eqn:independent_binary_model}
\end{align*}
independently across each $k=1,2,\hdots,K$, with each linear predictor $\IBeta_{ik} = \+x_i^T \IBvecbeta_k$ formed from  known covariates $\+x_i \in \R^M$ and unknown parameters $\IBvecbeta_k \in \R^M$. 
 The likelihood for an observation under a K-bit \IB~model is
\begin{align}
p_\IB(\IBvecy_i  \cond \IBmatrixB ) {=}  \ds\prod_{k=1}^K H( \IBeta_{ik} )^{\IBy_{ik}} \; \big( 1-H(\IBeta_{ik}) \big)^{1-\IBy_{ik}},
\label{eqn:IB_likelihood}
\end{align}
where $\IBmatrixB = (\wh{\+\beta}_1, ..., \wh{\+\beta}_K) \in \R^{M \times K}$ is a matrix of weights for each combination of covariate and category.  
\IB~is a \textit{class} of models, each member defined by a chosen $H$.
When $H$ is the standard logistic or standard Gaussian cdf, we respectively obtain the \IB-Logit or \IB-Probit models.   

\subsection{Categorical-from-binary models} \label{sec:categorical_from_binary_models}


Suppose now that we are interested in regression models for categorical (one-of-K) data, i.e. 
$y_i \sim \text{Cat}( s_{i1}, \ldots s_{iK} )$ where $y_i \in \set{1,...,K}$ and $\+s_i \in \Delta_{K-1}$.  We restrict our focus to categorical models which are related to  \IB~models (Eq.~\eqref{eqn:independent_binary_model}) in the following manner:

\begin{definition}
A \textbf{categorical-from-binary} (\CB) model is a GLM for categorical data $y_i \in \set{1,..,K}$ which always assigns a higher likelihood to a category $k$ than an IB model does to the corresponding one-hot vector. \CB~models obey the likelihood bound
\begin{align*}
p_\CB(y_i \cond \+B) >  p_\IB(\IBvecy_i = \+e_{y_i} \cond \IBmatrixB = \+B)
\labelit \label{eqn:CFB_likelihood_larger_than_IB}
\end{align*}
for all observations $y_i \in \set{1,...,K}$, covariates $\+x_i \in \R^M$, and weights $\+B \in \R^{M \times K}$, and where $\+e_{y_i}$ is the one-hot indicator vector with value of 1 only at entry $y_i$.
\label{def:categorical_from_binary_model}
\end{definition}




To construct a \CB~model, we must construct a function $f$ for the relation $p_\CB(y_i \cond \+B) = f(\+\eta_i$),  where $\+\eta_i = \+B^T \+x_i$, such that the bound in Eq.~\eqref{eqn:CFB_likelihood_larger_than_IB} is satisfied.   We begin by choosing a cdf $H$ (e.g. standard Gaussian or standard Logistic) to specify a concrete \IB~model (e.g. \IB-Probit or \IB-Logit). 
We refer to the chosen \IB-model as the \textbf{\base} of a \CB~model.     We then define $h : \R^K \to \R^K$ such that $h(\+\eta_i) = \big(H(\eta_{i1}), \hdots, H(\eta_{iK})\big)^T$.
\CB~models construct $f$ via the composition $f=g \circ h$ for some function $g$ defined below.
This composition means that \CB~category probabilities are determined by the vector output of $h$, whose entries define the probabilities of ``success" at each of the $K$ bits of the \IB~model:
$H(\eta_{ik})=p_\IB(\widehat{y}_{ik} = 1 \cond \+\beta_k)$ for all $k$.
\red{CONFIRM: The term "base" is used enough to warrant the definition.}



\red{TODO: confirm the transpose in $h$.}


\red{TODO: Look for corresponding model and remove or define it.}

\subsection{Concrete categorical-from-binary likelihoods} \label{sec:two_classes_of_CB_models}

After selecting a specific cdf $H$, fully specifying a concrete \CB~model for categorical data requires identifying the transformation $g$ which maps the \IB~probabilities of success $h(\+\eta_i)$ into the simplex $\Delta_{K-1}$ in a way that satisfies the bound in Eq.~\eqref{eqn:CFB_likelihood_larger_than_IB}.  We now provide two such specifications.
First, the \emph{marginalization} construction assumes the probability of the $k$th category is proportional to the probability of success in the $k$-th bit of the \IB~model.
Second, the \emph{conditioning} construction requires the \IB~model to assign non-zero probability only to valid one-hot vectors, and then assumes that the probability of the $k$th category equals the probability of its one-hot representation.

\textbf{\CB~construction via marginalization.} A \emph{categorical-from-binary-via-marginalization} (\CBM) model produces category probabilities by normalizing the marginal probabilities of success $\set{H(\eta_{ik})}_{k=1}^K$ from an \IB~model:
\begin{align}
p_\CBM(y_i = k \cond \+B) =  \frac{H( \eta_{ik})}{ \sum_{\ell=1}^K H( \eta_{i\ell})}.
	\labelit \label{eqn:CBM_category_probabilities}
\end{align}
for all categories $k \in \set{1, \ldots, K}$.

\textbf{\CB~construction via conditioning.} A \emph{categorical-from-binary-via-conditioning} (\CBC) model produces category probabilities from an IB model by conditioning on the event that the vector has exactly one positive entry:
\begin{align}
p_\CBC(y_i {=} k \cond \+B) = 
\df{H( \eta_{ik}) \ds\prod_{j \neq k} (1-H(\eta_{ij}))}{\ds\sum_{\ell=1}^K  H(\eta_{i\ell}) \ds\prod_{j \neq \ell} (1-H(\eta_{ij}))    }
\label{eqn:CBC_category_probabilities}
\end{align} 
for categories $k=1,..., K$.
A \CBC~model is an \IB~model truncated to the space of one-hot vectors. A \CBC~model can also be expressed as a \textit{normalized odds} model (see Sec. \ref{sec:CBC_as_normalized_odds}). 


\begin{proposition}
\CBC~and \CBM~models are categorical-from-binary (\CB)~models satisfying Definition~\ref{def:categorical_from_binary_model}. \red{Should this be: ``\CBC~and \CBM~models are categorical-from-binary (\CB)~models, and the choice of $H$ determines a corresponding \IB~model."?}
\label{prop:CBT_and_CBM_are_CB_models}
\end{proposition}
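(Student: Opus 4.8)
The plan is to evaluate both sides of the bound in Eq.~\eqref{eqn:CFB_likelihood_larger_than_IB} in closed form and reduce each of the two cases to an elementary inequality. First I would write out the right-hand side: substituting the one-hot vector $\+e_k$ into the \IB~likelihood Eq.~\eqref{eqn:IB_likelihood} gives
\begin{align*}
p_\IB(\IBvecy_i = \+e_k \cond \IBmatrixB = \+B) = q_k \ds\prod_{j \neq k}(1 - q_j),
\end{align*}
where I abbreviate $q_k := H(\eta_{ik})$. Because $H$ is the cdf of a distribution with full support on $\R$ (standard Gaussian or logistic), each $q_k \in (0,1)$ strictly, a fact I will use repeatedly.

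For the \CBC~case the bound is almost immediate. The numerator of Eq.~\eqref{eqn:CBC_category_probabilities} is exactly the right-hand side above, so the claimed strict inequality is equivalent to showing that the normalizing denominator satisfies $Z := \sum_{\ell} q_\ell \prod_{j\neq\ell}(1-q_j) < 1$. I would observe that $Z = \sum_\ell p_\IB(\+e_\ell)$ is precisely the \IB-probability that the $K$ independent bits contain exactly one success. Since the all-zeros outcome has probability $\prod_j(1-q_j) > 0$ and is disjoint from this event, $Z \le 1 - \prod_j(1-q_j) < 1$, which gives the strict bound for \CBM's conditioning cousin.

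The \CBM~case is the crux. Dividing the target inequality $\frac{q_k}{\sum_\ell q_\ell} > q_k\prod_{j\neq k}(1-q_j)$ through by $q_k > 0$ reduces it to the purely algebraic claim $\big(\sum_{\ell} q_\ell\big)\prod_{j\neq k}(1-q_j) < 1$. This is not obvious, since $\sum_\ell q_\ell$ may be as large as $K$; establishing it is the main obstacle. I would first use $q_k < 1$ to bound $\sum_\ell q_\ell < 1 + \sum_{j\neq k} q_j$, and then prove the auxiliary inequality $\big(1 + \sum_{j} a_j\big)\prod_{j}(1 - a_j) \le 1$ for any finite collection $a_j \in (0,1)$ by induction on the number of factors: the base case is an equality, and the inductive step follows from the observation that $(T + a)(1 - a) < T$ whenever $T \ge 1$ and $a > 0$. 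Composing these two bounds yields the desired strict inequality.

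Finally, since both \CBC~and \CBM~are constructed as $f = g \circ h$ from a chosen cdf $H$ and hence are GLMs for categorical data of the form in Eq.~\eqref{eqn:categorical_regression}, verifying Eq.~\eqref{eqn:CFB_likelihood_larger_than_IB} for all observations $y_i$, covariates $\+x_i$, and weights $\+B$ is exactly what Definition~\ref{def:categorical_from_binary_model} requires, completing the argument.
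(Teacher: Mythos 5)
Your proposal is correct and follows essentially the same route as the paper's own proof: both cases reduce to the identical algebraic inequality $\bigl(\sum_{\ell} q_\ell\bigr)\prod_{j\neq k}(1-q_j) < 1$ for \CBM~(settled by induction) and to showing the normalizing denominator $p_\IB(\Upsilon \cond \+B) < 1$ for \CBC~(settled by exhibiting a non-one-hot outcome of positive probability, the all-zeros vector in your case, an arbitrary one in the paper's). Your only cosmetic departure is first bounding $q_k < 1$ so the induction runs on the symmetric quantity $\bigl(1+\sum_j a_j\bigr)\prod_j(1-a_j) \le 1$ with an empty-product base case, whereas the paper inducts directly on $\alpha_{k,K}$ starting from $K=2$; both inductive steps rest on the same one-line algebra.
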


\begin{proof}
Deferred to Appendix Sec.~\ref{sec:CBT_and_CBM_are_CB_models_appendix} to save space.
\end{proof}

\begin{example} To illustrate the strategies taken by the \CBM~and \CBC~models in forming a categorical regression from an \IB~regression, contrast how they assign probability to the first of $K=3$ categories.  
{\tiny
\begin{align*}
p_\CBM (y = 1 \cond \+B)
& \propto  p_\IB \bigg(\IBvecy \in \big\{ (1,0,0), (1,1,0), (1,0,1), 
(1,1,1)  \big\} \cond \+B \bigg) \\
p_\CBC (y = 1 \cond \+B) 
&= p_\IB \bigg(\IBvecy = (1,0,0) \, \bigg| \, \IBvecy \in \big\{ (1,0,0), (0,1,0), (0,0,1)  \big\}, \+B \bigg) 
\end{align*}
}The distinction can be understood through a voting metaphor. The conditioning strategy of \CBC~models forces the $K$ independent binary models to agree on a single ``vote", the marginalization strategy of \CBM~models allows multiple votes across the $K$ independent bits.
\end{example}

\textbf{From model classes to models.} Both \CBC~and \CBM~are   model \textit{classes}, generating different models as $H$ varies.  For instance, by taking $H$ to be the standard Gaussian cdf  we can generate the \CBC-Probit and \CBM-Probit models (for which \IB-Probit is the base), and by taking $H$ to be the standard Logistic cdf, we can generate the \CBC-Logit and  \CBM-Logit models (for which \IB-Logit is the base).  



\subsection{Related work on IB models.}

Our work is inspired by the \emph{diagonal orthant} (\DO) models proposed by~\citet{johndrow2013diagonal}. What we call the \CBC~likelihood is equivalent to the marginal likelihood of the DO model (integrating away auxiliary variables). \citeauthor{johndrow2013diagonal} further proposed using independent binary regressions (as we do) to perform scalable Bayesian computation for categorical data.
\citeauthor{johndrow2013diagonal} argued for IB approximation based on a claimed identification equivalence of point estimated weights between IB and DO models.

In a recent non-archival workshop paper \citep{wojnowicz2021easy}, we clarified that IB should be viewed as a separate, surrogate model (see also Sec.~\ref{sec:impossibility_of_exact_inference_on_a_CB_model_via_inference_on_an_IB_model}).
This paper extends that early line of work, offering a more coherent view of surrogate bounds, expanding to include many possible cdfs (not just probit) for IB approximations, and introducing our BMA approach to effective predictions.  We also simplify inference, as neither the auxiliary variables in \citeauthor{johndrow2013diagonal}'s DO model (nor the auxiliary variables in \citep{wojnowicz2021easy}'s SDO model) are needed to relate \IB~models to categorical models.

In summary, building on the IB approximation first suggested by \citeauthor{johndrow2013diagonal}, we contribute the following advances:
(1) We clarify that doing inference on a relevant categorical model via an \IB~model requires an \textit{approximation}. 
(2) We \textit{justify} this approximation via surrogate likelihood bounds.
(3) We expand the class of categorical models suitable for \IB~approximation, showing that \textit{both} \CBC~and \CBM~models should be included to obtain high-quality predictions (see Sec.~\ref{sec:experiment_on_averaging_approximation_targets_for_enhanced_performance}). 
(4) We focus on optimization approaches to posterior estimation, which may be more scalable than \citeauthor{johndrow2013diagonal}'s Gibbs sampling.  

\textbf{Similarity to one-vs-rest classification.}
At a high-level, our IB approximation is similar to a common generic heuristic for building multi-class classifiers known as a \emph{one-versus-rest} ensemble (also called \emph{one-versus-all}).
One-vs-rest schemes fit $K$ separate binary classifiers to distinguish each class from all others, and then make a one-of-$K$ prediction by taking the class corresponding to the classifier with largest predicted score or probability.
There is empirical evidence that one-vs-rest schemes can deliver accuracies on par with one-of-K classifiers for non-linear kernel methods~\citep{rifkinDefenseOneVsAllClassification2004}.
Due to simplicity and computational speed, widely-used software packages support this scheme~\citep{pedregosaScikitlearnMachineLearning2011} and efforts to classify 10,000 image classes have called one-vs-rest ``the only affordable option'' \citep{dengWhatDoesClassifying2010}.
However, to our knowledge there has not yet been statistical justification for one-vs-rest schemes in terms of a true multi-class likelihood, leading to concerns about coherency~\citep{murphySec14SVMs2012}.
Our likelihood bound arguments justify \emph{probabilistic} one-vs-rest approximations of CB models.

\section{Posterior estimation}\label{sec:posterior_estimation}



We now develop our methodology for approximating a \CB~model's posterior over weights, $p(\+B | \{y_i \}_{i=1}^N)$.  The key insight is this:  we can provably optimize a lower bound on the likelihood of a \CB~model by instead performing traditional variational inference for the \IB~model.  
First, we establish that after integrating away the weights $\+B$, the marginal likelihood of a \CB~model is lower-bounded by the marginal likelihood of an \IB~model.
Second, we argue that a mean-field variational posterior $q(\+B)$ estimated to approximate the \IB~model is also a suitable approximation for a \CB~model.
This suggests a straightforward coordinate ascent variational inference algorithm (IB-CAVI), which uses the efficient conjugate updates for logit or probit \textit{binary} models discussed in 
Sec.~\ref{sec:models_overview}.

\red{TODO: Content has moved all around; reabsorb all of this. In this section, we show how to use an \IB~approximation within the variational framework to produce  approximate Bayesian inference for categorical GLMs that has closed-form optimization updates.  The key insight is that we can provably optimize a lower bound on a \CB~model by instead maximizing the traditional ELBO of the independent binary model.  The optimization problem is not specific to the choice of underlying \CB~model, so a single optimization run is sufficient for approximate inference on \textit{multiple} \CB~models. In Sec.~\ref{sec:determining_a_good_target_for_an_IB_approximation}, we show how to exploit this feature by applying Bayesian model averaging over approximation targets. This improves approximation quality with no additional training cost. }

\subsection{Marginal likelihood bounds} 
\begin{subequations}
For any dataset $\+y = \{ y_i \}_{i=1}^N$, any \CB~likelihood and any choice of prior with density $\pi(\+B)$, we obtain 
 \begin{align}
 p_\CB(\+y) &= \ds\int 
 	\pi(\+B) \,
	\prod_{i=1}^N p_\CB(y_i \cond \+B)
	\wrt{\+B}
	\label{eqn:marginal_likelihood_of_CB_model}\\
 &>	\ds\int 
 	\pi(\+B) \,
	\prod_{i=1}^N 
	p_\IB( \IBvecy_i = \+e_{y_i}  \cond \+B)
	\wrt{\+B} 
	\label{eqn:marginal_likelihood_of_IB_model} \\
 & = p_\IB(\IBmatrixY = \+E(\+y ) ),
 \end{align}
which follows from the likelihood bound relating CB to IB (Eq.~\eqref{eqn:CFB_likelihood_larger_than_IB}) and monotonicity of the integral. 
Here, $\IBmatrixY = (\IBvecy_i)_{i=1}^N$, and $\+E(\+y)$ represents a one-hot representation of the categorical training data $\+y$, stacking all one-hot vectors $\{\+e_{y_i} \}_{i=1}^N$.
\label{eqn:IB_lower_bounds_the_marginal_likelihood_of_a_corresponding_CB_model}
\end{subequations}



\red{TODO: In the VI section, Eric had said that the following should be made clearer much earlier: "we can provably optimize a lower bound on a truly categorical model by instead maximizing the traditional ELBO of the independent binary model."  But here I show the same thing with MCMC.  So can I state a broader claim that handles both VI aand MCMC, and lift it up to higher in the paper (like the intro and summary?)}


\subsection{Variational surrogate bounds}\label{sec:variational_surrogate_bounds}

Variational inference~\citep{wainwright2008graphical, blei2017variational} deterministically approximates a posterior distribution by finding the member $Q \in \Q$  of a tractable family of distributions which maximizes a lower bound on the   logarithm of the \textit{evidence} (the marginal likelihood of the data).  This lower bound is known as the evidence lower bound or ``ELBO''.


For categorical-from-binary (\CB) models, the evidence of interest is $p_{\CB}( \+y ) = \int  p_{\CB}( \+y | \+B)\, \pi( \+B)\,d\+B$, where $\pi$ denotes the prior density on $\+B$. This quantity is intractable for both \CBC~and \CBM~models (as defined in Sec.~\ref{sec:two_classes_of_CB_models}) because they lack a conjugate prior.
For instance, a Gaussian prior is not conjugate, since the logarithm of the joint density $p_c (y_i \cond \+B) \pi(\+B)$  does not yield a quadratic in $\+B$, where $c$ is in $\set{\CBC,\CBM}$ and $\pi$ is a Gaussian density. 

\red{TODO: What I said is true if $\+\beta$ is a vector, is that still technically correct if $\+B$ is a matrix ?}
 
Bayesian inference for a model with an intractable marginal likelihood can sometimes be provided through a conventional variational approach. If we select $Q$ as any distribution over $\+B \in \R^{M \times K}$, and let $q(\cdot)$ be the density of $Q$, then the traditional lower bound of the log of the evidence, which we denote $\ELBO_\CB \leq \log p_{\CB}( \+y)$, follows from Jensen's inequality: 
\begin{align*}
\ELBO_\CB(q)  = 
 \E_q [\log p_\CB(\+y \cond \+B)] - D_\text{KL}(q(\+B) \parallel \pi(\+B)),
\labelit \label{eqn:ELBO_for_CB}	
\end{align*}
where $D_\text{KL}$ is the Kullback-Leibler divergence. 
Unfortunately, this lower bound is still intractable to compute. The energy term $\E_q [\log p_\CB(\+y \cond \+B)]$ contains $N$ expectations that lack closed-form expression (expected log sums of $K$ nonlinear quantities), due to the normalizing constants of the categorical models (Eqs.~\eqref{eqn:CBM_category_probabilities} and \eqref{eqn:CBT_category_probabilities_as_odds}).
While Monte Carlo approximations to this integral are possible that can enable gradient-based learning of $q(\+B)$, given a fixed computational budget the quality of these approximations becomes increasingly suspect in high dimensions, such as when the number of categories grows.

\red{TODO: Work on this; does this show up anywhere in inference when we deal specifically with probit or logit models?} \red{TODO: If the regression weights are independent across categories, though, monte carlo sampling should be ok?!} \red{TODO:  Consider whether I want to be more specific, as per this outdated description from the workshop paper: ``due to the SDO normalizing constant ; we must handle $N$ terms of the form $\E_q [\log C_{i,\SDOy_i^{SDO}}(\SDObeta)]$.}

Instead, we define a surrogate objective $\L_{\CB}(q)$ that lower bounds the log marginal likelihood for any \CB~model:
\begin{subequations}
{\footnotesize 
	\begin{align}
	\log p_\CB(\+y) & \stackrel{\eqref{eqn:IB_lower_bounds_the_marginal_likelihood_of_a_corresponding_CB_model}}{>} 
	\log p_\IB(\IBmatrixY = \+E(\+y)) \label{eqn:marginal_IB_log_likelihood_is_a_lower_bound_on_CB_marginal_log_likelihood} \\
	 & \geq \ELBO_\IB(q; \IBmatrixY = \+E(\+y))  := \L_{\CB}(q).  \label{eqn:IB_ELBO_as_surrogate_lower_bound_for_CB}
	\end{align}
\label{eqn:surrogate_objective_for_CB_models} 
}
\end{subequations}
We call this a \textit{surrogate lower bound} because there are two bounds at work: the bound relating \CB~to \IB~in Eq.~\eqref{eqn:marginal_IB_log_likelihood_is_a_lower_bound_on_CB_marginal_log_likelihood} and the traditional ELBO (via Jensen's inequality) in Eq.~\eqref{eqn:IB_ELBO_as_surrogate_lower_bound_for_CB}.  (Recall from Eq.~\eqref{eqn:IB_lower_bounds_the_marginal_likelihood_of_a_corresponding_CB_model} that the the former bound requires that the \CB~model and its  \IB~base have the \emph{same} prior density $\pi$ over weights.)  
This yields a surrogate objective $\L_{\CB}(q)$ which is exactly  the traditional ELBO applied to the \IB~model.   
As justified by this surrogate, we can solve our Bayesian inference problem for \textit{categorical} regression by applying well-known variational inference scheme for \textit{binary} regression on a one-hot transformation of the categorical data.

\red{TODO: absorb Via the surrogate bound relation of Eq.~\eqref{eqn:surrogate_objective_for_CB_models}, we can provably optimize a lower bound on a \CB~model by instead maximizing the traditional ELBO of its corresponding \IB~model. This argument holds for any selected approximate posterior family $\mathcal{Q}$ over $\+B$.}


   \red{TODO: Absorb: The needed expressions follow immediately from the analogous expressions for binary logit or probit regression under the appropriate data augmentation (recall Sec.~\ref{sec:models_overview}) and independence.  We provide more information in Sec.~\ref{sec:procedure_for_posterior_estimation}. }



\subsection{Procedure for posterior estimation}
\label{sec:procedure_for_posterior_estimation}

\red{CONFIRM:  We no longer need to mention the strategy of introducing whatever aux vars are necessary to get conditional conjugacy for the IB models.}

We now outline scalable procedures for closed-form coordinate ascent variational inference (CAVI) that will estimate an optimal approximate posterior $q^*(\+B)$ under the surrogate objective $\ELBO_\IB$. Especially in high-dimensional settings, these procedures are far more scalable than the difficult task of directly optimizing the truly-categorical model (via the objective $\ELBO_\CB$).



Closed-form CAVI procedures for univariate binary regression (Eq. \eqref{eqn:general_binary_regression}) are well-known when the prior on weights $\BINARYvecbeta$ is Gaussian and the function $H$ corresponds to either logit or probit regression, as reviewed in Sec.~\ref{sec:models_overview}.
Both involve augmenting observed binary data $\BINARYvecy \in \set{0,1}^N$  with auxiliary variables $\BINARYvecz \in \R^N$ such that the augmented model is conditionally conjugate while the original model is preserved through marginalization.   In particular, we can obtain a conditionally conjugate model by either constructing $\BINARYvecz$ via truncated normal augmentation for probit regression \citep{albert1993bayesian} or via \pga for logistic regression \citep{polson2013bayesian}.
Extrapolating from this univariate binary model to the $K$ independent bits of the \IB~model (Eq.~\eqref{eqn:IB_likelihood}), we immediately obtain conditional conjugacy by augmentation with a matrix $\IBmatrixZ \in \R^{N \times K}$ whose $k$th column $\IBvecz_k$ uses the relevant univariate augmentation strategy.

Our variational approximation of the \emph{augmented} $K$-bit IB model assumes a mean-field factorization: $q(\IBmatrixZ,  \IBmatrixB) = q(\IBmatrixZ) q(\IBmatrixB)$.
Under this choice, deriving a coordinate ascent algorithm to find the $q$ that optimizes $\L_{\CB}(q)$ in Eq.~\eqref{eqn:surrogate_objective_for_CB_models} follows the standard variational recipe~\citep{blei2017variational}.
First, because both prior and likelihood are independent across categories $k$, our mean-field posterior also simplifies as independent across categories without further approximation: $q(\IBmatrixZ,  \IBmatrixB) = \prod_{k=1}^K q(\IBvecz_k) q(\IBvecbeta_k)$.  From there, one exploits known applications of CAVI to univariate binary models specific to the chosen link function, either logit~\citep{durante2019conditionally} or probit~\citep{consonni2007mean,armagan2011note,fasano2019scalable}. 
Procedurally, from a suitable initial value of $q$, each factor of $q$ is updated to maximize $\ELBO_\IB$ while holding others fixed, using closed-form updates arising from conditional conjugacy. 
For concrete realizations of the required updates for a $K$-bit \IB~model, see Sec.~\ref{sec:VI_for_CB_probit_models} for the probit link and Sec.~\ref{sec:VI_for_CB_logit_models} for the logit.  
 Since this posterior estimation procedure operates by doing CAVI for a surrogate \IB~model, we call it \textit{IB-CAVI (independent binary coordinate ascent variational inference)}. 

After iterating updates to each factor until convergence, the resulting variational density $q^*$ over $\IBmatrixB$ is a local maximum of the $\ELBO_\IB$ \citep{ormerod2010explaining}.  By Eq.~\eqref{eqn:surrogate_objective_for_CB_models}, $q^*(\IBmatrixB)$ is therefore a local maximum of a surrogate bound on the \CB~model, and thus we can treat $q^*(\IBmatrixB)$ as an approximation to the ideal (intractable) posterior $p_{\CB}(\+B | \+y)$ of the categorical model. 

Our IB-CAVI procedure is not specific to a particular \CB~model. One optimization run can produce a posterior $q^*$ suitable for \textit{multiple} \CB~target models, as long as the \IB~model is a base. 
For example, performing CAVI for the \IB-Probit model provides a $q^*$ suitable for \textit{any} \CB-Probit  model (\CBM-Probit, \CBC-Probit, etc.).  


\textbf{Runtime cost of IB-CAVI.}
The per-iteration runtime cost for logit models is  $O(M^3K + N M^2K )$ (see Alg.~\ref{alg:ib_cavi_for_cb_logit} and  Sec.~\ref{sec:complexity_of_IB_Logit}), where $K$ is the number of categories, $M$ is the number of features, and $N$ is the number of training examples.  For probit models, the per-iteration runtime drops to $O(NMK)$, with further reductions under sparsity (Alg.~\ref{alg:ib_cavi_for_cb_probit} and Sec. \ref{sec:computational_complexity_for_IB_probit}).
When the Gaussian prior $\pi(\+B)$ is chosen to be independent across category-specific weights, under either link function our IB-CAVI approach is \emph{embarrassingly parallel} across categories. This makes our IB-CAVI approach particularly suitable for data with hundreds or thousands of categories. For example, to fit the \IB-Probit in parallel, each worker solves a single category's binary regression problem to convergence at cost $O(NM)$ per iteration.


\section{Prediction via Bayesian Model Averaging}\label{sec:determining_a_good_target_for_an_IB_approximation}

Given a posterior over weights $q^*(\+B)$ via the IB-CAVI procedure from Sec.~\ref{sec:posterior_estimation}, how can we make useful \emph{predictions} of the category labels $y_* \in \set{1 \ldots K}$ for new observations with covariates $\+x_*$?
Clearly we must employ a truly-categorical \CB~likelihood to obtain valid predictions, as the \IB~likelihood can produce any $K$-bit vector, not just a $1$-of-$K$ choice. However, empirical investigations in Sec.~\ref{sec:evaluating_CB_models_as_targets_of_an_IB_approximation_wrt_soft_predictions}  (see esp. Fig.~\ref{fig:evaluating_CBM_and_CBT_as_targets_of_an_IB_approximation}), with further results in Fig.~\ref{fig:BMA_results_logit_ordered}, suggest that there are substantial dataset-specific tradeoffs in approximation quality (\IB~can approximate \CBC~better than \CBM~on some data, and vice versa on other data) and goodness-of-fit.   Needing to select a specific \CB~likelihood (\CBC~or \CBM) in advance for a dataset would be challenging. 


To avoid the problem of selecting a specific \CB~likelihood, we take advantage of the fact that our IB-CAVI procedure produces a posterior $q^*$ suitable for \textit{multiple} \CB~likelihoods. 
Thus, to make predictions we perform a \emph{Bayesian model average} over all applicable \CB~likelihoods.
We find this significantly improves prediction quality at \emph{no additional training cost}.

We model the problem with two random variables.
First, let $c \in \set{\CBM, \CBC}$ indicate the selected model, with given prior probabilities $p(c) = \pi_c \in (0, 1)$ such that $\sum_c \pi_c = 1$.
We recommend a uniform setting: $\pi_{\CBM} = \pi_{\CBC} = 0.5$.\footnote{In case of an intercepts-only model, one might consider setting the prior weight on \CBM~to 1.0; see Prop.~\ref{prop:IB_gives_an_identifiability_restriction_for_CBM_but_not_CBT_in_the_intercepts_only_setting}.} 
Second, we have the predicted quantity of interest $\Delta$ (such as future category label $y_*$), for which $p_c( \Delta | \+B)$ is known. 
Following ~\citet{madigan1996bayesian}, our BMA prediction procedure forms the posterior predictive for $\Delta$ given training data $\+y$ via the sum rule,
\begin{align}
	p( \Delta | \+y ) 
	&= \sum_{c} p( \Delta | c, \+y) \underbrace{p( c | \+y)}_{w_c}.
\label{eqn:bayesian_model_average}
\end{align}
The first term can be approximated as:
\begin{align}
	p( \Delta | c, \+y) = \int p_c( \Delta | \+B) p( \+B | \+y ) \wrt{\+B}
	\approx \frac{1}{T}\sum_{t=1}^T p_c( \Delta | \+B^t)
	\notag 
\end{align}
using $T$ samples from our approximate posterior $\+B^t \sim q$.

The second term $w_c = p( c | \+y)$ defines the posterior probability of choosing model $c$, which via Bayes rule is
\begin{align}
	w_c &= \frac{p_c(\+y)\pi_c}{p_{\CBC}(\+y)\pi_{\CBC} + p_{\CBM}(\+y)\pi_{\CBM}}.
\end{align}
Each evidence term $p_c(\+y)$ is defined in Eq.~\eqref{eqn:marginal_likelihood_of_CB_model}. While this term cannot be computed directly due to an intractable integral, we can estimate it via a Monte Carlo (MC) approximation of the conventional evidence lower bound defined in Eq.~\eqref{eqn:ELBO_for_CB}. For each model $c$, we estimate the evidence as:
\begin{align}
\log p_c( \+y ) \approx 
\textstyle \frac{1}{S} \sum_{s=1}^S \log p_c( \+y | \+B^s ) + D_\text{KL}(q(\+B) \parallel \pi(\+B)), \notag 
\end{align}
where the first term is model $c$'s likelihood averaged over $S$ MC samples from our approximate posterior $\+B^s \sim q$, and the second KL term has a closed-form solution because our prior $\pi(\+B)$ and chosen variational factor $q(\+B)$ are exponential family distributions \citep{nielsen2010entropies}.
Alternatively, we could use recent importance-sampling bounds~\citep{burda2015importance} to get even more accurate approximations at the cost of increased computation.

\textbf{Runtime cost of BMA.}
The posterior weights $w_c \in [0,1]$ for each model type $c$ can be computed once for each training set $\+y$, at a linear cost in the number of examples $N$ and categories $K$, and then stored in memory for all future uses.
Then, using pre-computed weights, the cost of computing the probability of each new example's category $y_*$ given $\+x_*$ has a linear cost in $K$ and the number of samples $S$.

\red{TODO: Absorb -- So our approximate inference strategy has multiple approximation targets, but we can apply \BMA~to improve the quality of downstream inference without \textit{any} additional model training.}

\red{THINK/DISCUSS: This strategy is a bit worrisome - if we can sample from the conventional ELBO for this purpose, why can't we just sample all along? (After all, that's what ADVI does, not to mention Linderman in his inference approaches, right?)  We only need to do it $I$ times rather than once, where $I$ is the number of CAVI iterations.  Perhaps, however, it's a big deal that we are doing the sampling at the \textit{end} of approximate inference; we might get high variance estimates early on, and that could potentially really slow down learning. }

\red{TODO: Check into whether the claimed parallelism across categories only holds because we assume the betas are correlated across covariates $m$ for a given category $k$, but are independent across categories. Presumably we could relax that restriction to get a more expressive model, but at an inference cost.}

\red{TODO: Does the parallelism across categories requires us to assume that the betas are not correlated across categories? (I think it might not; because the IB model assumes this, so if the prior does, so will the optimal variational family.)}

\section{Experiments}
\red{TODO: add overview which provides some ``narrative" as to the takeaway from each experiment.  E.g., the intrusion detection experiment demonstrates scalability (many categories).}

We now assess the speed and quality of our proposed \CB~models with \IB~posterior approximations.  Reproducible details for all experiments are in 
Sec.~\ref{sec:supplemental_info_for_experiments}.  \red{TODO: Eric says there are too many important details offloaded to Appendix.}


\begin{figure}[!t]
\centering
\includegraphics[width=.4\textwidth]{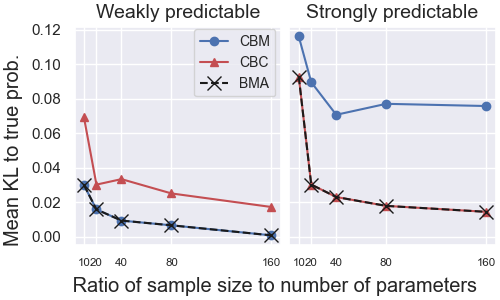}
\caption{\textbf{Demonstrating the value of BMA prediction.}  Each point corresponds to a simulated dataset ($K=3$,  $M=6$) with number of examples $N$ increasing along $x$-axis. 
For each dataset, we perform IB-CAVI posterior estimation then make predictions using two pure \CB~likelihoods, \CBM-Logit and \CBC-Logit, as well as Bayesian model averaging (BMA, Sec.~\ref{sec:determining_a_good_target_for_an_IB_approximation}) of these two models.
The y-axis reports the mean KL divergence from predictions to the true probabilities, averaged across the test set (lower is better).
\emph{Left:} Categories are \textit{weakly} predictable from covariates ($\sigma_\text{high} = 0.1$ in the generative process of Sec.~\ref{sec:simulating_data}). \emph{Right:} \textit{Strong} predictability ($\sigma_\text{high} = 2.0$).
}
\label{fig:BMA_results_logit_ordered}
\end{figure}

\begin{figure}[!ht]
\centering
\includegraphics[width=.45\textwidth]{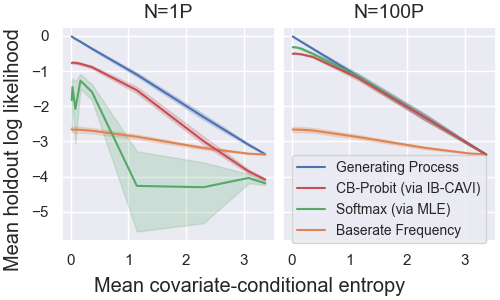}\caption{\textbf{Performance of IB-CAVI vs. softmax MLE given few vs. many observations-per-parameter.}  Plotted is the mean holdout log likelihood as a function of mean covariate-conditional category entropy \eqref{eqn:mean_covariate_conditional_category_entropy}, which quantifies predictability. Data was simulated from a softmax regression ($K$=30, $M$=60), comparing regimes where the number of observations is modest ($N=P$, \emph{left}) vs.  abundant ($N=100P$, \emph{right}) relative to the number of parameters ($P=K(M+1)$).  The \CB-Probit was estimated via IB-CAVI, with Bayesian model averaging of the \CBC-Probit and \CBM-Probit.   Error bars are 95\% confidence intervals for the expectation (over $D=10$ replicated datasets) of the mean test-set log likelihood.  The confidence intervals were determined by bootstrapping.  \red{TODO: Be careful about M vs M+1.} \red{DISCUSS: Should I reorder legend?} \red{TODO: Consider mentioning how unlikely it is to be in the range where entropy < .5} }
\label{fig:performance_simulations_combined_likelihood_and_accuracy}
\end{figure}

\begin{figure*}[!h]
\centering
\begin{tabular}{c c c c l}
    Small Simulated 
    & Detergents
    & Process Starts
    & Large Simulated  
    &
\\
    {\scriptsize $K{=}3, M{=}6, N{=}1000$}  
    &  {\scriptsize $K{=}6, M{=}6, N{=}2657$ } 
    & {\scriptsize $K{=}1553, M{=}1553, N{=}17724$}
    & {\scriptsize $K{=}500, M{=}1000, N{=}200000$}  
    & 
\\
    
    \includegraphics[height=2.5cm]{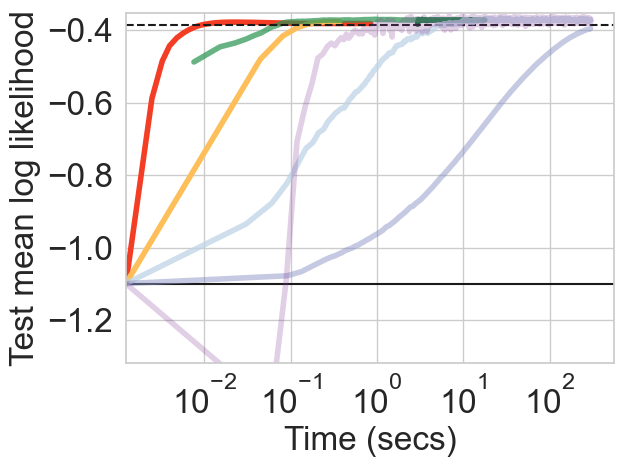} 
&
    \includegraphics[height=2.5cm]{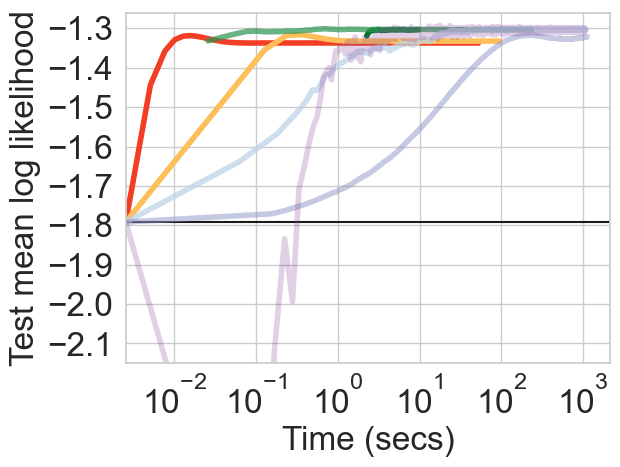}
& \includegraphics[height=2.5cm]{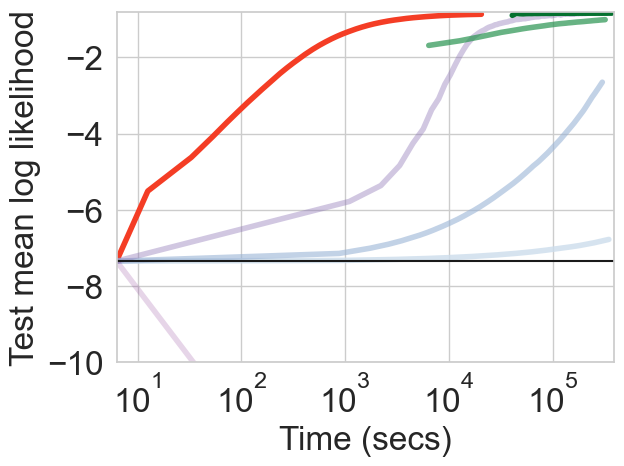}
&
 \includegraphics[height=2.5cm]{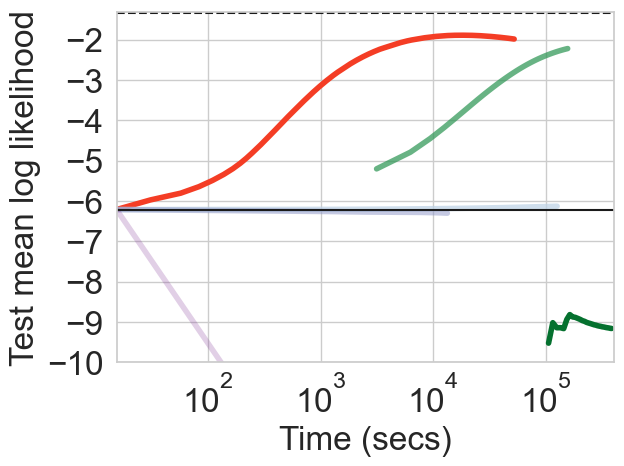}
& \includegraphics[height=2.5cm]{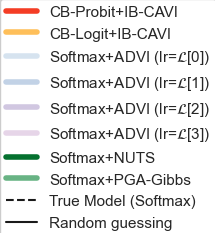}
\\
    \includegraphics[height=2.5cm]{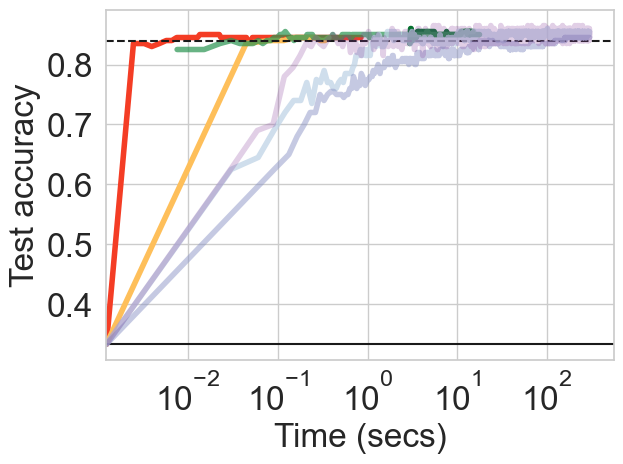}
   & \includegraphics[height=2.5cm]{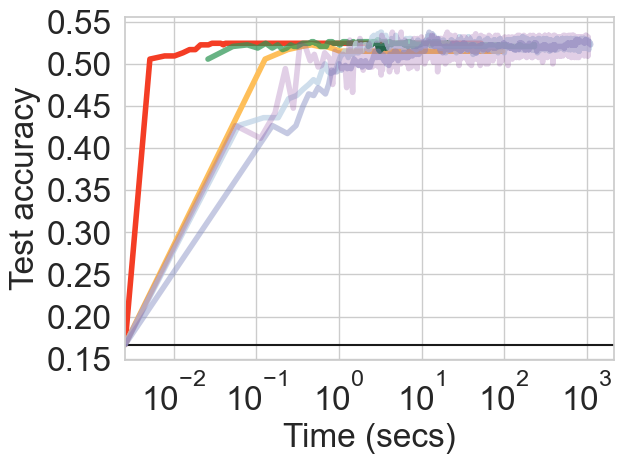}
& \includegraphics[height=2.5cm]{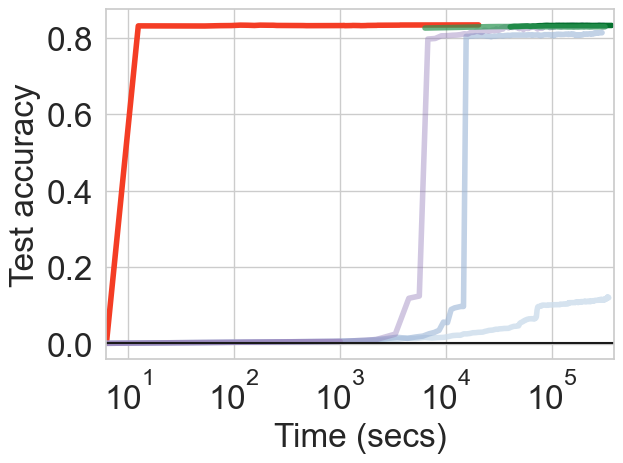}
&
    \includegraphics[height=2.5cm]{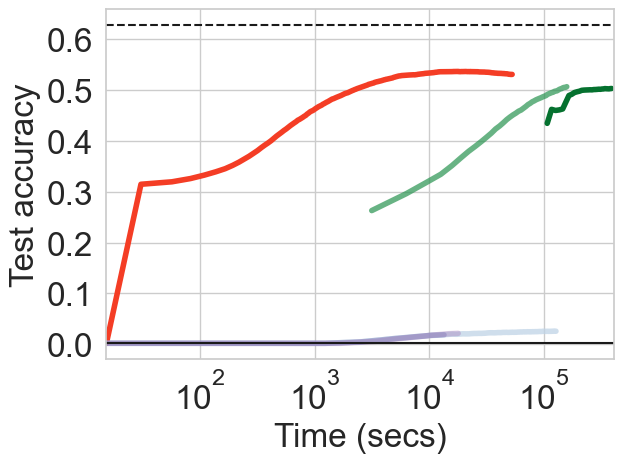}
\end{tabular}
    \caption{\textbf{Prediction quality by training time.} Bayesian inference methods are compared on real and simulated datasets with $K$ categories, $M$ covariates, and $N$ instances. Prediction quality is measured by holdout log likelihood (top) and accuracy (bottom).  For ADVI, we try the learning rates $\mathcal{L}:=(0.01, 0.1, 1.0, 10, 100)$ recommended by \cite{kucukelbir2017automatic}, adjusted to $10^{-1} \mathcal{L}$ in the larger simulated dataset to reduce divergence. If a line is absent for ADVI, the method diverged.  All methods were  initialized at the zero matrix (corresponding to random guessing), but we do not treat the initialization as a sample for MCMC methods.  Note that IB-CAVI's parallelism over $K$ was \textit{not} exploited here; using that, IB-CAVI's training time could be further reduced.
}
\label{fig:holdout_perf_over_time}
\end{figure*}


\subsection{Demonstrating value of BMA for predictions} \label{sec:experiment_on_averaging_approximation_targets_for_enhanced_performance}

Sec.~\ref{sec:determining_a_good_target_for_an_IB_approximation} described a Bayesian model averaging (BMA) technique to automatically determine the best \CB~target of an \IB~approximation for a given dataset.  Here we review the effectiveness of that technique. 
We generated simulated datasets using $K=3$ categories and $M=6$ covariates at varying sizes and levels of $y$-from-$x$ ``predictability'' (details in Sec.~\ref{sec:simulating_data}).
After fitting one approximate posterior $q$ via \IB-CAVI, we used this one posterior to make probabilistic predictions about the category labels of the held-out test set given corresponding covariates using three likelihoods: \CBC~only, \CBM~only, and our BMA approach that averages the two.
Fig.~\ref{fig:BMA_results_logit_ordered} plots the quality of predictions as the data-to-parameter ratio changes.
The first major takeaway is that our BMA averaging technique always matches the best possible prediction quality.
The second takeaway is that averaging is needed: our datasets with weakly predictable outcomes favor \CBM~likelihoods, while those with strongly predictable outcomes favor \CBC.

\subsection{IB-CAVI versus maximum likelihood} \label{sec:IB_CAVI_vs_MLE}

Using simulated data generated by a softmax model (Sec.~\ref{sec:simulating_data}), we assessed the quality of a CB-Probit model with IB-Probit-CAVI posterior estimation against a well-specified baseline: maximum likelihood estimation (MLE) for the softmax model. 
Figure \ref{fig:performance_simulations_combined_likelihood_and_accuracy} shows that our approach provides clear benefits over the MLE in terms of test log likelihood when the data are not sufficiently informative (i.e. the number of samples $N$ is small relative to the number of parameters $P$).
When there are few samples relative to parameters, the benefits of modeling uncertainty outweigh the cost of our approximation. 
When data are abundant ($N \gg P$), we expect the well-specified softmax MLE to do well, and comfortingly our prediction quality is quite close to it.
This performance is assuring especially because the data was not generated by a CB or IB model.

\subsection{Prediction quality by training time} \label{sec:holdout_perf_over_time}

In Fig.~\ref{fig:holdout_perf_over_time}, we study how different Bayesian methods for fitting categorical regressions perform on heldout test data as a function of training time.
We study two real datasets, detergent choices \citep{imai2005bayesian} and the computer process starts of one user ($K{=}1553$, Sec.~\ref{sec:intrusion_detection_experiment}), 
as well as a large and small simulated dataset generated by a softmax model  (Sec.~\ref{sec:simulating_data}).
In addition to our IB-CAVI, we compared to two ``gold standard'' MCMC samplers for softmax models: the No U-Turn Sampler (NUTS) \citep{hoffman2014no} and a Gibbs sampler available via \pga \citep{polson2013bayesian}.
We also compared to automatic differentiation variational inference (ADVI) \citep{kucukelbir2017automatic} for the softmax model.


Overall, we find that our IB-CAVI can deliver \emph{indistinguishable accuracy} and \emph{little-to-no cost in log likelihood} compared to alternative methods for categorical data, while requiring \emph{far less time} to get there.  Moreover, IB-CAVI is \emph{reliable}; each update is exact and optimal, and unlike alternatives does not require correctly choosing a learning rate (as with ADVI) or tuning period length (as with NUTS).

These conclusions are corroborated on two other real datasets, glass identification \cite{dua2019} and Anuran frog calls \citep{anuranFrogDataset2017}, in Sec.~\ref{sec:holdout_performance_over_time_supplemental}.  The heldout log likelihood plots sometimes suggest that IB-CAVI eventually overfits slightly (see \emph{Detergents} in Fig.~\ref{fig:holdout_perf_over_time}), though we can see the same behavior from Gibbs samplers (see Fig.~\ref{fig:holdout_perf_over_time_supplemental}).



\subsection{Assessing quality of the IB approximation}
\label{sec:approximation_quality_experiments}

To more directly assess the quality of the approximation required by our IB-CAVI approach, we compared the predicted category probabilities (non-negative vectors of size $K$ that sum to one) to those of the true model used to generate simulated data. 
Table~\ref{tab:approximation_error_when_using_IB_CAVI_and_BMA} quantifies that the KL divergence from predicted to truth is always less than 0.10 across a range of dataset sizes.
Histograms comparing the posterior distribution over category probabilities inferred by IB-CAVI, NUTS, and ADVI are visualized in Fig.~\ref {fig:violin_plots_posterior_over_category_probabilities}.
These empirical checks, combined with our previous experiments in Sec.~\ref{sec:IB_CAVI_vs_MLE} and \ref{sec:holdout_perf_over_time} reveal that \IB~approximation can be used to obtain high-quality probabilistic predictions that are competitive with more expensive truly-categorical approaches, especially when used with Bayesian model averaging (Sec.~\ref{sec:experiment_on_averaging_approximation_targets_for_enhanced_performance}).

Future research could provide an analytical characterization of the approximation error. 
We emphasize that comparing the induced posterior over category probabilities is preferred, given our stated goals in Sec.~\ref{sec:problem_formulation}. While comparisons of the posterior over weights $q(\+B)$ are possible, these are intermediate quantities less relevant to applications and may be confounded by identifiability issues.



\subsection{Intrusion detection in a cybersecurity application} \label{sec:intrusion_detection_experiment}

Now we show how our IB-CAVI can address a real problem in which there are many categorical outcomes. User behavior analytics (UBA) is a branch of cybersecurity which attempts to learn the ``normal" usage patterns of users on a computer network.  Deviations from normal behavior can point to suspicious or malicious activity that may be caused by unauthorized access to the network (often due to compromised user credentials), which we refer to as \textit{intrusion}.

We analyze $U=32$ users from Los Alamos National Laboratory's corporate, internal computer network~\citep{kent2015comprehensive}. For each user, we train a customized categorical GLM to learn a probability distribution over that user's process starts given previously started processes.  There were $K=1,553$ unique processes started. Our autoregressive-like featurization strategy (see Sec.~\ref{sec:featurization_for_intrusion_detection}) produced  $P=K(K+1)=2.4$ million model parameters, which far exceeds the number of process starts  per user ($N \approx 18,000$).
This is a \textit{small data-per-parameter} ($N<<P$) regime, where Bayesian methods can show benefits over maximum likelihood point estimation, as we observed in Fig.~\ref{fig:performance_simulations_combined_likelihood_and_accuracy} (left).

A model of user behavior is useful if its scores drop when an unauthorized user is using the computer.  Thus, we evaluate the user models via \textit{simulated intrusions}, using each user's model to score holdout process start sequences from the self vs. $(U-1)=31$ other users.  The quality of each model can be summarized with an \emph{intrusion detection score}, defined for each user $u=1,\hdots,32$ as
\[ \texttt{ID-score}(u) = \ell_{uu} - \textstyle \frac{1}{U-1} \sum_{v \neq u} \ell_{uv}\]
where $\ell_{uv}$ is the mean log likelihood of holdout process starts from user $v$ when scored with the model of user $u$.  
A higher score means a better ability to distinguish the target user $u$ from the other users.

Using these intrusion detection scores, we can compare two different variational approaches to learning categorical GLMs: IB-CAVI (applied to CB-Probit models) vs. ADVI (applied to Softmax models). We selected CB-Probit over CB-Logit due to its lower runtime costs (see Sec.~\ref{sec:procedure_for_posterior_estimation}). Based on the process start results in Fig.~\ref{fig:holdout_perf_over_time},  we set the ADVI learning rate to $1.0$, and we grant ADVI much longer training time (200 minutes per user) than IB-CAVI (20 minutes per user).  Results are shown in Figure~\ref{fig:process_start_self_vs_other_heatmap}. We find that despite ADVI's 10-fold advantage in training time, IB-CAVI produces markedly better intrusion detection performance, always matching or beating ADVI across all 32 users.


\begin{figure}[!t]
\centering
\includegraphics[width=.35\textwidth]{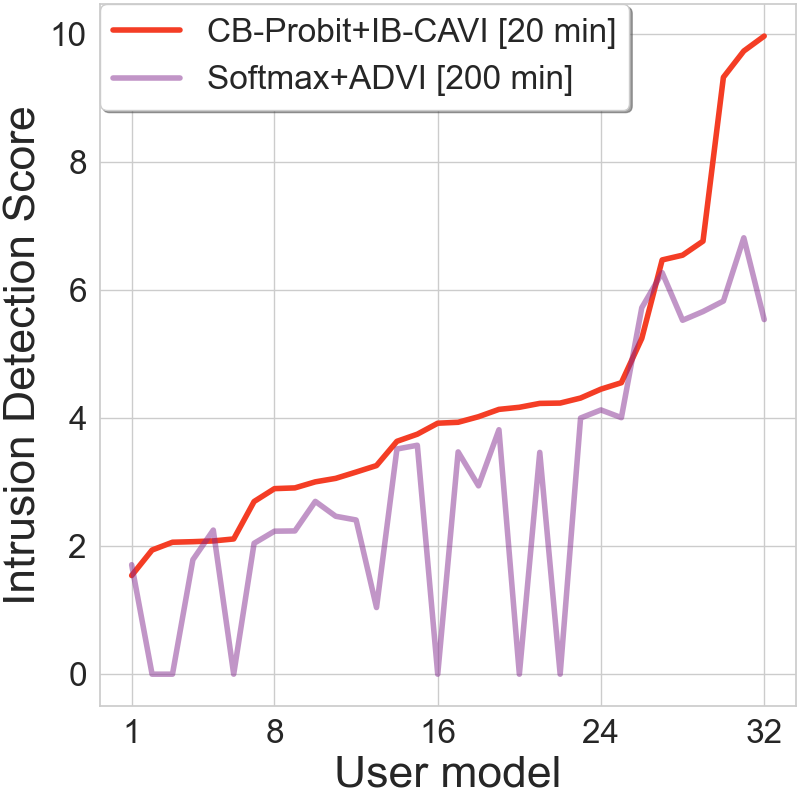}
\caption{\textbf{Scalable intrusion detection with IB-CAVI.} We train 32 categorical models to learn the computer usage patterns of 32 users from a corporate network.  Many ($K=1,553$) unique computer processes were started. Our autoregressive modeling strategy produced on the order of 2.4 million ($=K^2$) parameters.  Due to the large scale, we compare only variational methods.  We grant IB-CAVI much shorter training time (20 minutes per user) than ADVI (200 minutes per user), yet find that IB-CAVI produces markedly better intrusion detection performance. 
}
\label{fig:process_start_self_vs_other_heatmap}
\end{figure}





\section{Conclusion}

We have defined a class of categorical models amenable to a binary approximation that enables fast and closed-form posterior estimation.
Our key technical innovations are a likelihood bound that justifies this approximation and the use of model averaging to improve prediction quality without increased training cost.
We find that on real data the posterior estimated via our binary approximation can deliver categorical predictions of similar quality as more expensive baseline methods in a fraction of the time.

Future work could provide a theoretical characterization of approximation error or determine if other constructions of \CB~models are useful.
One could also extend our \IB~approximation to more sophisticated hierarchical models or models with variable selection priors. 

\red{INCORPORATE: We show how to model \textit{categorical} data with an independent \textit{binary} model.}

\red{INCORPORATE: We optimize using IB which is independent of the CFB choice.  Can we then plug the estimated parameters into either or both of the models?  Is there anything about the optiization problem using the IB model which is specific to the underlying CFB model?}

\red{INCORPORATE: The key insight is that we can provably optimize a lower bound on a categorical-from-binary model by instead maximizing the traditional ELBO of the independent binary model.}

\red{TODO: Scan rest of document and try to make sure we speak of "a" CBM/CBT model and not "the" CBM/CBT model; the latter should be reserved for when we pick a link (CBT-Probit, CBM-Logit, etc.)}

\section*{Acknowledgements}
This research was sponsored by the U.S. Army DEVCOM Soldier Center, and was accomplished under Cooperative Agreement Number W911QY-19-2-0003. The views and conclusions contained in this document are those of the authors and should not be interpreted as representing the official policies, either expressed or implied, of the U.S. Army DEVCOM Soldier Center, or the U.S. Government. The U. S. Government is authorized to reproduce and distribute reprints for Government purposes notwithstanding any copyright notation hereon.

We also acknowledge support from the U.S. National Science Foundation under award HDR-1934553 for the Tufts T-TRIPODS Institute. 
SA is supported by NSF CCF 1553075, NSF DRL 1931978, NSF EEC 1937057, and AFOSR FA9550-18-1-0465.
ELM is supported by NSF grants 1934553, 1935555, 1931978, and 1937057.
MCH is supported by NSF IIS-1908617.

\bibliography{references_icml.bib}

\clearpage

\addtocontents{toc}{\protect\setcounter{tocdepth}{2}}

\appendix
\counterwithin{table}{section}
\setcounter{table}{0}
\counterwithin{figure}{section}
\setcounter{figure}{0}

\renewcommand{\contentsname}{Appendix Contents}
\tableofcontents

\section*{Code and Data Availability} \label{sec:}

Open-source python code for reproducing experiments can be found at
\url{\codeURL}

The above repository also contains code to download the simulated datasets and open-access real datasets (Detergent, Glass, and Frog Calls, Cybersecurity events) used in this work, along with any preprocessing used. 

\section{Background: General binary regression} \label{sec:additional_interpretation_for_the_general_binary_regression_model}
Here we obtain an additional interpretation for the ``success" probability in a general binary regression model (Eq.~\eqref{eqn:general_binary_regression}) if we are willing to make a few additional assumptions.  When $H$ is the cdf of a distribution $\mathcal{H}$ that is a location family with a symmetric density (so $\mathcal{H}$ could be Gaussian, Logistic, Cauchy, Laplace, Student-t, and so on), a simple argument reveals the further interpretation that $\P(\BINARYy_i = 1 \cond \BINARYvecbeta) = \P\set{\text{drawing a positive value from $\mathcal{H}$ when its mean is $\+x_i^T \BINARYvecbeta$}}$. We formalize this in Prop.~\ref{prop:additional_interpretation_for_the_general_binary_regression_model}.

\begin{proposition}
Let $\mathcal{H}_{\mu}$ be a symmetric location-scale family of probability measures that has a density $h_\mu$ which is  continuous almost everywhere with respect to Lebesgue measure.  Let $\mathcal{H}_{\mu}$ have expected value $\mu$ and  variance fixed to 1. Let $z \sim \mathcal{H}_\mu$, and  denote the cdf of $\mathcal{H}_{\mu}$ by $H_\mu$.  Then $H_0(\mu)=\P(z \geq 0)$. 
\label{prop:additional_interpretation_for_the_general_binary_regression_model}
\end{proposition}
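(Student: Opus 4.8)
The plan is to express the probability $\P(z \geq 0)$ directly in terms of the cdf $H_\mu$ and then exploit the symmetry of the location family to rewrite it as $H_0(\mu)$. First I would write $\P(z \geq 0) = 1 - H_\mu(0)$, where I use that the density is continuous almost everywhere (so the event $\{z = 0\}$ has probability zero and there is no ambiguity between $\geq$ and $>$). The key structural fact is that $\mathcal{H}_\mu$ is a location family with mean $\mu$ and fixed unit variance, so its cdf satisfies $H_\mu(t) = H_0(t - \mu)$ for all $t$; in particular $H_\mu(0) = H_0(-\mu)$.

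Next I would invoke the symmetry of the density. Since $h_0$ is symmetric about $0$ (its mean), the standard cdf obeys $H_0(-u) = 1 - H_0(u)$ for all $u$. Applying this with $u = \mu$ gives $H_0(-\mu) = 1 - H_0(\mu)$. Chaining the three identities yields
\begin{align*}
\P(z \geq 0) = 1 - H_\mu(0) = 1 - H_0(-\mu) = 1 - \big(1 - H_0(\mu)\big) = H_0(\mu),
\end{align*}
which is exactly the claim.

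The argument is short, so the main obstacle is really a matter of handling the technical hypotheses cleanly rather than any deep difficulty. The one point requiring a little care is justifying the symmetry relation $H_0(-u) = 1 - H_0(u)$: I would derive it from the density symmetry $h_0(x) = h_0(-x)$ via a change of variables $x \mapsto -x$ in the defining integral $H_0(-u) = \int_{-\infty}^{-u} h_0(x)\,dx$, turning it into $\int_u^{\infty} h_0(x)\,dx = 1 - H_0(u)$. The almost-everywhere continuity hypothesis is what lets me pass freely between $\P(z \geq 0)$ and $1 - H_0(-\mu)$ without worrying about an atom at the evaluation point and ensures the change of variables is valid. I would also note that the unit-variance normalization plays no role in the identity itself (it merely fixes a canonical representative of the family), so I would not need to use it beyond confirming we are in the intended standardized setting.
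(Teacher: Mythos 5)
Your proposal is correct and follows essentially the same route as the paper's proof: both reduce $\P(z \geq 0)$ to $1 - H_\mu(0)$, apply the location-shift identity (via the substitution $u = x - \mu$), and conclude with the reflection identity $1 - H_0(-\mu) = H_0(\mu)$ from density symmetry. The only difference is organizational—you state the shift and reflection steps as cdf identities while the paper carries out the same manipulations as explicit integrals of the density.
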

\begin{proof}
\begin{align*}
\P(z \geq 0) &= 1 -H_\mu(0) \\
&= \ds\int_0^\infty h_\mu(x) \wrt{x} && \tinytext{by Riemann integrability}\\
&= \ds\int_0^\infty h_0(x-\mu) \wrt{x}  && \tinytext{as a location-scale family with unit variance}\\ 
&= \ds\int_{-\mu}^\infty h_0(u) \wrt{u}  && \tinytext{by substituting $u=x-\mu$}\\
&= 1- H_0(-\mu) \\
&=H_0(\mu). && \tinytext{by symmetry}
\end{align*}
\end{proof}


\section{Categorical-from-binary (CB) models}

\subsection{Impossibility of exactness in the likelihood bound}

\begin{proposition}
Equality cannot be achieved in the likelihood bound of Eq.~\eqref{eqn:CFB_likelihood_larger_than_IB}.
\label{prop:impossibility_of_exactness_in_the_likelihood_bound}
\end{proposition}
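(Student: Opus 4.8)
The plan is to derive a contradiction from the normalization constraint that any valid categorical model must satisfy. Recall that the right-hand side of Eq.~\eqref{eqn:CFB_likelihood_larger_than_IB} is the one-hot \IB~probability $p_\IB(\IBvecy_i = \+e_{y_i} \cond \+B) = H(\eta_{i y_i}) \prod_{j \neq y_i}(1-H(\eta_{ij}))$. Suppose, for contradiction, that equality were achievable, i.e. that some \CB~model satisfied $p_\CB(y_i = k \cond \+B) = p_\IB(\IBvecy_i = \+e_k \cond \+B)$ for every category $k \in \set{1,\ldots,K}$ at a fixed covariate vector $\+x_i$ and fixed weights $\+B$. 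Since the bound in Definition~\ref{def:categorical_from_binary_model} is quantified over all observations $y_i$, an ``equality version'' of the bound must likewise hold for every category, so summing over $k$ is justified.

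First I would sum both sides over the $K$ categories. The left-hand side equals $1$, because $p_\CB(\cdot \cond \+B)$ is by construction a probability mass function on $\set{1,\ldots,K}$. The right-hand side equals $\sum_{k=1}^K p_\IB(\IBvecy_i = \+e_k \cond \+B)$, which is exactly the total probability the \IB~model assigns to the $K$ one-hot vectors among all of $\set{0,1}^K$.

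The key step is to show this latter sum is strictly less than $1$. The \IB~model is a distribution over all $2^K$ binary vectors, with total mass $1$. For a standard inverse-link cdf $H$ (Gaussian or logistic) and any finite linear predictor $\eta_{ik}$, each success probability $H(\eta_{ik})$ lies strictly inside $(0,1)$; consequently the all-zeros vector alone receives strictly positive mass $\prod_{j=1}^K(1-H(\eta_{ij})) > 0$. Hence the one-hot vectors cannot absorb all of the probability, giving $\sum_{k=1}^K p_\IB(\IBvecy_i = \+e_k \cond \+B) \le 1 - \prod_{j=1}^K(1-H(\eta_{ij})) < 1$. Combining the two computations yields $1 < 1$, the desired contradiction, so equality is impossible.

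The main obstacle is ensuring that the ``leaked'' mass on non-one-hot vectors is genuinely positive rather than a degenerate limiting artifact; this is precisely where I would invoke the fact that the inverse-link cdfs of interest map finite inputs into the open interval $(0,1)$, so at least one non-one-hot configuration (e.g. the all-zeros vector, and in fact all $2^K - K$ of them) always carries positive probability. To guard against pathological links where $H$ might attain $0$ or $1$, I would note that forcing all mass onto one-hot vectors would require the values $H(\eta_{ik})$ to behave like $0$ and $1$ in mutually incompatible ways across coordinates; but for the probit and logit links that concern us, the strict interior-range argument alone settles the claim.
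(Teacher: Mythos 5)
Your proof is correct and follows essentially the same route as the paper's: both assume equality, sum over the $K$ categories to conclude the \IB~model would have to place total mass $1$ on the one-hot vectors, and then contradict this using the fact that $H$ maps finite linear predictors strictly into $(0,1)$, so non-one-hot vectors (in your case, the all-zeros vector) necessarily receive positive probability. The only cosmetic difference is that the paper phrases the leak as every vector in $\set{0,1}^K$ having probability strictly in $(0,1)$, while you isolate the all-zeros vector, which is an equally valid witness.
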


\begin{proof}
By way of contradiction, suppose that
\begin{align*}
p_\CB(y_i = k \cond \+B) = p_\IB(\IBvecy_i = \+e_{k} \cond \IBmatrixB = \+B)
\labelit \label{eqn:likelihood_bound_if_equality_were_possible}
\end{align*}
 Summing these terms over $k=1, \hdots, K$  must yield a value of 1, since the left hand side is a probability distribution over the $K$ categories.  This says that $p_\IB(\Upsilon \cond \+B) =1$, where we define $\Upsilon \subsetneq \set{0,1}^K$ as the space of one-hot encoded vectors. So by countable additivity, 
 \begin{align*}
 p_\IB(\Upsilon^c \cond \+B) = 0
 \labelit \label{eqn:under_a_false_assumption_the_IB_model_assigns_zero_probability_to_one_hot_vectors} 
 \end{align*}

At the same time,  we must have that
 \begin{align*}
 0 < p_\IB(\+v \cond \+B) < 1, \quad \forall \+v \in \set{0,1}^K
 \labelit \label{eqn:IB_likelihood_yields_value_between_0_and_1_non_inclusive_for_any_vector} 
 \end{align*}
Eq.~\eqref{eqn:IB_likelihood_yields_value_between_0_and_1_non_inclusive_for_any_vector} follows immediately from Eq.~\eqref{eqn:IB_likelihood}, because a cdf $H$ satisfies that $H(x), 1-H(x) \in (0,1)$ for any finite real-valued $x$, and $p_\IB(\+v \cond \+B)$ is the product of $K$ such terms.

Now Eqs.~\eqref{eqn:IB_likelihood_yields_value_between_0_and_1_non_inclusive_for_any_vector} and \eqref{eqn:under_a_false_assumption_the_IB_model_assigns_zero_probability_to_one_hot_vectors} contradict, so the hypothesis in Eq.~\eqref{eqn:likelihood_bound_if_equality_were_possible} is false. 
\end{proof}

\subsection{The \CBC~model as a normalized odds model} \label{sec:CBC_as_normalized_odds}

In the main paper, the \CBC~model was given as:
\begin{align}
p_\CBC(y_i {=} k \cond \+B) = 
\df{H( \eta_{ik}) \ds\prod_{j \neq k} (1-H(\eta_{ij}))}{\ds\sum_{\ell=1}^K  H(\eta_{i\ell}) \ds\prod_{j \neq \ell} (1-H(\eta_{ij}))    }
\label{eqn:CBC_category_probabilities_appendix}
\end{align}
where $\eta_{ik}=\+x_i^T \+\beta_k$.

A \CBC~model has an alternate expression as a \textit{normalized odds} model:
\begin{align*}
p_\CBC(y_i = k \cond \+B) = \df{H( \eta_{ik} ) / \big(1-H( \eta_{ik})\big)}{ \sum_{\ell=1}^K H( \eta_{i\ell} ) / \big(1-H( \eta_{i\ell})\big)}
	\labelit \label{eqn:CBT_category_probabilities_as_odds}
\end{align*}

Here we show that the two representations for the \CBC~model (Eqs.~\eqref{eqn:CBC_category_probabilities_appendix} and \eqref{eqn:CBT_category_probabilities_as_odds}) are equal.  Observe
\begin{align*}
p(y_i=k \cond \+B) 
&\stackrel{\eqref{eqn:CBC_category_probabilities}}{=} \df{ H(\eta_{ik})  \ds\prod_{j \neq k} \big(1-H(\eta_{ij})\big)     }{\ds\sum_{\ell=1}^K  H( \eta_{i\ell})\ds\prod_{j \neq \ell} \big(1-H(\eta_{ij})\big)} \\ 
&\stackrel{1}{=} \df{ H(\eta_{ik})  \ds\prod_{j \neq k} \big(1-H(\eta_{ij})\big)     }{\ds\sum_{\ell=1}^K  \df{H( \eta_{i\ell})}{\big(1-H(\eta_{i\ell})\big)} \ds\prod_{j=1}^K \big(1-H(\eta_{ij})\big)}    \\
&\stackrel{2}{=}  \df{ \ H(\eta_{ik}) / \big(1-H(\eta_{ik})\big) }{\ds\sum_{\ell=1}^K H(\eta_{i\ell}) / \big(1-H(\eta_{i\ell})\big)  }
\end{align*}
Equality (1) just multiplies by $1=\frac{a}{a}$ in the numerator and denominator, and Equality (2) just cancels. 

\subsection{Membership of \CBC~and \CBM~models in the \CB~class.}
\label{sec:CBT_and_CBM_are_CB_models_appendix}

\begin{proposition}
\CBC~and \CBM~models are categorical-from-binary (\CB)~models. 
\label{prop:CBT_and_CBM_are_CB_models_appendix}
\end{proposition}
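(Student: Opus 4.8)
The plan is to verify Definition~\ref{def:categorical_from_binary_model} directly for each of the two constructions, reducing the required bound $p_\CB(y_i=k\cond\+B) > p_\IB(\IBvecy_i=\+e_k\cond\+B)$ to an elementary inequality among the bit-success probabilities. Throughout I abbreviate $p_k := H(\eta_{ik})$, noting $p_k\in(0,1)$ because a cdf takes values strictly inside $(0,1)$ at finite arguments (exactly as used in Eq.~\eqref{eqn:IB_likelihood_yields_value_between_0_and_1_non_inclusive_for_any_vector}), and I write $L_k := p_\IB(\IBvecy_i=\+e_k\cond\+B) = p_k\prod_{j\neq k}(1-p_j)$ for the IB likelihood of the one-hot target, which is strictly positive.

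For the \CBC~model the bound is immediate given the earlier fact that IB assigns strictly positive mass to every binary vector. Since $p_\CBC(y_i=k\cond\+B)=L_k/\sum_{\ell}L_\ell$ with $L_k>0$, the desired strict inequality $L_k/\sum_\ell L_\ell > L_k$ is equivalent to $\sum_{\ell=1}^K L_\ell < 1$. But $\sum_\ell L_\ell = p_\IB(\Upsilon\cond\+B)$ is precisely the IB probability of the set $\Upsilon$ of one-hot vectors, and $\Upsilon$ is a proper subset of $\set{0,1}^K$ whose complement receives strictly positive IB mass by Eq.~\eqref{eqn:IB_likelihood_yields_value_between_0_and_1_non_inclusive_for_any_vector} (e.g.\ the all-zeros vector has mass $\prod_j(1-p_j)>0$). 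Hence $\sum_\ell L_\ell<1$ and we are done.

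For the \CBM~model, cancelling the common factor $p_k>0$ turns the target $p_k/\sum_\ell p_\ell > L_k$ into $\sum_{\ell=1}^K p_\ell < \prod_{j\neq k}(1-p_j)^{-1}$. I would prove this via the chain $\prod_{j\neq k}\tfrac{1}{1-p_j} > \prod_{j\neq k}(1+p_j) \ge 1+\sum_{j\neq k}p_j > p_k+\sum_{j\neq k}p_j = \sum_{\ell}p_\ell$: the first step uses $\tfrac{1}{1-p_j}=1+p_j+p_j^2+\cdots>1+p_j$; the second keeps the constant and all linear terms when expanding the product, discarding only nonnegative higher-order monomials; and the last uses $p_k<1$. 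The degenerate case $K=1$ is handled directly by $p_k<1$.

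The main obstacle is the \CBM~inequality, which, unlike \CBC, does not follow from positivity of the IB likelihood alone, since $\sum_\ell p_\ell$ can be nearly $K$ while $\prod_{j\neq k}(1-p_j)\le 1$. The idea that keeps it elementary is to pass to reciprocals and bound $\prod_{j\neq k}(1-p_j)^{-1}$ from below by $1+\sum_{j\neq k}p_j$, after which the slack $1>p_k$ closes the gap strictly. I would conclude by observing that both arguments invoked only $p_k\in(0,1)$, so the bound of Eq.~\eqref{eqn:CFB_likelihood_larger_than_IB} holds for every observation index $i$, every covariate $\+x_i$, and every weight matrix $\+B$, which is exactly membership in the \CB~class.
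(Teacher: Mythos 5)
Your proof is correct. The \CBC~half is essentially the paper's own argument: both write $p_\CBC(y_i{=}k \cond \+B)$ as the \IB~likelihood of $\+e_k$ divided by $p_\IB(\Upsilon \cond \+B)$, and both conclude by showing this denominator is strictly below one because the complement of the one-hot set carries strictly positive \IB~mass (the paper invokes subadditivity with an arbitrary $\+v \in \Upsilon^c$, you exhibit the all-zeros vector --- the same idea). The \CBM~half, however, takes a genuinely different route. Both proofs reduce, after cancelling $w_k$, to the same elementary inequality $\big(\sum_{\ell=1}^K w_\ell\big)\prod_{j\neq k}(1-w_j) < 1$ for $w_\ell \in (0,1)$, but the paper establishes it by induction on $K$ (base case $K=2$, then the recursion $\alpha_{k,K+1} = \alpha_{k,K}(1-w_{K+1}) + w_{K+1}\prod_{j\neq k}(1-w_j) < 1$), whereas you prove it in one pass via the chain $\prod_{j\neq k}(1-w_j)^{-1} > \prod_{j\neq k}(1+w_j) \geq 1+\sum_{j\neq k}w_j > \sum_{\ell}w_\ell$, combining the termwise bound $1/(1-w_j) > 1+w_j$, the Weierstrass-type expansion of the product, and the slack $w_k < 1$. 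Your version is shorter, avoids induction, and makes the source of strictness visible at each link; it also forces you to notice and correctly dispatch the degenerate $K=1$ case, where the empty product makes your first link non-strict --- a case the paper's induction, which starts at $K=2$, leaves implicit. The paper's induction buys a more mechanical verification that does not require spotting the right intermediate quantity $\prod_{j\neq k}(1+w_j)$, but otherwise the two arguments deliver exactly the same bound under the same hypothesis $w_\ell \in (0,1)$.
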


\begin{proof}
We begin with \CBM~models. For any $k=1,...,K$,
\[ p_\IB(\IBvecy_i = \+e_k \cond \IBmatrixB = \+B) <
p_\CBM(y_i =k \cond \+B)  \]
holds if 
\begin{align*}
w_k \ds\prod_{j \neq k} (1-w_j) &<\df{w_k}{\sum_{\ell=1}^K w_\ell} 
\labelit \label{eqn:main_proof_likelihoods_reduced_to_weights}
\end{align*}
for $w_k \in (0,1), k=1,...,K$. The implication follows from the \IB~and \CB~likelihood equations (Eqs.~\eqref{eqn:CBM_category_probabilities} and~\eqref{eqn:IB_likelihood}) and the fact that any cdf $H$ maps into $(0,1)$.  Some algebra reduces Eq.~\eqref{eqn:main_proof_likelihoods_reduced_to_weights} to  
\begin{align*}
\left(
\ds\sum_{\ell=1}^K w_\ell
\right) \left( \ds\prod_{j \neq k} (1-w_j) \right) &<1 
\labelit \label{eqn:main_proof_likelihoods_reduced_to_weights_use}
\end{align*}
We establish Eq.~\eqref{eqn:main_proof_likelihoods_reduced_to_weights_use} by induction on $K$.  Without loss of generality, we assume $k=1$.  
\begin{itemize}
\item \textit{(Base.)}  We show Eq.~\eqref{eqn:main_proof_likelihoods_reduced_to_weights_use} holds for $K=2$:
\begin{align*}
w_1 (1-w_2) + w_2 ( 1- w_2) &\stackrel{(w_1 < 1)}{<} (1-w_2) (1+w_2) \\
&\stackrel{(w_2 < 1)}{<} 1.	
\end{align*}
\item \textit{(Step.)} We show that if Eq.~\eqref{eqn:main_proof_likelihoods_reduced_to_weights_use} holds for some $K$ then it holds for $K+1$. We define $\alpha_{k,K} := \sum_{\ell=1}^K w_\ell \prod_{j \neq k} (1-w_j)$ and assume $\alpha_{k,K} <1$. Now
{\footnotesize \begin{align*}
\alpha_{k,K+1} &= \explaintermbrace{$<1$ by hypoth.}{\alpha_{k,K}} (1-w_{K+1}) + w_{K+1} \explaintermbrace{$<1$}{\prod_{j \neq k} (1-w_j)} \\
&< (1-w_{K+1}) + w_{K+1} = 1.
\end{align*}}
\end{itemize}

Now we proceed to \CBC~models. We must show that for any $k=1,...,K$,
\begin{align*} 
p_\IB(\IBvecy_i = \+e_k \cond \IBmatrixB = \+B) <
p_\CBC(y_i =k \cond \+B) 
\labelit \label{eqn:WTS_IB_is_lower_bound_on_CBC}
\end{align*}
By the likelihood equations for the \IB~and \CBC~models (Eqs.~\eqref{eqn:IB_likelihood} and \eqref{eqn:CBC_category_probabilities}), we can write
\begin{align*}
p_\CBC(y_i =k \cond \+B) = \df{p_\IB(\IBvecy_i = \+e_k \cond \IBmatrixB = \+B)}{p_\IB(\Upsilon \cond \IBmatrixB = \+B)},
\labelit \label{eqn:CBC_as_normalized_IB}	
\end{align*}
where $\Upsilon \subset \set{0,1}^K$ is the space of one-hot encoded vectors. Now take any $\+v \in \Upsilon^c$. Then 
\begin{align*}
p_\IB(\Upsilon \cond \+B) & \stackrel{subadditivity}{\leq}	p_\IB(\set{0,1}^K \cond 
\+B) - p(\+v \cond \+B) \\
&\stackrel{Eq.~\eqref{eqn:IB_likelihood_yields_value_between_0_and_1_non_inclusive_for_any_vector}}{<} p_\IB(\set{0,1}^K \cond 
\+B) = 1. 
\end{align*}
So the denominator of Eq.~\eqref{eqn:CBC_as_normalized_IB} is non-negative and less than one, which implies Eq.~\eqref{eqn:WTS_IB_is_lower_bound_on_CBC}.
\end{proof}

\subsection{Impossibility of exact inference on a \CBM~or \CBC~model via inference on an \IB~model}\label{sec:impossibility_of_exact_inference_on_a_CB_model_via_inference_on_an_IB_model}

As discussed in Sec.~\ref{sec:models_overview}, efficient Bayesian inference exists for \IB~models.  As a result, we would like to relate \CB~models to \IB~models in such as way as to obtain efficient inference for the \textit{categorical} models. 

Here we consider whether an \textit{exact} relation exists.   In  Sec.~\ref{sec:can_CB_models_be_identified_via_IB_models}, we show that whereas \CBC~and \CBM~models are non-identifiable, 
\IB~models are identifiable (see Def.~\ref{def:identifiability}).  Now suppose there were an identifiability constraint 
such that a \CB~model under the identifiability constraint provided the same probabilities as an \IB~model applied to one-hot-encoded category representations.  Then there would be no need to consider \textit{approximate} inference, as inference on the regression weights $\IBmatrixB$ for the \IB~model would be \textit{exactly} equivalent to inference on the regression weights for the (identified) \CB~model.   

 Unfortunately, no such identifiability constraint exists for either \CBC~or \CBM~models.  While previous work \cite{johndrow2013diagonal} has suggested that the \CBC~model can be identified via the \IB~model,  Prop.~\ref{prop:IB_gives_an_identifiability_restriction_for_CBM_but_not_CBT_in_the_intercepts_only_setting} shows that this cannot be true.   On the other hand,  while Prop.~\ref{prop:IB_gives_an_identifiability_restriction_for_CBM_but_not_CBT_in_the_intercepts_only_setting} may elicit hope that the \CBM~model could be identified via an  \IB~model, Sec.~\ref{sec:evaluating_CB_models_as_targets_of_an_IB_approximation_wrt_soft_predictions} reveals empirically that this is no longer true once covariates are introduced. 
   
\subsubsection{Can \CB~models be identified via \IB~models? } \label{sec:can_CB_models_be_identified_via_IB_models}

Let us begin with a general definition of identifiability. 

\begin{definition}{[\textit{\cite{cole2020parameter} pp.35}]} A likelihood $p(\cdot \cond \theta)$ with parameter $\theta$ is globally identifiable if $p(\cdot \cond \theta_1) = p(\cdot \cond \theta_2)$ implies that $\theta_1=\theta_2$. A model is locally identifiable if there exists an open neighborhood in the parameter space of $\theta$ such that this is true. Otherwise a model is non-identifiable.
\label{def:identifiability}
\end{definition}

For models considered in the main body of this paper, identifiability requires
\begin{align*}
p_m(\cdot \cond \+B_1) = p_m(\cdot \cond \+B_2) \implies \+B_1=\+B_2
\labelit \label{eqn:global_identifiability_in_our_models}	
\end{align*}
where $m$ can take the value of either \IB,~\CBC, or  \CBM~models.     \red{TODO: This presumably also requires the design matrix $\+X$ to be full-rank. See if I can connect this to some statement about identifiability in GLM's.}

We now investigate the identifiability of these models in the simplest possible scenario: the intercepts-only (no covariates) setting.  In this setting, $M=1$, $\+x_i = 1$ for all $i=1,2,\hdots,N$, and the regression matrix simplifies to a vector $\+\beta \in \R^K$. Although this setting is simple, it is sufficient to provide some insight about identifiability.  

\begin{proposition}
The \CBC~and \CBM~models are non-identifiable in the intercepts-only setting. 
\end{proposition}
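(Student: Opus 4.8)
The plan is to exhibit, for each model, an explicit pair of distinct weight vectors that induce the \emph{same} categorical distribution over $y_i$, thereby violating the identifiability requirement of Eq.~\eqref{eqn:global_identifiability_in_our_models}. Since the intercepts-only setting makes every observation identically distributed with a common category distribution, the full-data likelihood is determined by that single distribution over $\set{1,\dots,K}$; it therefore suffices to find $\+\beta_1 \neq \+\beta_2 \in \R^K$ that produce identical category probabilities under Def.~\ref{def:identifiability}.

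First I would fix notation: with $\+x_i = 1$ the linear predictor collapses to $\eta_{ik} = \beta_k$, so writing $w_k := H(\beta_k) \in (0,1)$ and the odds $o_k := w_k/(1-w_k) \in (0,\infty)$, the $\CBM$ probabilities (Eq.~\eqref{eqn:CBM_category_probabilities}) become $w_k / \sum_\ell w_\ell$ and the $\CBC$ probabilities (Eq.~\eqref{eqn:CBT_category_probabilities_as_odds}) become $o_k / \sum_\ell o_\ell$. The key structural observation driving the whole argument is that each of these normalized expressions is \emph{invariant under a common positive rescaling} of its defining vector: multiplying every $w_k$ (respectively every $o_k$) by a constant $c > 0$ cancels in numerator and denominator, leaving the induced probabilities unchanged.

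For $\CBM$, because $H$ is a continuous cdf mapping bijectively onto $(0,1)$, I would pick $c$ in the nonempty open interval $\big(1,\, 1/\max_k w_k\big)$ so that $c\,w_k \in (0,1)$ for every $k$, and then define $\beta_k' := H^{-1}(c\,w_k)$. Since $c \neq 1$ and $H$ is injective, this yields $\+\beta' \neq \+\beta$ with identical $\CBM$ probabilities. For $\CBC$, the odds transform $w \mapsto w/(1-w)$ carries $(0,1)$ onto all of $(0,\infty)$, so any scaling $c > 0$ keeps $c\,o_k$ in range; I would therefore set $\beta_k' := H^{-1}\!\big(c\,o_k / (1 + c\,o_k)\big)$ for any $c \neq 1$, again producing a distinct weight vector with the same $\CBC$ probabilities.

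The only point requiring care is ensuring the rescaled target remains inside the image $(0,1)$ of $H$, so that a valid $\beta_k'$ exists. This is automatic for $\CBC$, where the odds are unconstrained above, and for $\CBM$ it follows from the finite-$K$ argument above: since each $w_k < 1$ strictly and there are finitely many categories, $\max_k w_k < 1$ and the admissible interval for $c$ is nonempty. In both cases we obtain $\+\beta_1 \neq \+\beta_2$ with equal likelihoods, so neither model satisfies Def.~\ref{def:identifiability}, establishing non-identifiability in the intercepts-only setting.
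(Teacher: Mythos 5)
Your proof is correct and takes essentially the same approach as the paper's: both exploit the invariance of the normalized \CBM~probabilities (respectively the normalized \CBC~odds) under a common positive rescaling, using $H^{-1}$ to produce a continuum of distinct weight vectors yielding identical category probabilities, with the same range restriction ($c < 1/\max_k w_k$ for \CBM, any $c>0$ for \CBC) that the paper encodes as $r \in (0, \min_\ell 1/p_\ell)$ and $s > 0$. The only cosmetic difference is the direction of the construction (the paper starts from a target pmf and exhibits the family of weights realizing it, while you rescale an arbitrary starting weight vector), and since your scaling constant can be taken arbitrarily close to $1$, your construction also rules out local identifiability exactly as Def.~\ref{def:identifiability} requires.
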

 
\begin{proof}
Let $(p_1, \hdots, p_K)$ be a  probability mass function over $K$ categories.   Then we can construct regression weights $\+\beta^\CBM, \+\beta^\CBC \in \R^K$ for the two models by taking the $k$th entry of each vector to be given by 
\begin{align}
\beta_k^{\CBM} & \in \biggset{ H^{-1}(rp_k) }_{\,r \in (0, \min_\ell 1/p_\ell)}	 \label{eqn:beta_entry_for_CBM_given_desired_pmf_in_intercepts_only_setting} \\
\beta_k^{\CBC} & \in \biggset{ H^{-1}\bigg( \df{sp_k}{1+sp_k} \bigg)  }_{s>0} 	\label{eqn:beta_entry_for_CBT_given_desired_pmf_in_intercepts_only_setting}
\end{align}
where Eq.~\eqref{eqn:beta_entry_for_CBT_given_desired_pmf_in_intercepts_only_setting} follows from setting the model's category probabilities in \eqref{eqn:CBT_category_probabilities_as_odds} to $p_k$. Therefore  Eq.~\eqref{eqn:global_identifiability_in_our_models} is not satisfied for any open neighborhood of the parameter space.
\end{proof}

 Now let us consider the \IB~model, specifically in the case where we use it to do inference with one-hot encoded representations of categorical data, $\IBvecy_i = \+e_{y_i}$ for $y_i \in \set{1,\hdots, K}$.  If we set
\begin{align*}
p_\IB(1,0,0\hdots,0) &= p_1 \\
p_\IB(0,1,0\hdots,0) &= p_2 \\ \vdots  \\ p_\IB(0,0,0\hdots,1) &= p_K
\end{align*}
and $p_\IB(\+v) =0$ for all vectors $\+v \in \set{0,1}^K$ that are not one-hot, then we have transformed $(p_1, \hdots, p_K)$ into a probability mass function over K-bits.  Via Eq.~\eqref{eqn:independent_binary_model}, this pmf implies a unique
vector of regression weights $\IBvecbeta^\IB \in \R^K$, with $k$th entry given by 
\begin{align}
\IBbeta_k^{\IB} &= H^{-1}(p_k)	 
\label{eqn:beta_entry_for_IB_given_desired_pmf_in_intercepts_only_setting}
\end{align}
So unlike the \CBC~and \CBM~models, the \IB~model \textit{is}  identifiable (and globally so), at least when we apply it only to one-hot encoded data.  \red{TODO: Should we report on the more general situation?}

\begin{remark}
If we consider the special case where $p_1,...,p_K$ are the empirical category frequencies in a $K$-class intercepts-only dataset, $p_k = \frac{1}{N} \sum_{i=1}^N \indicate{y_i=k}$, then Eqs.~\eqref{eqn:beta_entry_for_CBM_given_desired_pmf_in_intercepts_only_setting}, \eqref{eqn:beta_entry_for_CBT_given_desired_pmf_in_intercepts_only_setting}, and \eqref{eqn:beta_entry_for_IB_given_desired_pmf_in_intercepts_only_setting} give the maximum likelihood estimators.  Note in particular that since \CBC~and \CBM~models are non-identifiable, the maximum likelihood estimators are not unique.  \red{TODO: Confirm that setting the modeled category probabilities to $p_k$ works to find the MLE.  Since the \CBM~and \CBC~ models are under-identified, we can't start by reasoning about the p's and then apply the invariance of MLE property to obtain the MLE of the beta's. I think a simple chain-like argument works though.}
\label{rk:CBM_and_CBT_lack_unique_MLEs}
\end{remark}

So whereas the \CBC~and \CBM~models are non-identifiable, the \IB~model is globally  identifiable.  Moreover,  \CBC~and \CBM~models are ``built from" \IB~models in the sense described in Sec.~\ref{sec:two_classes_of_CB_models}.  These observations give rise to a natural question.   Does the \IB ~model give an \textit{identifiability constraint } for the \CBC~or \CBM~models?  That is, given a set of regression weights $\+B$ that produce an equivalent likelihood for the categorical model, can we choose a representative by setting $\+B = \IBmatrixB_\text{MLE}$, where $\IBmatrixB_\text{MLE}$ is the maximum likelihood estimate of the regression weights of an \IB~model given one-hot encoded representations of categorical data?    We address this question mathematically in Prop.~\ref{prop:IB_gives_an_identifiability_restriction_for_CBM_but_not_CBT_in_the_intercepts_only_setting} for the intercepts-only case, and empirically in Sec.~\ref{sec:evaluating_CB_models_as_targets_of_an_IB_approximation_wrt_soft_predictions} for the with-covariates case.


\begin{proposition}
In the intercepts-only setting, the \IB~model gives an identifiability constraint for the \CBM~model, but not for the \CBC~model.
\label{prop:IB_gives_an_identifiability_restriction_for_CBM_but_not_CBT_in_the_intercepts_only_setting}
\end{proposition}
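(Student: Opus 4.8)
The plan is to work entirely within the explicit parametrizations already derived for the intercepts-only setting, fixing a non-degenerate target pmf $(p_1,\ldots,p_K)$ with each $p_k \in (0,1)$. The equivalence class of \CBM~weights producing this pmf is $\beta_k = H^{-1}(r p_k)$ for $r \in (0, \min_\ell 1/p_\ell)$ by Eq.~\eqref{eqn:beta_entry_for_CBM_given_desired_pmf_in_intercepts_only_setting}; the \CBC~class is $\beta_k = H^{-1}\big(\frac{s p_k}{1+s p_k}\big)$ for $s > 0$ by Eq.~\eqref{eqn:beta_entry_for_CBT_given_desired_pmf_in_intercepts_only_setting}; and the \IB~MLE is the single point $\beta_k^{\IB} = H^{-1}(p_k)$ by Eq.~\eqref{eqn:beta_entry_for_IB_given_desired_pmf_in_intercepts_only_setting}. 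I would first make precise that \IB~\emph{giving an identifiability constraint} for a model means that $\beta^{\IB}$ is a valid, unique representative of the corresponding equivalence class, i.e. it both lies in the class and is singled out by a single scalar constraint.

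For the \CBM~direction, I would observe that setting $r=1$ in the \CBM~family yields exactly $\beta_k = H^{-1}(p_k) = \beta_k^{\IB}$, so the \IB~MLE is a genuine member of the class. It then remains only to check that $r=1$ lies in the admissible interval $(0, \min_\ell 1/p_\ell)$; since $\min_\ell 1/p_\ell = 1/\max_\ell p_\ell > 1$ whenever $\max_\ell p_\ell < 1$ (guaranteed by non-degeneracy), this holds. The \IB~MLE is thus selected by the single scalar constraint $r = 1$, equivalently $\sum_k H(\beta_k) = 1$, which meets each \CBM~equivalence class in exactly one point. Hence \IB~furnishes an identifiability constraint for \CBM.

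For the \CBC~direction, the obstruction is that $\beta^{\IB}$ generally fails to lie in the \CBC~class at all, and comparing odds is the cleanest route. A \CBC~representative has odds $H(\beta_k)/(1-H(\beta_k)) = s p_k$ from the normalized-odds form in Eq.~\eqref{eqn:CBT_category_probabilities_as_odds}, whereas the \IB~MLE has odds $p_k/(1-p_k)$. Matching the two across all $k$ would require a single scalar $s$ with $s p_k = p_k/(1-p_k)$, i.e. $s = 1/(1-p_k)$ for every $k$ simultaneously, which is impossible unless all $p_k$ are equal. Thus $\beta^{\IB}$ is not a \CBC~representative of $(p_1,\ldots,p_K)$; in fact feeding it into the \CBC~likelihood yields the distinct pmf proportional to $p_k/(1-p_k)$. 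Therefore \IB~does not supply an identifiability constraint for \CBC.

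The main obstacle is conceptual rather than computational: pinning down what ``gives an identifiability constraint'' should mean, and recognizing that the decisive asymmetry between the two models is simply whether $\beta^{\IB}$ sits inside the relevant equivalence class. Once the odds representations are aligned, the \CBC~failure reduces to the observation that $k \mapsto 1/(1-p_k)$ is non-constant, while the \CBM~success reduces to verifying the single inclusion $1 \in (0,\, 1/\max_\ell p_\ell)$.
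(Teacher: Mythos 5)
Your proof is correct and takes essentially the same route as the paper's: the \CBM~direction is handled by taking $r=1$ in Eq.~\eqref{eqn:beta_entry_for_CBM_given_desired_pmf_in_intercepts_only_setting}, and the \CBC~direction by showing the single scalar $s$ would have to equal $1/(1-p_k)$ for every $k$, forcing uniform $p_k$ — your odds-space phrasing is just a repackaging of the paper's algebra $p_k = sp_k/(1+sp_k) \implies p_k=(s-1)/s$. Your two additions — explicitly checking that $r=1$ lies in the admissible interval $(0, 1/\max_\ell p_\ell)$, and pinning down what ``gives an identifiability constraint'' means — are worthwhile clarifications the paper leaves implicit, but they do not change the argument.
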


\begin{proof}
For the \CBM~model, take r=1 in Eq.~\eqref{eqn:beta_entry_for_CBM_given_desired_pmf_in_intercepts_only_setting}, and we obtain \eqref{eqn:beta_entry_for_IB_given_desired_pmf_in_intercepts_only_setting}.   For the \CBC~model, we require 
\begin{align*}
H^{-1}(p_k) &\in \bigg\{ H^{-1} \bigg( \df{sp_k}{1+sp_k} \bigg) \bigg\}_{s >0}  \\ &\implies \exists s>0 : p_k = \df{s-1}{s} \implies p_k = 1/K. 
\end{align*} 
Since the empirical probabilities are not always uniform, the \IB~model cannot provide an identifiability constraint for the \CBC~model.
\end{proof}

\begin{remark}
From Prop.~\ref{prop:IB_gives_an_identifiability_restriction_for_CBM_but_not_CBT_in_the_intercepts_only_setting}, it follows that in the intercepts-only setting, $\IBmatrixB_{\text{MLE}}$, the maximum likelihood estimate of the regression weights of an \IB~model given one-hot encoded representations of categorical data, is an MLE for the \CBM~model, but not for the \CBC~model. 	
\end{remark}


\red{TODO: Somewhere add the remark about the intercepts only setting already implying that the general (with-covaraites) model is not globally identifiable.}

\subsection{Evaluating \CB~models as targets of an \IB~approximation} \label{sec:evaluating_CB_models_as_targets_of_an_IB_approximation}

In this section, we evaluate two different classes of \CB~models (\CBC~and \CBM) in terms of how well they serve as targets of an \IB~approximation.  In Sec~\ref{sec:evaluating_CB_models_as_targets_of_an_IB_approximation_wrt_soft_predictions}, we make an evaluation in terms of ``soft" predictions (i.e. likelihood). In Sec.~\ref{sec:evaluating_CB_models_as_targets_of_an_IB_approximation_wrt_hard_predictions}, we make an evaluation in terms of ``hard" predictions (i.e. misclassification rates).   

\subsubsection{Soft predictions} \label{sec:evaluating_CB_models_as_targets_of_an_IB_approximation_wrt_soft_predictions}

Now we evaluate the quality of \CBC~and \CBM~as targets of an \IB~approximation with respect to \textit{soft predictions} (the probability vectors produced by a categorical likelihood). 

\paragraph{Methodology.}
We generate two datasets using the data generation technique of Section~\ref{sec:data_generation}, fixing in both the response predictability at $\sigma_{\text{high}} = 0.1$ to create a challenging problem.
The smaller dataset has $K{=}3, M{=}1, N{=}1800$. 
The larger dataset has $K{=}20, M{=}40, N{=}12000$.


For each dataset, we perform gradient descent (using automatic differentiation of the relevant likelihoods from multiple initialization seeds) to estimate $\IBmatrixB_{\text{MLE}}^\CBC$ and $\IBmatrixB_{\text{MLE}}^\CBM$, the MLEs for the \CBC-Probit and \CBM-Probit models, respectively. Similarly, we estimate $\IBmatrixB_{\text{MLE}}^\IB$, the MLE for the \IB-Probit model when it is fit on one-hot encoded representations of the categorical data.

Then, for each example $i$ in the training data, we compute the categorical probability vector $\+s_i \in \Delta_{K-1}$ using the weights that minimize the truly-categorical \CB~likelihood, as well as probability vector $\hat{\+s}_i \in \Delta_{K-1}$ corresponding to the weights estimated to minimize the IB likelihood. (Recall that vector $\+ s_i$ can be produced given weights $\+B$ via Eq.~\eqref{eqn:categorical_regression}).
Suppose $k$ is the class with the highest probability in the vector $\hat{\+s}_i$: we wish to compare the signed error between the ``ideal'' $\+s_{ik}$ and our approximations $\hat{\+s}_{ik}$ in order to assess the quality of our IB approximation.



\paragraph{Results.} Figure~\ref{fig:evaluating_CBM_and_CBT_as_targets_of_an_IB_approximation} demonstrates two important points:

\begin{enumerate}
\item The MLE for the \IB~model is not an \textit{exact} MLE for the either the \CBC~or \CBM~models.  As a result, neither of the two models is a globally identified \IB~model. (If they were, then the signed error would always equal 0.) This provides an empirical refutation that IB gives an identifiability constraint for \CB~models.  Previously, \citet{johndrow2013diagonal} suggested that such a relationship might hold (at least for \CBC~models).  Prop.~\ref{prop:IB_gives_an_identifiability_restriction_for_CBM_but_not_CBT_in_the_intercepts_only_setting} proved that such a relationship cannot hold for \CBC~models in the intercepts-only setting ($M=0$), and these results provide an empirical refutation for \CBC~models in the with-covariates setting ($M\geq 1$). Moreover, while Prop.~\ref{prop:IB_gives_an_identifiability_restriction_for_CBM_but_not_CBT_in_the_intercepts_only_setting} revealed that the identification relationship \textit{does} hold for \CBM~models in the intercepts-only setting ($M=0$); it apparently does not hold in general.  
\item Neither \CBC~models nor \CBM~models are uniformly dominant as a target of an \IB~approximation. For the smaller dataset, the \CBM~model is superior to \CBC; the approximation error is lower in the top right panel than the top left panel. However, for the larger dataset, the \CBC~model is superior to \CBM; the approximation error is lower in the bottom left panel than the bottom right panel.  Thus, \textit{either} of the $\set{\CBC, \CBM}$ models can provide superior performance to the other as targets of an \IB~approximation.  The relative advantage depend on properties of the data. 
\red{TODO: Consider updating the comparison so that it depends on category predictability. This would help emphasize that which model to choose as the approximation target can't be determined a priori, as it depends on the level of response predictability.}
\end{enumerate}

Figure~\ref{fig:evaluating_CBM_and_CBT_as_targets_of_an_IB_approximation} also provides information on the \textit{amount} of error incurred by using an \IB~approximation.  For similar results in the context of \IB-CAVI algorithm, see Table~\ref{tab:approximation_error_when_using_IB_CAVI_and_BMA}.

\begin{figure}[htp!]
\centering 
\begin{tabular}{c}
CBC \hspace{3cm} CBM
\\
\includegraphics[width=.46\textwidth]{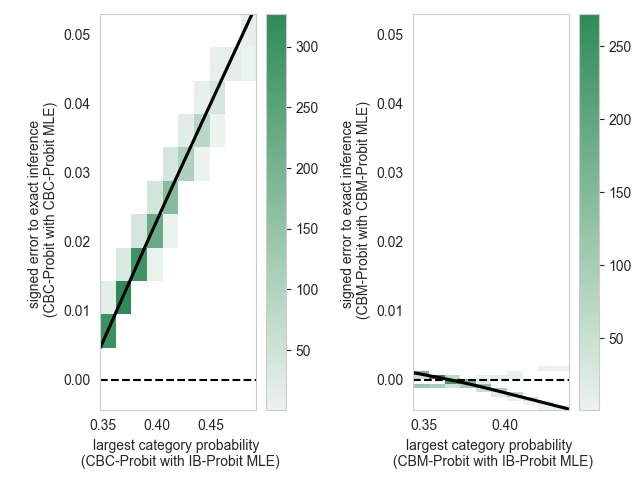}  \\
\includegraphics[width=.46\textwidth]{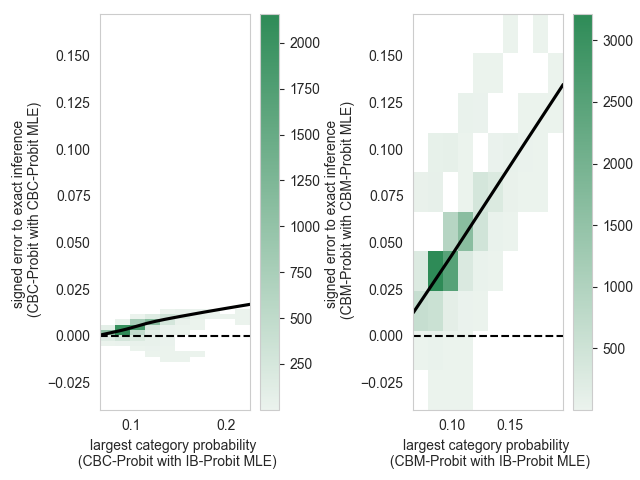} \\
 \end{tabular}
\caption{
2D histograms of signed error $s_{ik} - \hat{s}_{ik}$ in the probability mass assigned to the most probable category $k$ by a \CB~model and its \IB~approximation, using ML estimated weights.
Each panel shows a distribution across training examples, where each example $i$ contributes to the bin corresponding to the probability of its most probable category $k$ (x-axis) and its signed error (y-axis).
Darker colors imply more examples belong to that bin.
\emph{Top panels:} Smaller dataset with $K=3, M=1, N=1800, \sigma^2_{high}=0.1$.
\emph{Bottom panels:} Larger dataset with $K=20, M=40, N=12,000, \sigma^2_{high}=0.1$.
\emph{Left panels:} A \CBC~model is used to estimate the weights that determine $\+s_i$. 
\emph{Right panels:} A \CBM~model is used to estimate the weights that determine $\+s_i$.
The solid black line gives nonparametric lowess (locally weighted linear regression) estimates of the expected relationship between largest category probability and error.
}
\label{fig:evaluating_CBM_and_CBT_as_targets_of_an_IB_approximation}
\end{figure}

\subsubsection{Hard predictions} \label{sec:evaluating_CB_models_as_targets_of_an_IB_approximation_wrt_hard_predictions}

With respect to \textit{hard predictions} (misclassification rates), the quality of the \IB~approximation strategy is invariant to whether \CBC~or \CBM~is used as the target of the approximation.  This fact is implied by the proposition below. 

\begin{proposition}
For fixed regression weights and covariates, the likelihoods for \CBC~and \CBM~put the most probability mass on the same category.  That is
\begin{align*}
k^* := \argmax_k p_{\CBC}(y_i=k \cond \+\beta) = \argmax_k  p_{\CBM}(y_i=k \cond \+\beta).  
\end{align*}

Moreover, the most likely category matches that given by the most likely one-hot vector $\+e_k$ from the IB model: 
\[ k^* = \argmax_k  p_{\IB}(\IBvecy_i = \+e_k \cond \+\beta).\]
%
\label{prop:IB_plus_OHT_and_IB_plus_NSP_make_the_same_predictions}
\end{proposition}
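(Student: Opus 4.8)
The plan is to reduce each of the three $\argmax$ problems to a common, category-independent normalizer, after which only the numerators matter. Throughout I would fix the observation index $i$ and write $w_k := H(\eta_{ik})$, noting that $w_k \in (0,1)$ since $H$ is a cdf evaluated at a finite argument (cf. Eq.~\eqref{eqn:IB_likelihood_yields_value_between_0_and_1_non_inclusive_for_any_vector}).

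First I would handle the two categorical models. For the \CBM~model, Eq.~\eqref{eqn:CBM_category_probabilities} gives $p_{\CBM}(y_i = k \cond \+\beta) = w_k / \sum_\ell w_\ell$; since the denominator does not depend on $k$, we have $\argmax_k p_{\CBM}(y_i = k \cond \+\beta) = \argmax_k w_k$. For the \CBC~model, I would use the normalized-odds form of Eq.~\eqref{eqn:CBT_category_probabilities_as_odds}, namely $p_{\CBC}(y_i = k \cond \+\beta) = \big(w_k/(1-w_k)\big) / \sum_\ell \big(w_\ell/(1-w_\ell)\big)$, whose denominator is likewise category-independent, so $\argmax_k p_{\CBC}(y_i = k \cond \+\beta) = \argmax_k w_k/(1-w_k)$.

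The key observation is that the odds map $\phi(w) = w/(1-w)$ is strictly increasing on $(0,1)$, since its derivative $1/(1-w)^2$ is positive, and is therefore a monotone bijection onto $(0,\infty)$. Because a strictly increasing transformation preserves the argmax (indeed the entire argmax set, in case of ties), we get $\argmax_k w_k/(1-w_k) = \argmax_k w_k$, which already shows the \CBC~and \CBM~models share the maximizer $k^*$. Finally I would treat the \IB~model: from Eq.~\eqref{eqn:IB_likelihood}, $p_{\IB}(\IBvecy_i = \+e_k \cond \+\beta) = w_k \prod_{j\neq k}(1-w_j) = \big(w_k/(1-w_k)\big)\prod_{j=1}^K (1-w_j)$, where the full product over $j$ is a positive constant independent of $k$. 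Hence $\argmax_k p_{\IB}(\IBvecy_i = \+e_k \cond \+\beta) = \argmax_k w_k/(1-w_k) = \argmax_k w_k$ by the same monotonicity argument, and combining the three reductions yields the common maximizer $k^*$.

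There is no real analytic obstacle here; the work is entirely in recognizing the shared normalizer in each expression and invoking strict monotonicity of the odds transform. The only point requiring care is the bookkeeping around ties, which I would address by emphasizing that a strictly increasing transformation preserves the entire argmax set, so the conclusion holds whether or not the maximizer is unique.
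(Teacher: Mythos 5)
Your proof is correct and follows essentially the same route as the paper's: both reduce each of the three probabilities to a category-independent normalizer times an unnormalized potential, then invoke monotonicity of the odds transform $w \mapsto w/(1-w)$ to identify the common maximizer. The only (cosmetic) difference is that the paper composes once more with $H$ to express all three potentials as increasing functions of the linear predictor $\eta_{ik}$, while you stop at $w_k = H(\eta_{ik})$ — which suffices for the stated claim and is in fact slightly more careful, since it handles ties and does not require $H$ to be strictly increasing.
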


\begin{proof}
Define the linear predictor $\eta_{ik} := \+x_i^T \+\beta_k$.  Then 
\begin{align*}
p_{\CBM}(y_i=k \cond \+\beta) & \stackrel{\eqref{eqn:CBM_category_probabilities}}{=} \df{   H(\eta_{ik})  }{\sum_{k=1}^K    H( \eta_{ik}) }  
\propto f_{\CBM}(\eta_{ik})  \\
p_{\CBC}(y_i=k \cond \+\beta) 
&\stackrel{\eqref{eqn:CBT_category_probabilities_as_odds}}{=} \df{  H(\eta_{ik}) / \big( 1 - H(\eta_{ik}) \big) }{\sum_{k=1}^K    H( \eta_{ik}) / \big( 1 - H(\eta_{ik}) \big) } \propto f_{\CBC}(\eta_{ik}) \\
p_{\IB}(\IBvecy_i = \+e_k \cond \+\beta)  & \stackrel{\eqref{eqn:IB_likelihood}, \, \eqref{eqn:CBT_category_probabilities_as_odds}}{=}  H(\eta_{ik}) / \big( 1 - H(\eta_{ik}) \big) \propto f_{IB}(\eta_{ik}) 
\end{align*}
where we have introduced notation $f_{\CBM}$, $f_{\CBC}$, $f_{\IB}$ to refer to the potential functions (unnormalized category probabilities) for each model.  Now since $H$ is an increasing function, $f_{\CBC}$, $f_{\CBM}$ $f_{\IB}$ are all increasing functions of the linear predictor $\eta_{ik}$.  Thus,
\begin{align*}
\argmax_k p_{\CBC}(y_i=k \cond \+\beta) &= \argmax_k  p_{\CBM}(y_i=k \cond \+\beta) \\
&= \argmax_k  p_{\IB}(\IBvecy = \+e_k \cond \+\beta) \\
&= \argmax_k \eta_{ik}
\end{align*}

and the proposition holds. 
\end{proof}

\begin{remark}
Proposition \ref{prop:IB_plus_OHT_and_IB_plus_NSP_make_the_same_predictions} can be extended to a more general proposition that the two approximation strategies provide identical rankings for \textit{all} categories. \red{TODO: Consider writing it this way.} Indeed, we observe this phenomenon in practice. 
\end{remark}

%
%

\section{General variational algorithm for CB models} \label{sec:general_variational_algorithm}

Here provide a strategy for performing variational inference with \emph{any} model $\M$ whose joint density includes a \CB~likelihood; that is, the joint density has the form
\[ p_M(\+y,\+u) = p_{\CB}(\+y \cond \+u) \pi(\+u) \]
where $\+y$ are observed random variables, $\+u = (u_v)_{v=1}^V$ are unobserved random variables, $p_{\CB}$ is a \CB~likelihood, and $\pi$ is a  prior density.  This strategy can be applied to the \CBC~and \CBM~models, but it is also extensible to more complicated graphical models, such as hierarchical models or models with variable selection priors (such as the normal-gamma prior \cite{brown2010inference} which generalizes Bayesian lasso,  or the horseshoe prior \cite{carvalho2009handling}, for which a conditionally conjugate formulation exists \cite{makalic2015simple}).  
We summarize our VI strategy in Algorithm 1. 

\textbf{Algorithm 1}: \texttt{IB-CAVI} \textit{for approximate Bayesian inference on any model with a \CB~likelihood}.  Given $N$ observed categorical responses $\+y \in \set{1,...,K}^N$ and covariates $\+X \in \R^{N \times M}$
\begin{enumerate}
\item Form the matrix of one-hot encoded responses $\+E_\+y = (\+e_{y_1}^T, \hdots, \+e_{y_N}^T)^T \in \R^{N \times K}$.
\item Take the \IB~model $\M_\IB$ which is the \textit{base} (see Sec.~\ref{sec:categorical_from_binary_models}) of the \CB~model, and use it to compute \textit{surrogate complete conditionals}: $\set{\log p_{\M_\IB}(u_v \cond \+u_{-v}, \+E_\+y)}$.
\item Take $\Q$ to be a mean-field family with factorization: $q(u_1, ..., u_V) = \prod_{v=1}^V q_v(u_v)$. (The regression weights $\IBmatrixB$ are included in this set, as are auxillary variables $\IBmatrixZ$ if augmentation is used.  If the \CB~likelihood is embedded within a more complicated graphical model, there may be other unobserved random variables as well.)
 \item Define the objective: $\L_{\M}(q) = \E_q [\log \frac{p_{\M_\IB}(\+E_\+y, \+u)}{q(\+u)}]$. 
 \item Optimize $\L_{\M}(q)$ using optimal coordinate ascent updates~\citep{blei2017variational}):
$q_v(u_v) \propto \exp \big\{ \E_{q_{-v}} \big[  \log p_{\M_\IB}(u_v \, | \, \+u_{-v}, \+E_\+y)\big] \big\}$.      If the complete conditional is an exponential family with natural parameter $\eta_v$,  so is its optimal update,  with natural parameter given by 
\begin{align*}
\nu_v = \E_{q_{-v}} [\eta_v(\+u_{-v}, \+E_\+y)]
\labelit \label{eqn:IB_CAVI_for_EFCC}
\end{align*}
 \end{enumerate}
 
The updates in Algorithm 1 will yield a density $q^*$ that is a local maximum of the $\ELBO$ of the surrogate model \citep{ormerod2010explaining},  and therefore a local maximum of a surrogate bound on the intended truly categorical model. For a \CB~model with a probit or logit link, a Gaussian prior on the regression weights, and use of appropriate augmentation, all conditionals enjoy closed-form updates in Eq.~\eqref{eqn:IB_CAVI_for_EFCC}, and the objective function $\L_\mathcal{M}$ is also available in closed-form, which is useful for convergence monitoring.   For details, see Secs.~\ref{sec:VI_for_CB_probit_models} and~\ref{sec:VI_for_CB_logit_models}.  


\section{Variational inference for \CB-Probit Models} \label{sec:VI_for_CB_probit_models}

Here we present closed-form variational inference for \CB-Probit models. The inference follows naturally from our IB-CAVI procedure in Algorithm 1.  

\subsection{Distributional preliminaries}

\subsubsection{Entropy facts about Multivariate Gaussian}

If $p, q$ are the densities of two different $d$-variate Gaussian distributions with parameters $\+\mu_p,  \+\Sigma_p$ and $\+\mu_q,  \+\Sigma_q$,  respectively,  then  the entropy is given by
\begin{align}
\H[q]= \half \log \bigg[ (2 \pi e)^d | \+\Sigma_q |\bigg]  
\label{eqn:entropy_mvns}	
\end{align}
 
The KL divergence is given by
{\scriptsize 
\begin{align*}
\KL{q}{p} &=  \frac{1}{2} \bigg[ \log \frac{|\+\Sigma_p|}{|\+\Sigma_q|} - d \\
& \quad \quad + (\+\mu_q - \+\mu_p)^T  \+\Sigma_p^{-1} (\+\mu_q - \+\mu_p) + \tr \big( \+\Sigma_p^{-1} \+\Sigma_q \big)  \bigg]  
\labelit \label{eqn:kl_divergence_mvns}
\end{align*}
}
The cross-entropy of two multivariate Gaussians can then be determined from \eqref{eqn:entropy_mvns} and \eqref{eqn:kl_divergence_mvns} via the relation
\begin{align*}
 \H[q,p] = \H[q] + \KL{q}{p} 
\labelit \label{eqn:cross_entropy_of_two_mvns} 
  \end{align*}

\subsubsection{Univariate normals truncated to positive or negative reals}
\label{sec:normal_plus_and_normal_minus}

The univariate truncated normal distribution $\TruncatedNormal(\mu,  \sigma^2,  \Upsilon)$ results when a normal distribution $\N(\mu,  \sigma^2)$ is truncated to some set $\Upsilon \subseteq \R$.    Note that the parameters $\mu, \sigma^2$ denote the mean and variance of the \textit{parent} normal distribution;  i.e.  if $X \sim \TruncatedNormal(\mu,  \sigma^2,  \Upsilon)$ then $\E[X] \neq \mu$ (unless $\Upsilon = \R$). 

If we assume that the truncation set is an interval $\Upsilon = (a,b)$ for $a,b \in \R$,  then the distribution $\TruncatedNormal(\mu,  \sigma^2,  (a,b))$ has p.d.f.

\begin{align*}
f(x ; \mu,  \sigma^2,  a,  b) = \df{\phi_{\mu, \sigma^2} (x) }{\Phi_{\mu, \sigma^2} (b)  - \Phi_{\mu, \sigma^2} (a) } \indicate{a \leq x \leq b}(x)
\end{align*} 

where $\phi_{\mu,  \sigma^2}$ and $\Phi_{\mu,  \sigma^2}$ denote the pdf and cdf,  respectively,  of a univariate normal distribution with mean $\mu$ and variance $\sigma^2$.   

We will work with distributions truncated to the positive or negative reals,  and so we define special notation:  $\N_+(\mu,  \sigma^2) := \TruncatedNormal(\mu,  \sigma^2,  [0, \infty))$ and $\N_-(\mu,  \sigma^2) := \TruncatedNormal(\mu,  \sigma^2,  (-\infty, 0))$.   In particular,  we will work with random variables of the form $T_+ \sim \N_+(\mu,  1)$ and $T_- \sim N_-(\mu, 1)$.    Based on this construction,  it is straightforward to derive
{\footnotesize
\begin{align*}
f_{T_+}(x) & = \df{\phi(x-\mu)}{1 - \Phi(-\mu)} \indicate{x \geq 0},  \quad 
f_{T_-}(x) = \df{\phi(x-\mu)}{\Phi(-\mu)} \indicate{x < 0} \\
\E[T_+] &= \mu + \df{\phi(-\mu)}{1 - \Phi(-\mu)},  \quad
\E[T_-] = \mu - \df{\phi(-\mu)}{\Phi(-\mu)} 
\labelit \label{eqn:mean_of_normal_plus_and_minus} \\
\Var[T_+] &= 1 - \mu ( \E[T_+]  - \mu) - (\E[T_+] - \mu)^2
\labelit \label{eqn:variance_of_normal_plus} \\
\Var[T_-] &=  1 - \mu ( \E[T_-]  - \mu) - (\E[T_-] - \mu)^2 
\labelit \label{eqn:variance_of_normal_minus} \\
\H[T_+] &= \ln \big(\sqrt{2 \pi e } \,  [1 - \Phi(-\mu)]  \big)  - \df{\mu \phi(-\mu)}{2 (1-\Phi(-\mu))}  
\labelit \label{eqn:entropy_of_normal_plus} \\
\H[T_-] &=   \ln \big(\sqrt{2 \pi e } \,  \Phi(-\mu) \big)  + \df{\mu \phi(-\mu)}{2 \Phi(-\mu)}  
\labelit \label{eqn:entropy_of_normal_minus} 
\end{align*}  
}where we use $\phi$ and $\Phi$ to refer to the pdf and cdf, respectively,  of the standard normal distribution, and where $\H[X] = - \int f(x) \ln f(x) \,  dx$ represents the differential entropy of a random variable $X$ with density $f$.

\begin{remark}{\remarktitle{Representation in terms of perturbation of parent mean}}
It is sometimes convenient to express the expectation of a truncated random variable as a perturbation of the expectation of its parent (pre-truncated) Gaussian random variable.    To this end,  for $T_s \in \set{T_+, T_-}$,  we write 
\begin{align*}
\E[T_s] &= \mu + \delta_s (\mu),  \quad 
\delta_s (\mu) : =
\begin{cases}
\df{\phi(-\mu)}{1 - \Phi(-\mu)} ,  & s = + \\
- \df{\phi(-\mu)}{\Phi(-\mu)} ,  & s = - \\
\end{cases} \\
\labelit \label{eqn:mean_of_normal_plus_or_minus_as_perturb}
\end{align*} 
which holds by Eq.~\eqref{eqn:mean_of_normal_plus_and_minus}.
\end{remark}

\begin{remark}{\remarktitle{Second moments}}
For $T_s \in \set{T_+, T_-}$,  we have
{\scriptsize 
\begin{align*}
\E[T_s^2] &= \Var[T_s] + \E^2[T_s] \\
& \stackrel{1}{=} 1 - \mu (\E[T_s] - \mu) - (E[T_s] - \mu)^2 + \E^2[T_s] \\
& = 1 - \mu E[T_s]  + \cancel{\mu^2} - \cancel{E^2[T_s]} + 2 \mu E[T_s] - \cancel{\mu^2} + \cancel{E^2[T_s] } \\
& = 1 + \mu E[T_s]
\labelit \label{eqn:second_moment_for_normal_plus_or_minus}
\end{align*}
}where (1) holds by \eqref{eqn:variance_of_normal_plus} and \eqref{eqn:variance_of_normal_minus}.
\end{remark}

\subsection{Models}
\subsubsection{\CB-Probit Model} \label{sec:CB_probit_model}

A Bayesian \CB-Probit model is a categorical GLM which generates smulti-class outcomes $y_i \in \set{1,...,K},  \,  i=1,...,N$ by
{\footnotesize 
	\begin{subequations}
	\begin{align}
	\+\beta_{k} & \iid \N(\+\mu_0,  \+\Sigma_0) \\ 
	 \quad p_{ik} & = \text{any \CB-Probit category probabilities} \label{eqn:choice_of_CFB_category_probs_in_the_CFB_Probit_model} \\
	y_{i} &\sim \text{Cat}(p_{i1}, \ldots p_{iK}).
	\end{align}\label{eqn:CB_probit_model}
	\end{subequations}
}The form for the category probabilities in Eq.~(\ref{eqn:choice_of_CFB_category_probs_in_the_CFB_Probit_model}) depends on the choice of \CB~model; for instance, for the \CBM-Probit and \CBC-Probit models we have 
\begin{align*}
p^{\CBM-\text{Probit}}_{ik} & =
		\frac{ \Phi( \+x_{i}^T \+\beta_{k} ) }
		{ \sum_{\ell=1}^K \Phi( \+x_{i}^T \+\beta_{\ell} ) } \\
p^{\CBC-\text{Probit}}_{ik} & =
		\frac{ \Phi( \+x_{i}^T \+\beta_{k} )  \prod_{h \neq k} \big(1-\Phi( \+x_{i}^T \+\beta_{h} )\big)  }
		{ \sum_{\ell=1}^K \Phi( \+x_{i}^T \+\beta_{\ell} ) \prod_{h \neq \ell} \big(1- \Phi( \+x_{i}^T \+\beta_{h} ) \big)} 	
\end{align*}
for standard Gaussian cdf $\Phi$,   known covariates $\+x_i \in \R^M$, and  unknown parameters $\+B \in \R^{M  \times K}$ ( $\+\beta_k$ is used to designate the $K$-th column of $\+B$).  \red{TODO: Notation here is different than in the main body of the paper due to c not y.}
 
 \subsubsection{\IB-Probit Model}
 \label{sec:IB_probit_base_model} 
 
 The base model for a \CB-Probit model is an \IB-Probit model.  With a Gaussian prior, the model is:
\begin{align*}
\+\beta_k &\iid \N(\+\mu_0,  \+\Sigma_0),  \quad k=1,...,K \\
\IBy_{ik} \cond \+\beta_k &\indsim \Bernoulli \big(  \Phi(\+x_i^T \+\beta_k) \big),  \quad i=1,...,N \\
\labelit \label{eqn:IB_probit}
\end{align*}
for known covariates $\+x_i \in \R^M$ and unknown parameters $\+B \in \R^{M  \times K}$ ( $\+\beta_k$ is used to designate the $K$-th column of $\+B$).   The binary responses $\IBy_{ik}$ are the $k$-th element of $\IBvecy_i \in \set{0,1}^K$, where $\IBvecy_i =\+e_{y_i}$ is the one-hot indicator vector with value of 1 only at entry $y_i \in \set{1,\hdots,K}$.


 \subsubsection{Augmented \IB-Probit Model} \label{sec:augmented_IB_probit_model}
 
 Following Albert \& Chib \cite{albert1993bayesian}, we may foster inference on the independent binary probit regression model by instead working with an augmented model. 

\begin{align*}
\+\beta_k &\iid \N(\+\mu_0,  \+\Sigma_0),  \quad k=1,...,K \\
z_{ik} \cond \+\beta_k &\indsim \N(\+x_i^T \+\beta_k,  1),  \quad i=1,\hdots,N  \\
\IBy_{ik} &= 
\begin{cases}
1 & z_{ik} \geq 0 \\
0 & \text{otherwise}, 
\end{cases}
 \quad i=1,\hdots,N  \\
\labelit \label{eqn:IB_probit_augmented}
\end{align*}
where we have introduced augmented variables $z_{ik}$. We use $\+Z \in \R^{N \times K}$ to represent the matrix whose $(i,k)$th entry is $z_{ik}$, and $\+z_k$ to represent the $k$th column of $\+Z$. As we will see in Sec.~\ref{sec:IB_probit_ccs}, the  augmented model is nice to work with, as it has exponential family complete conditionals.

\subsection{Variational inference}

Algorithm~\ref{alg:ib_cavi_for_cb_probit} provides closed-form coordinate ascent variational inference (CAVI) for the augmented \IB-Probit model. By Eq.~\eqref{eqn:surrogate_objective_for_CB_models}, this  gives closed-form CAVI for any \CB-Probit model. 


\renewcommand{\thealgocf}{2}
   
\newlength{\commentWidth}
\setlength{\commentWidth}{7.3cm}
\newcommand{\atcp}[1]{\tcp*[r]{\makebox[\commentWidth]{#1\hfill}}}

\begin{algorithm2e*}
\DontPrintSemicolon
\SetAlgoLined
 \SetKwInOut{Input}{Input}
 \SetKwInOut{HyperParams}{HyperParams}
 \SetKwInOut{Output}{Output}
 \SetKwRepeat{Do}{do}{while}

\begin{table}[H]
\vspace{-3.5mm}
\begin{tabular}{ll}
 \makecell[tl]{\textbf{Input:} \\ 
  $\+y \in \{1,\hdots,K\}^N$: $N$ responses from $K$ categories \\ 
    $\+X \in \R^{N \times M}$: Matrix whose $i$th row ($i=1,\hdots,N$) \\
        \quad gives the covariates associated with response $y_i$.  \\
    \textbf{Output:} \\ 
    $\set{(\qvecmu_{k},\qSigma_{k})}_{k=1}^K$: Parameters for $q(\+B) = \prod_k \mathcal{N}(\beta_k|\qvecmu_k, \qSigma_k)$, \\ \quad variational density (Eq.~\eqref{eqn:mf_variational_family_for_IB_probit})  on regression weights$^\dag$   \\  
    $\set{\set{\wt{\eta}_{ik}}_{k=1}^K}_{i=1}^N$: Parameters for $q(\+z) = \prod_i \prod_k \mathcal{T}\mathcal{N}(z_{ik} | \wt{\eta}_{ik}, 1, \Upsilon_{ik}) $, \\ \quad variational density (Eq.~\eqref{eqn:mf_variational_family_for_IB_probit}) on auxiliary variables$^\dag$   \\
\hfill \tinytext{$^\dag$: Here $\N$ and $\mathcal{T}\mathcal{N}$ refer to \textit{densities} rather than measures}
    }
    &
    \makecell[tl]{\textbf{Hyperparameters / Settings:} \\
     $(\+\mu_{0}, \+\Sigma_0)$ : Mean and covariance of prior on weights \\ \quad $\pi(\+B) = \prod_{k} \mathcal{N}(\+\beta_k | \+\mu_0, \+\Sigma_0)$ \\
     $\set{\qvecmu_{k}^{(0)}}_{k=1}^K$ : Initial variational mean \\ \quad on regression weights \\
    \textit{Termination condition} : e.g. number of iterations \\
    \quad or convergence threshold on $\ELBO_{\IB-\text{Probit}}$
    } 
\end{tabular}
\end{table}

\For{$k \gets 1$ \KwTo $K$}{
$\qSigma_k \gets \bigg( \+\Sigma_0^{-1} +  \+X^T \+X \bigg)^{-1}$\atcp{Set $\qSigma_k$ for $q( \+\beta_k | \cdot, \qSigma_k)$ via Eq.~\eqref{eqn:IB_cavi_update_for_covariance_of_beta_k_from_CB_Probit}}

\While{termination condition not satisfied} {

\For{$i \gets 1$ \KwTo $N$}{

	$\wt{\eta}_{ik} \gets \+x_i^T \qvecmu_k$  \atcp{Update $q(z_{ik} | \wt{\eta}_{ik} )$ via Eq.~\eqref{eqn:IB_cavi_update_for_expected_linear_predictor_from_CB_Probit}}

	$\E_{q}[ z_{ik} ] \gets \begin{cases}
\wt{\eta}_{ik} + \df{\phi{(-\wt{\eta}_{ik})}}{1 - \Phi{(-\wt{\eta}_{ik})}},& \IBy_{ik} = 1\\
\wt{\eta}_{ik} - \df{\phi{(-\wt{\eta}_{ik})}}{\Phi(-\wt{\eta}_{ik})}, & \text{otherwise}
\end{cases}$\atcp{Expectation computed via Eq.~\eqref{eqn:IB_CAVI_for_CB_Probit_taking_expectation_of_aux_var}}
}

$\qvecmu_k \gets \qSigma_k \left( \+\Sigma_0^{-1} \+\mu_0 +  \+X^T  \E_{q}[ \+z_{k}] \right)$\atcp{Update $q( \+\beta_k | \qmu_k, \cdot)$ via Eq.~\eqref{eqn:IB_cavi_update_for_mean_of_beta_k_from_CB_Probit}}
 (Optional) Compute $\ELBO_{\IB-\text{Probit}}$ via \eqref{eqn:elbo_high_level_IB_probit}
 }
 }
 \caption{Independent Binary Coordinate Ascent Variational Inference (\IB-CAVI) for \CB-Probit models}\label{alg:ib_cavi_for_cb_probit}
\end{algorithm2e*}

\subsubsection{Complete conditionals}\label{sec:IB_probit_ccs}

The augmented \IB-Probit model (Eq.~\eqref{eqn:IB_probit_augmented}) contains Bayesian linear regression on the auxiliary variables $\+z_k$.   In this way,  we obtain the complete conditionals
{\footnotesize
\begin{align*}
z_{ik} \cond \+\beta_{1}, ..., \+\beta_{K},  \IBy_{ik} & \sim 
\begin{cases}
\N_+ \bigg(  \+x_{i}^T \+\beta_{k} \; , \; 1 \bigg), & \IBy_{ik} = 1 \\
\N_- \bigg( \+x_{i}^T \+\beta_{k} \; ,  \; 1 \bigg),  & \text{otherwise} 
\end{cases}  
\labelit \label{eqn:complete_conditions_for_IB_probit_with_augmentation} 
\end{align*}
}where $\N_+$ and $\N_-$ are truncated normal distributions defined in Section \ref{sec:normal_plus_and_normal_minus}, and
{\footnotesize
\begin{subequations}
	\begin{align*}
\+\beta_{k} \cond \+z_{k}  &\sim \N (\+\mu_{k},  \+\Sigma_{k}), \\
\+\mu_{k}  &= \+\Sigma_{k} \bigg( \+\Sigma_0^{-1} \+\mu_0 + \+X^T \+z_{k} \bigg),  \quad \\
\+\Sigma_{k} &= \bigg( \+\Sigma_0^{-1} + \+X^T \+X \bigg)^{-1}
\end{align*}\label{eqn:cc_for_beta_in_IB_Probit_with_augmentation} 
\end{subequations}
}

\subsubsection{Variational family}
We take the mean-field variational family for the  augmented \IB-Probit model  (Eq.~\eqref{eqn:IB_probit_augmented}) to have density given by
{\footnotesize 
	\begin{align*}
	& q(\+B, \+Z) \stackrel{(1)}{=}   q(\+B) q(\+Z) \\
	& \stackrel{(2)}{=}  \ds\prod_{k=1}^K  \explaintermbrace{$\N(\qvecmu_{k},  \qSigma_{k})$\; }{q(\+\beta_{k})}  \ds\prod_{i=1}^{N} \explaintermbrace{\; $\TruncatedNormal (\wt{\eta}_{ik},  1,  \Upsilon_{ik})$}{q(z_{ik})} 
	\labelit \label{eqn:mf_variational_family_for_IB_probit}
	\\ & \text{where} \; \Upsilon_{ik} = 
	\begin{cases}
	\R^+, & \IBy_{ik} = 1 \\
	\R^-, & \IBy_{ik} = 0 \\
	\end{cases}
	\labelit \label{eqn:trunation_region_for_aux_var_in_hierarchical_CFB_probit}
	\end{align*}
}Equality (1) is by mean-field assumption.  Equality (2) holds without any additional assumption. Since the complete conditionals are both exponential families, the optimal variational factors with respect to the lower bound $\ELBO_\IB$ are in the same exponential families,  with natural parameters given by the variational expectation of the natural parameters of the corresponding complete conditionals (as in Eq.~\ref{eqn:IB_CAVI_for_EFCC}).

\subsubsection{Coordinate ascent updates}

 Here we derive the parameters for the updates,  using the notation of Eq.~\eqref{eqn:mf_variational_family_for_IB_probit}.


\paragraph{Updates to $ \{q(\+\beta_{k}) \}_{k=1}^K$} Since the natural parameters of a multivariate Gaussian are the precision and precision-weighted mean,  we reparametrize the surrogate complete conditional for each $\+\beta_{k}$ in Eq.~\eqref{eqn:cc_for_beta_in_IB_Probit_with_augmentation}  before taking variational expectations of the parameters.   Hence,   the optimal update to each $q(\+\beta_{k} \cond \qvecmu_{k},  \qSigma_{k})$ with respect to the objective $\ELBO_\IB$ is given by
{\footnotesize
 \begin{align*}
\qSigma_{k}^{-1} &=  \E_{q_{-\+\beta_{k}}} \bigg[  \+\Sigma_0^{-1} + \+X^T \+X \bigg]   =  \+\Sigma_0^{-1} +  \+X^T \+X \\
\qSigma_{k}^{-1} \qvecmu_{k} &=  \E_{q_{-\+\beta_{k}}} \bigg[  \+\Sigma_0^{-1} \+\mu_0 + \+X^T  \+z_{k} \bigg]  \\
&=  \+\Sigma_0^{-1} \+\mu_0 +  \+X^T  \E_{q_{\+z_{k}}}[ \+z_{k}]  
 \end{align*}
 }where $\E_q[\+z_{k}]$ is given explicitly below.   

Thus,  in standard parameterization, we update
{\footnotesize
\begin{subequations}
 \begin{align}
 \qvecmu_{k} &= \qSigma_{k} \bigg(  \+\Sigma_0^{-1} \+\mu_0 +  \+X^T  \E_{q_{\+z_{k}}}[ \+z_{k}]  \bigg)  \label{eqn:IB_cavi_update_for_mean_of_beta_k_from_CB_Probit} \\
\qSigma_{k} &=  \bigg( \+\Sigma_0^{-1} +  \+X^T \+X \bigg)^{-1} \label{eqn:IB_cavi_update_for_covariance_of_beta_k_from_CB_Probit}
 \end{align}
\label{eqn:IB_cavi_update_for_beta_k_from_CB_probit}
 \end{subequations}
}where $\E_q[\+z_{k}]   \in \R^{N}$ has $i$-th entry given by 
{\footnotesize
\begin{align*}
\E_q[z_{ik}] = 
\begin{cases}
\wt{\eta}_{ik} + \df{\phi{(-\wt{\eta}_{ik})}}{1 - \Phi{(-\wt{\eta}_{ik})}},& \IBy_{ik} = 1\\
\wt{\eta}_{ik} - \df{\phi{(-\wt{\eta}_{ik})}}{\Phi(-\wt{\eta}_{ik})}, & \text{otherwise} 
\end{cases} 
\labelit \label{eqn:IB_CAVI_for_CB_Probit_taking_expectation_of_aux_var}
\end{align*} 
}by properties of the truncated normal distribution (Section \ref{sec:normal_plus_and_normal_minus}).   Recall that $\phi$ and $\Phi$ refer to the pdf and cdf,  respectively,  of the standard normal. 

\paragraph{Updates to $ \set{q(z_{ik})}_{ik}$}

In \eqref{eqn:complete_conditions_for_IB_probit_with_augmentation},  we saw that the surrogate complete conditional for each $z_{ik}$ has the form $\TruncatedNormal(\eta_{ik}, 1, \Upsilon_{ik})$,  where $\Upsilon_{ik}$ is defined as in \eqref{eqn:trunation_region_for_aux_var_in_hierarchical_CFB_probit}. But since each such distribution is in the exponential family with natural parameter $\eta_{ik}$,  the optimal update for each $q(z_{ik})$ is given by
{\footnotesize
\begin{align*}
\wt{\eta}_{ik} =\E[\+x_{i}^T \+\beta_{k}] = \+x_{i}^T \qvecmu_{k} 
\labelit \label{eqn:IB_cavi_update_for_expected_linear_predictor_from_CB_Probit} 
\end{align*} 
}

\subsubsection{The evidence lower bound}

We provide the evidence lower bound for the \IB-Probit model, $\ELBO_\IB$, in the case of independent $\N(\+\mu_0 = \+0,  \+\Sigma_0 = \+I)$ priors on $\+\beta_k$ for $k=1,...,K$.  Using $\IBmatrixY \in \set{0,1}^{N \times K}$ to represent the matrix whose $i$th row is the one-hot encoded vector $\IBvecy_i = \+e(y_i)$, we have 
\begin{align*}
&\ELBO(q)  = \explaintermbrace{energy}{\E_q [\log p(\IBmatrixY,  \+Z, \+\beta)]} + \explaintermbrace{entropy}{- \E_q [\log q (\+X,  \+B )]}\\
&= \ds\sum_{k=1}^K  \bigg[ \explaintermbrace{(A)}{\ds\sum_{i=1}^N \E_q [\log p(\IBy_{ik},  z_{ik} \cond \+\beta_k) ]} \; + \; \explaintermbrace{(B)}{\E_q [ \log p(\+\beta_k)]} \; + \\
&\; \explaintermbrace{(C)}{-\ds\sum_{i=1}^N \E_q [\log q(z_{ik})]} \; + \; \explaintermbrace{(D)}{ - \E_q \log [q(\+\beta_k)]} \bigg]
\labelit \label{eqn:elbo_high_level_IB_probit}
\end{align*}

Term (A) is given by a sum whose $i$th summand is
{\scriptsize
\[
\begin{array}{l}
 \E_q [\log p(\IBy_{ik},  z_{ik} \cond \+\beta_k) ]\\
 =  \E_q \bigg [\log \bigg\{  \df{1}{\sqrt{2 \pi}} \exp \bigg( - \half (z_{ik} - \+x_i^T \+\beta_k)^2\bigg)  \bigg( \indicate{z_{ik} < 0}^{\indicate{\IBy_{ik}=0}}  \indicate{z_{ik} \geq 0}^{\indicate{\IBy_{ik}=1}} \bigg) \bigg\} \bigg] \\
 = -\half \log 2\pi - \half \E (z_{ik} - \+x_i^T \+\beta_k)^2 \\
 \quad \quad \quad \quad \quad - \cancel{\half \E_q \bigg[ \indicate{\IBy_{ik}=0} \log  \indicate{z_{ik} < 0} + \indicate{\IBy_{ik}=1} \log  \indicate{z_{ik} \geq 0}  \bigg]} \\
 \\
 =  -\half \log 2\pi - \half \E_q [z_{ik}^2] + \E_q[z_{ik}] \+x_i^T \E_q[\+\beta_k] - \half \E_q [(\+x_i^T \+\beta_k)^2]  \\
   \stackrel{1}{=}  -\half ( \log 2\pi +1 ) - \half \wt{\eta}_{ik} \E_q [z_{ik}] + \E_q[z_{ik}] \wt{\eta}_{ik} - \half \E_q [(\+x_i^T \+\beta_k)^2]  \\
  \stackrel{2}{=}  -\half ( \log 2\pi +1 ) + \half \E_q[z_{ik}] \wt{\eta}_{ik}   - \half \bigg(\+x_i^T \wt{\+\Sigma_k} \+x_i + \wt{\eta}_{ik}^2  \bigg)  \\
 \stackrel{3}{=}  -\half ( \log 2\pi +1 ) +  \half \wt{\eta}_{ik} \delta_{\IBy_{ik}} (\wt{\eta}_{ik}) -\half \+x_i^T \wt{\+\Sigma_k} \+x_i 
 \end{array} \]
where 
\begin{align*}
\delta_{\IBy_{ik}} (\wt{\eta}_{ik}) :& =
\begin{cases}
\df{\phi(-\wt{\eta}_{ik})}{1 - \Phi(-\wt{\eta}_{ik})} ,  & \IBy_{ik} = 1\\
- \df{\phi(-\wt{\eta}_{ik})}{\Phi(-\wt{\eta}_{ik})} ,  & \IBy_{ik} = 0
\end{cases}
\end{align*}
}Equation (1) holds by Eq.~\eqref{eqn:IB_cavi_update_for_expected_linear_predictor_from_CB_Probit}  and by Eq.~\eqref{eqn:second_moment_for_normal_plus_or_minus}, (2) holds by applying the second moment decomposition $\E_q[W^2] = \Var_q[W] + \E_q^2[W]$ in the case where $W= \+x_i^T \+\beta_k$,  and (3) holds by applying Eq.~\eqref{eqn:mean_of_normal_plus_or_minus_as_perturb} to express $\E_q[z_{ik}]$ as a perturbation of $\wt{\eta}_{ik}$.  Recall that $\phi$ and $\Phi$ are the pdf and cdf,  respectively,  of the standard normal distribution.


Term (B) is the negative cross-entropy of two Gaussians.  With a $\N(\+0, \+I)$ prior, we have 
\begin{align*}
\E_q [\log p(\+\beta_k) ] &= -\df{M}{2} \log (2\pi) - \half \E_q [\+\beta_k^T \+\beta_k]  \\
& \stackrel{1}{=}  -\df{M}{2} \log (2\pi)  -\half \big( \trace (\qSigma_k) + \qvecmu_k^T \qvecmu_k \big)
\end{align*}
where (1) holds since $\E_q [\+\beta_k^T \+\beta_k] = \ds\sum_{m=1}^M \E_q [\beta_{km}^2] =\ds\sum_{m=1}^M  \Var_q[\beta_{km}] + \E_q^2[\beta_{km}]$.

Term (C) is the sum of entropies of truncated normal distributions,  where the $i$-th element in the sum is

\begin{itemize}
 \item  the entropy of $\N_+(\+x_i^T \wt{\+\mu}_k,  1)$ when $\IBy_{ik}=1$,   in which case the entropy is given by Eq.~\eqref{eqn:entropy_of_normal_plus} 
 \item the entropy of  $\N_-(\+x_i^T \wt{\+\mu}_k,  1)$ when $\IBy_{ik}=0$,  in which case the entropy is given by  Eq.~\eqref{eqn:entropy_of_normal_minus}.  
 \end{itemize}

Term (D) is the entropy of the multivariate Gaussian $\N(\qvecmu_k, \qSigma_k)$, which is given by 
\[  - \E_q \log [q(\+\beta_k)] = \half \ln | \qSigma_k | + \df{M}{2} ( 1 + \ln 2\pi) \]

\subsection{Computational complexity}  \label{sec:computational_complexity_for_IB_probit}
The inference requires a one-time up-front computation of complexity $\mathcal{O}(M^3 + M^2N)$ due to the inversion in step Eq.~\eqref{eqn:IB_cavi_update_for_covariance_of_beta_k_from_CB_Probit}. Note that often the matrix $\+X$ will be sparse, in which case the complexity of this up-front step can be reduced.  Note that this up-front inversion could be avoided (at the cost of losing information about correlations across the $M$ covariates) by tweaking the variational family in Eq.~\eqref{eqn:mf_variational_family_for_IB_probit} to make a stronger (\textit{fully-mean field}) variational assumption  $q(\+B) = \prod_{m=1}^M  \prod_{k=1}^K  q(\beta_{mk})$, where each  $q(\beta_{mk})$ is the density of a univariate Gaussian $\N(\wt{\mu}_{mk},  \wt{\sigma}^2_{mk})$.  Inference would proceed identically as before, except that the variational covariance for the $k$th category $\wt{\+\Sigma}_k$ would become a \textit{diagonal} $M \times M$ covariance matrix whose $m$th entry is given by $\wt{\sigma}^2_{mk}= [ (\+\Sigma_0)_{mk} + \sum_{i=1}^N x_{im}^2]^{-1}$. With this simplification, the up-front computation would have complexity $\mathcal{O}(MNK)$.

Afterwards, the computational complexity for a single CAVI update is $\mathcal{O}(MNK)$, where $M$ is the number of covariates, $N$ is the number of samples, and $K$ is the number of categories. 
The complexity for each substep of a single CAVI update is given in the Table~\ref{tab:computational_complexity_of_IB_CAVI_for_Probit}	.   

\begin{table}[h!]
{\scriptsize
\begin{tabular}{r|l|l|l}
\textbf{Variable}&  \textbf{Step} & \textbf{Per-iteration} & \textbf{Note} \\
&& \textbf{complexity} & \\
\toprule 
$\+B$ & covariance  & pre-multiplied & $\wt{\+\Sigma}_k = \explainterm{MxM}{\+\Sigma}$ is the same for all $k$ \\
&\eqref{eqn:IB_cavi_update_for_covariance_of_beta_k_from_CB_Probit}  && and constant over iterations.\\
$\+B$ & mean  & $\mathcal{O}(MNK)$ & $\explainterm{MxK}{\wt{\+\mu}} =  \explainterm{MxM}{\+\Sigma}  \explainterm{MxN}{\+X^T} \, \explainterm{NxK}{\E_q[\+Z]}$ \\
&\eqref{eqn:IB_cavi_update_for_mean_of_beta_k_from_CB_Probit}&& the first two terms can be \\
&&& pre-multiplied  \\
$\+Z$ & \eqref{eqn:IB_cavi_update_for_expected_linear_predictor_from_CB_Probit} & $\mathcal{O}(MNK)$ & $\explainterm{NxK}{\wt{\+\eta}} = \explainterm{NxM}{\+X}\explainterm{MxK}{\wt{\+\mu}}$  \\
$\+Z$  &\eqref{eqn:IB_CAVI_for_CB_Probit_taking_expectation_of_aux_var} & $\mathcal{O}(NK)$ & We suppress the complexity of \\
&&&evaluating the \\
&&&Gaussian cdf. 
\end{tabular}
}
\caption{\textit{The computational complexity of CAVI updates for \IB-Probit.} $\+B$ is the matrix of regression weights and $\+Z$ are auxiliary variables added for conditional conjugacy.}
\label{tab:computational_complexity_of_IB_CAVI_for_Probit}	
\end{table}

Moreover, the entire inference procedure (across all iterations) is embarassingly parallel over the $K$ categories, so distributed computation can reduce the complexity for the all CAVI steps to $\mathcal{O}(MNI)$, where $I$ is the number of CAVI iterations.

\subsection{Sparsity considerations} \label{sec:sparsity_considerations_for_IB_probit}
When $N \times K$ is large, the matrix $\wt{\+\eta}$ may not fit into memory.  However, when the design matrix $\+X$ is sparse \red{(TODO: add interpretation about what would cause the expected value of beta to stay at 0)}, $\wt{\+\eta}$ may be highly sparse ; indeed, $\wt{\eta}_{ik}=0$ whenever at least one of $\set{\wt{\+\mu}_{mk}, \+X_{im}}$ is 0 for all $m=1,...,M$.    In this setting, we can represent $\E_q[\+Z]$ efficiently, since we can see from Eq.~\eqref{eqn:IB_CAVI_for_CB_Probit_taking_expectation_of_aux_var} that only two values are possible when $\eta_{ik}=0$.
\begin{align}
\E_q[z_{ik}] = 
\begin{cases}
2 \phi(0),& \eta_{ik}=0, y_i = k\\
-2 \phi(0),& \eta_{ik}=0, y_i \neq k
\end{cases} 
\label{eqn:cavi_update_expected_aux_variable_under_sparsity}
\end{align}

Define
\begin{align*}
\E_q[\+Z]^* &: (\E_q[\+Z]^*)_{ik} = 
	\begin{cases}	
(\E_q[\+Z])_{ik}, & \eta_{ik} \neq 0 \\
	0, & \eta_{ik} =0 \\
	\end{cases} \\
\E_q[\+Z]^\dagger &: (\E_q[\+Z]^\dagger)_{ik} = 
	\begin{cases}	
1, & \eta_{ik} = 0, y_i =k \\
	0, & \text{otherwise}\\
	\end{cases} \\
\E_q[\+Z]^\ddagger &: (\E_q[\+Z]^\ddagger)_{ik} = 
	\begin{cases}	
1, & \eta_{ik} = 0, y_i \neq k \\
	0, & \text{otherwise}\\
	\end{cases} 
\end{align*}

Then we can avoid representing $\E_q[\+Z]$ as a large dense $N \times K$ matrix of floats by rewriting Eq.~\eqref{eqn:IB_CAVI_for_CB_Probit_taking_expectation_of_aux_var} in matrix form as  
\[ E_q[\+Z] =  \E_q[\+Z]^* + 2 \phi(0) \bigg( \E_q[\+Z]^\dagger - \E_q[\+Z]^\ddagger \bigg) \]

\section{Variational inference for \CB-Logit Models} \label{sec:VI_for_CB_logit_models}

Here we present closed-form variational inference for \CB-Logit models. The inference follows naturally from our IB-CAVI procedure in Algorithm 1.  

\subsection{Distributional preliminaries}

\begin{definition}
A non-negative random variable $X$ has a \textit{\polyaGamma distribution} \cite{polson2013bayesian} with parameters $b>0$ and $c \in \R$, denoted as $X \sim \PG(b,c)$, if 
\[X \stackrel{D}{=} \df{1}{2 \pi^2} \ds\sum_{r=1}^\infty \df{\gamma_r}{(r- 1/2)^2 + c^2/(4\pi^2)} \]
where the $\gamma_r \sim \GammaDist(b,1)$ are independent Gamma random variables,  and where $\stackrel{D}{=}$ indicates equality in distribution. 
\end{definition}

The density of a $\PG(b,c)$ random variable can be written as \cite{polson2013bayesian}:
\begin{align*}
f(x; b, c) =  \cosh^b(c/2) e^{-\frac{c^2}{2} x} g(x; b,  0)
\labelit \label{eqn:density_of_polya_gamma_with_parameters_b_and_c}
\end{align*}
where $g(x; b, 0)$ is the density of a $\PG(b,0)$ random variable
\[g(x; b,  0)  = \df{2^{b-1}}{\Gamma(b)} \ds\sum_{n=0}^\infty  (-1)^n \df{\Gamma(n+b)}{\Gamma(n+1)}  \df{(2n+b)}{\sqrt{2 \pi x^3}} e^{-\frac{(2n+b)^2}{8x}}\]

So $f$ is constructed from $g$ via an \textit{exponential tilt} and a renormalization. 

The expectation of a \polyaGamma random variable is given by \cite{polson2013bayesian} as
\begin{align*}
\E [X] = \df{b}{2c} \text{tanh}(c/2) = \df{b}{2c} \df{e^c -1}{e^c + 1}
\end{align*}

\subsection{Models}
\subsubsection{\CB-Logit Model} \label{sec:CB_logit_model}

A Bayesian \CB-Logit model is a categorical GLM which generates  multi-class outcomes $y_i \in \set{1,...,K},  \,  i=1,...,N$ by
{ \footnotesize 
	\begin{subequations}
	\begin{align}
	\+\beta_{k} & \iid \N(\+\mu_0,  \+\Sigma_0) \\ 
	 \quad p_{ik} & = \text{any \CB-Logit category probabilities} \label{eqn:choice_of_CFB_category_probs_in_the_CFB_Logit_model} \\
	y_{i} &\sim \text{Cat}(p_{i1}, \ldots p_{iK}).
	\end{align}\label{eqn:CB_probit_model}
	\end{subequations}
}The form for the category probabilities in Eq.~(\ref{eqn:choice_of_CFB_category_probs_in_the_CFB_Logit_model}) depends on the choice of \CB~model; for instance, for the \CBM-Logit and \CBC-Logit models we have 
\begin{align*}
p^{\CBM-\text{Logit}}_{ik} & =
		\frac{ \LogisticCDF( \+x_{i}^T \+\beta_{k} ) }
		{ \sum_{\ell=1}^K \LogisticCDF( \+x_{i}^T \+\beta_{\ell} ) } \\
p^{\CBC-\text{Logit}}_{ik} & =
		\frac{\LogisticCDF( \+x_{i}^T \+\beta_{k} )  \prod_{h \neq k} \big(1-\LogisticCDF( \+x_{i}^T \+\beta_{h} )\big)  }
		{ \sum_{\ell=1}^K \LogisticCDF( \+x_{i}^T \+\beta_{\ell} ) \prod_{h \neq \ell} \big(1- \LogisticCDF( \+x_{i}^T \+\beta_{h} ) \big)} 	
\end{align*}
for  standard Logistic cdf $\LogisticCDF$,   known covariates $\+x_i \in \R^M$, and  unknown parameters $\+B \in \R^{M  \times K}$ ( $\+\beta_k$ is used to designate the $k$-th column of $\+B$).
  
 \subsubsection{\IB-Logit Model}
 \label{sec:IB_logit_base_model} 
 
 The base model for a \CB-Logit model is an \IB-Logit model.  With a Gaussian prior, the model is:
\begin{align*}
\+\beta_k &\iid \N(\+\mu_0,  \+\Sigma_0),  \quad k=1,...,K \\
\IBy_{ik} \cond \+\beta_k &\indsim \Bernoulli \big(  \LogisticCDF(\+x_i^T \+\beta_k) \big),  \quad i=1,...,N \\
\labelit \label{eqn:IB_logit}
\end{align*}
for known binary responses $\IBy_{ik}$,  known covariates $\+x_i \in \R^M$ and unknown parameters $\+B \in \R^{M  \times K}$ ( $\+\beta_k$ is used to designate the $K$-th column of $\+B$).  We write $\IBmatrixY \in \set{0,1}^{N \times K}$ to represent the matrix with one-hot encoded rows such that $\IBmatrixY_{i,k} = 1$ if the $i$th outcome was the $k$th category (i.e. if $y_i=k$), and $\IBvecy_k$ to represent the $k$th column of $\IBmatrixY$.

 \subsubsection{Augmented \IB-Logit Model} \label{sec:augmented_IB_probit_model}
 The main idea  is to introduce auxiliary latent variables $\+\omega_k = (\omega_{1k},  ...,  \omega_{Nk})$ with Polya-Gamma distribution to make the model of Eq.~\eqref{eqn:IB_logit} fully conditionally conjugate.       The model is fully conditionally conjugate in the sense that the complete conditionals and the priors form conjugate pairs;  that is $p(\+\beta_k \cond \+w_k,  \IBvecy_k)$ is in the same family  (Gaussian) as $p(\+\beta_k)$,  and each $p(w_{ik} \cond \+\beta_k,  \IBvecy_k)$ is in the same family $(\PG)$ as $p(w_{ik})$.  Thus,  inference on the augmented model is easy.    Marginalizing over these auxillary variables in the posterior distribution yields the desired target posterior on $\+B = (\+\beta_1, ..., \+\beta_K)$.  We use $\+\Omega \in \R^{N \times K}$ to represent the matrix whose $k$th column is $\+\omega_k$.  

We now form the augmented model.  Conditional on each $\+\beta_k$,  we take $\set{(\IBy_{ik},  \omega_{ik})}_{i=1}^N$ to be independent random pairs such that $\IBy_{ik}$ and $\omega_{ik}$ are also independent,  where
\begin{align*}
\+\beta_k &\iid \N(\+\mu_0, \+\Sigma_0 ), \quad k=1,...,K \\
\IBy_{ik} \cond \+\beta_k &\indsim \Bernoulli \bp{ \frac{\exp \{\+x_i^T \+\beta_k \}}{1+\exp \{\+x_i^T \+\beta_k \}}},  \quad i=1,...,N \\
\omega_{ik} \cond \+\beta &\indsim  \PG(1,  \+x_i^T \+\beta_k)\\
\labelit \label{eqn:IB_model_with_pga}
\end{align*}
The \textit{augmented posterior density} for the $k$th binary logistic regression is given by 
\[ p(\+\beta_k, \+\omega_k \cond \IBvecy_k) \propto  \bb{\ds\prod_{i=1}^N  p(\IBy_{ik} \cond \+\beta_k) p(\omega_{ik} \cond \+\beta_k)} p(\+\beta_k)\]
And clearly
\[  \ds\int_{\R_+^N}   p(\+\beta_k, \+\omega_k \cond \IBvecy_k) d\+\omega_k = p(\+\beta_k \cond \IBvecy_k) \]
which is the target posterior density for the $k$ binary logistic regression.

A straightforward argument (see Section \ref{sec:derivation_complete_conditionals_bayes_binom_reg_with_pga}) reveals that the complete conditionals for the $k$th binary logistic regression are given by
\begin{subequations}
\begin{align*}
(\omega_{ik} \cond \+\beta_k) &\sim \PG(1,  \+x_i^T \+\beta_k) 
 \labelit \label{eqn:gibbs_update_to_omega_i_under_pg_augmentation}\\
(\+\beta_k \cond \IBvecy_k,  \+\omega_k) &\sim \N(  \+\mu_{\+\omega_k},   \+\Sigma_{\+\omega_k}) 
 \labelit \label{eqn:gibbs_update_to_beta_under_pg_augmentation}
\end{align*}
\label{eqn:gibbs_sampler_bayesian_logistic_regression_with_pg_augmentation}
\end{subequations}
where
\begin{align*}
\+\Sigma_{\+\omega_k} &= \bp{\+X^T \+W_{\+\omega_k} \+X + \+\Sigma_0^{-1} }^{-1} 
 \labelit \label{eqn:gibbs_sampler_update_Sigma} \\
\+\mu_{\+\omega_k} &= \+\Sigma_{\+\omega} \bp{ \+X^T \+\kappa_k + \+\Sigma_0^{-1} \+\mu_0} 
 \labelit \label{eqn:gibbs_sampler_update_mu}
 \end{align*}
and
\begin{subequations}
\begin{align}
\+\kappa_k  &= (\IBy_{1k} - \half, ..., \IBy_{Nk} - \half)^T  \label{eqn:def_of_vec_kappa_k}\\
\+W_{\+\omega_k} & \text{\,  is the diagonal matrix of  $\omega_{ik}$'s} 
\end{align}
\label{eqn:def_of_kappa_and_Omega}
\end{subequations}

Another straightforward argument (see Section 2 of \cite{choi2013polya}) reveals that the complete conditionals \eqref{eqn:gibbs_sampler_bayesian_logistic_regression_with_pg_augmentation} for the Bayesian logistic regression model under \pga form a valid Gibbs sampler.

\subsection{Variational inference}
We are able to easily construct a mean-field variational inference algorithm using these complete conditionals \eqref{eqn:gibbs_sampler_bayesian_logistic_regression_with_pg_augmentation},  since the complete conditionals are in the exponential family.   Algorithm~\ref{alg:ib_cavi_for_cb_logit} provides closed-form coordinate ascent variational inference (CAVI) for the augmented \IB-Logit model. By Eq.~\eqref{eqn:surrogate_objective_for_CB_models}, this gives closed-form CAVI for any \CB-Logit model. 

\renewcommand{\thealgocf}{3}
   
\begin{algorithm2e*}
\SetAlgoLined
 \SetKwInOut{Input}{Input}
 \SetKwInOut{HyperParams}{HyperParams}
 \SetKwInOut{Output}{Output}
 \SetKwRepeat{Do}{do}{while}

\begin{table}[H]
\vspace{-3.5mm}
\begin{tabular}{ll}
 \makecell[tl]{\textbf{Input:} \\ 
  $\+y \in \{1,\hdots,K\}^N$: A vector of $N$ conditionally independent  \\     \quad responses from $K$ categories \\ 
    $\+X \in \R^{N \times M}$: A matrix whose $i$th row ($i=1,\hdots,N$) \\
        \quad gives the covariates associated with response $y_i$.  \\
    \textbf{Output:} \\  
    $\set{(\qvecmu_{k},\qSigma_{k})}_{k=1}^K$: Parameters for $q(\+B) = \prod_{k} \N (\+\beta_k \cond \qvecmu_k, \qSigma_k)$,  \\ \quad variational density (Eq.~\eqref{eqn:variational_family_ib_logit})  on regression weights$^\dag$  \\  
    $\set{\set{\wt{c}_{ik}}_{k=1}^K}_{i=1}^N$: Parameters for $q(\+\Omega) = \prod_i \prod_{k}  \text{PG} (\omega_{ik} \cond 1, \qcik)$, \\ \quad variational density  (Eq.~\eqref{eqn:variational_family_ib_logit}) on auxiliary variables$^\dag$  \\
\hfill \tinytext{$^\dag$: Here $\N$ and $\text{PG}$ refer to \textit{densities} rather than measures.}
    } 

    &
    \makecell[tl]{\textbf{Hyperparameters / Settings:} \\
     $(\+\mu_{0}, \+\Sigma_0)$ : Mean and covariance of prior on weights \\ \quad $\pi(\+B) = \prod_{k} \mathcal{N}(\+\beta_k | \+\mu_0, \+\Sigma_0)$ \\
     $\set{(\qvecmu_{k}^{(0)}, \qSigma_{k}^{(0)})}_{k=1}^K$ : Initial variational parameters \\ \quad on regression weights \\
     \texttt{Termination condition} : e.g. number of iterations \\
    \quad or convergence threshold on $\ELBO_{\IB-\text{Logit}}$
    } 
\end{tabular}
\end{table}

\For{$k \gets 1$ \KwTo $K$}{
$\+\kappa_k  \gets (\IBy_{1k} - \half, ..., \IBy_{Nk} - \half)^T$ \atcp{where $\IBy_{ik}:=1$ iff $y_{i}=k$ (Sec.~\ref{sec:IB_logit_base_model})} 
\While{termination condition not satisfied} {

\For{$i \gets 1$ \KwTo $N$}{

	$\qcik \gets  \bp{ \+x_i^T \qSigma_k \+x_i + ( \+x_i^T \qvecmu_k)^2}^{1/2}$    \atcp{Update $q (\omega_{ik} \cond 1, \qcik)$ via Eq.~\eqref{eqn:IB_cavi_updates_for_c_ik_from_CB_Logit}}

	$\E_{q}[\omega_{ik}] \gets  \df{1}{2\qcik} \df{e^{\qcik} -1}{e^{\qcik} + 1}$\atcp{Expectation computed via Eq.~\eqref{eqn:diagonal_matrix_of_expected_pga_variables_for_kth_category}}
}
$\+W_k \gets$  diag. matrix from  $(\E_{q}[\omega_{ik}])_{i=1}^N$; \\
$ \qSigma_k \gets \bp{\+X^T \+W_k \+X + \+\Sigma_0^{-1} }^{-1} $\atcp{Update $q( \+\beta_k | \cdot, \qSigma_k)$ via Eq.~\eqref{eqn:IB_cavi_updates_for_beta_k_covariance_from_CB_Logit}}
$\qvecmu_k \gets \qSigma_k  \bp{ \+X^T \+\kappa_k + \+\Sigma_0^{-1} \+\mu_0} $\atcp{Update $q( \+\beta_k | \qvecmu_k, \cdot)$ via Eq.~\eqref{eqn:IB_cavi_updates_for_beta_k_mean_from_CB_Logit} }
 (Optional) Compute  $\ELBO_{\IB-\text{Logit}}$ via \eqref{eqn:elbo_for_IB_Logit_with_pga}
 }
 }
 

 
 \caption{Independent Binary Coordinate Ascent Variational Inference (\IB-CAVI) for \CB-Logit models}\label{alg:ib_cavi_for_cb_logit}
\end{algorithm2e*}

\vspace{1mm}


\begin{proposition}
\label{prop:vi_for_bayes_logreg_with_pga}
An optimal mean-field coordinate ascent variational inference algorithm for estimating the posterior of the  \IB-Logit model with \pga (Eq.~\eqref{eqn:IB_model_with_pga}) by using a member of the variational family whose density factorizes as 
\begin{align}
 q(\+\Omega, \+B ) =  q(\+\Omega)  q(\+B )
\label{eqn:mf_family_for_IB_Logit_with_polya_gamma_augmentation}
\end{align}
can be obtained by taking the variational family to have the further factorization  
\begin{align}
 q(\+\Omega, \+B) = \prod_{k=1}^K \, \explaintermbrace{\; $\N (\+\beta_k ; \qvecmu_k, \qSigma_k)$}{q(\+\beta_k)}  \ds\prod_{i=1}^N \explaintermbrace{\; $\text{PG} (\omega_{ik} ; \qbik, \qcik)$}{q(\omega_{ik})} 
\label{eqn:variational_family_ib_logit}
\end{align}
with parameter updates given by
\begin{subequations}
\begin{align}
\qbik &= 1 \\
\qcik &=  \bp{ \+x_i^T \qSigma_k \+x_i + ( \+x_i^T \qvecmu_k)^2}^{1/2}  \label{eqn:IB_cavi_updates_for_c_ik_from_CB_Logit}
\end{align}
\label{eqn:IB_cavi_updates_for_omega_ik_from_CB_Logit}
\end{subequations}

\begin{subequations}
\begin{align}
\qSigma_k &= \bp{\+X^T \+W_{\E_{q}[\+\omega_k]} \+X + \+\Sigma_0^{-1} }^{-1} \label{eqn:IB_cavi_updates_for_beta_k_covariance_from_CB_Logit} \\
\qvecmu_k &= \qSigma_k  \bp{ \+X^T \+\kappa_k + \+\Sigma_0^{-1} \+\mu_0} \label{eqn:IB_cavi_updates_for_beta_k_mean_from_CB_Logit} 
\end{align}
\label{eqn:IB_cavi_updates_for_beta_k_from_CB_Logit}
\end{subequations}

where $\+W_{\E_{q}[\+\omega_k]}$ is the $N \times K$ diagonal matrix with diagonal entries
\begin{align}
\E_{q}[\omega_{ik}] =  \df{\qbik}{2\qcik} \df{e^{\qcik} -1}{e^{\qcik} + 1}
\label{eqn:diagonal_matrix_of_expected_pga_variables_for_kth_category}
\end{align}
and where $\+\kappa_k$ was defined in Eq.~\eqref{eqn:def_of_vec_kappa_k}.
\end{proposition}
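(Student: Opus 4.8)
The plan is to apply the general coordinate-ascent recipe of Algorithm~1, specialized through Eq.~\eqref{eqn:IB_CAVI_for_EFCC}: since the complete conditionals of the augmented \IB-Logit model are already in hand (Eq.~\eqref{eqn:gibbs_sampler_bayesian_logistic_regression_with_pg_augmentation}) and lie in exponential families (Gaussian for $\+\beta_k$, Pólya-Gamma for $\omega_{ik}$), each optimal variational factor lies in the same family, with natural parameter equal to the $q_{-v}$-expectation of the conditional's natural parameter. First I would argue that the posited factorization in Eq.~\eqref{eqn:variational_family_ib_logit} requires no extra assumption beyond the mean-field split $q(\+\Omega,\+B)=q(\+\Omega)q(\+B)$: because both the prior and the augmented likelihood factorize across categories $k$ and, within each category, across observations $i$, the optimal $q$ inherits this product structure (exactly as in the factorization argument of Sec.~\ref{sec:procedure_for_posterior_estimation}).

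For the weights $\+\beta_k$, I would reparametrize the Gaussian complete conditional (Eqs.~\eqref{eqn:gibbs_sampler_update_Sigma}--\eqref{eqn:gibbs_sampler_update_mu}) by its natural parameters, namely the precision $\+\Sigma_0^{-1}+\+X^T\+W_{\+\omega_k}\+X$ and the precision-weighted mean $\+X^T\+\kappa_k+\+\Sigma_0^{-1}\+\mu_0$. Taking the expectation under $q(\+\Omega)$ and using that $\+W_{\+\omega_k}$ is diagonal in the $\omega_{ik}$, linearity of expectation replaces $\+W_{\+\omega_k}$ by $\+W_{\E_q[\+\omega_k]}$, while the precision-weighted mean is free of $\+\omega$ and is unchanged. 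Inverting back to standard parameters yields Eqs.~\eqref{eqn:IB_cavi_updates_for_beta_k_covariance_from_CB_Logit}--\eqref{eqn:IB_cavi_updates_for_beta_k_mean_from_CB_Logit}.

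The delicate step, and the main obstacle, is the factor for $\omega_{ik}$, whose complete conditional is $\PG(1,\+x_i^T\+\beta_k)$. Writing the tilted density of Eq.~\eqref{eqn:density_of_polya_gamma_with_parameters_b_and_c}, I would read off that, as a function of $\omega_{ik}$, a $\PG(b,c)$ law is exponential-family with a fixed base measure indexed by $b$ and natural parameter $-c^2/2$ attached to $\omega_{ik}$. The optimal factor therefore keeps $\qbik=1$ and has natural parameter $\E_{q(\+\beta_k)}[-\tfrac12(\+x_i^T\+\beta_k)^2]$; matching this to the $\PG(1,\qcik)$ natural parameter $-\qcik^2/2$ forces
\begin{align*}
\qcik^2 = \E_{q}\big[(\+x_i^T\+\beta_k)^2\big] = \+x_i^T\qSigma_k\+x_i + (\+x_i^T\qvecmu_k)^2,
\end{align*}
the last equality being the second-moment decomposition under the Gaussian $q(\+\beta_k)$. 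This gives Eq.~\eqref{eqn:IB_cavi_updates_for_c_ik_from_CB_Logit}. Finally, substituting $(\qbik,\qcik)=(1,\qcik)$ into the Pólya-Gamma mean $\tfrac{b}{2c}\tfrac{e^{c}-1}{e^{c}+1}$ recorded in the distributional preliminaries produces Eq.~\eqref{eqn:diagonal_matrix_of_expected_pga_variables_for_kth_category}, closing the argument. The one point demanding care is confirming that the tilted representation isolates $-c^2/2$ as the \emph{only} $c$-dependent natural parameter while $b$ survives solely in the base measure, so that the variational update cannot move $\qbik$ away from $1$.
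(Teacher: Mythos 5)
Your proposal is correct and follows essentially the same route as the paper's proof: exploit the exponential-family form of the complete conditionals, apply the CAVI natural-parameter-expectation rule of Eq.~\eqref{eqn:IB_CAVI_for_EFCC} (with the extra factorization across $k$ and $i$ coming for free from the model's independence structure), and match $\qcik^2$ to $\E_q[(\+x_i^T\+\beta_k)^2]$ via the second-moment decomposition. The only cosmetic differences are that the paper takes $c_{ik}^2$ (rather than $-c_{ik}^2/2$) as the natural parameter and explicitly remarks that the positive square root suffices because the $\PG(1,c)$ density is symmetric in $c$; your closing observation that $b$ enters only through the base measure, pinning $\qbik=1$, is the same justification the paper uses implicitly.
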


\begin{proof}   The complete conditionals  \eqref{eqn:gibbs_sampler_bayesian_logistic_regression_with_pg_augmentation}  are in the exponential family.   For the Gaussian this is well-known.   For the $\PG(1,  c_i )$ distribution,  this is immediate from Eq.~\eqref{eqn:density_of_polya_gamma_with_parameters_b_and_c}.   Due to the membership of the complete conditionals in the exponential family,  we can can apply Eq.~\eqref{eqn:IB_CAVI_for_EFCC} to determine that the optimal variational updates are in the same exponential family,  with parameters given below.   The additional independence structure in the variational family is obtained without further approximation by application of a known recipe \citep{blei2017variational}.


\paragraph{Updating the variational distribution on $\+\beta$.}  By Eq.~\eqref{eqn:IB_CAVI_for_EFCC},  the optimal variational distribution at any update is Normal.    The natural parameters of $\N(\+a, \+B)$ are given by
\begin{equation}
\qeta(\+a, \+B) = (\+B^{-1},  \+B^{-1} \+a) := (\qeta^\N_1,  \qeta^\N_2).
\label{eqn:gaussian_conversion_to_natural_parameters}
\end{equation}
 For the variational normal distribution on $\+\beta_k$,  we find that the the first coordinate of the natural variational parameter is given by:
\begin{align*}
\qeta^\N_1 &=  \E_{q_{-\+\beta_k}} [ \+\eta^\N_1] \stackrel{\eqref{eqn:gaussian_conversion_to_natural_parameters}}{=}  \E_{q_{-\+\beta_k}} [\+\Sigma_{\+\omega_k}^{-1} ] \\
&\stackrel{\eqref{eqn:gibbs_sampler_update_Sigma}}{=}  \E_{q_{-\+\beta_k}} [\+X^T \+W_{\+\omega_k} \+X + \+\Sigma_0^{-1}] \\
&= \+X^T \+W_{\E_q[\+\omega_k]} \+X + \+\Sigma_0^{-1} \\
\intertext{and therefore,  by inverting the natural parameter transformation \eqref{eqn:gaussian_conversion_to_natural_parameters}}
 \qSigma_k &= (\qeta^\N_1)^{-1} = \bp{ \+X^T \+W_{\E_q[\+\omega_k]} \+X + \+\Sigma_0^{-1}}^{-1}
\end{align*}

Similarly,  the second coordinate of the natural variational parameter is given by 
\begin{align*}
\qeta^\N_2 &=  \E_{q_{-\+\beta_k}} [ \+\eta^\N_2]  = \E_{q_{-\+\beta_k}} [\+\Sigma_{\+\omega_k}^{-1} \+\mu_{\+\omega_k} ] \\
&\stackrel{\eqref{eqn:gibbs_sampler_update_mu}}{=}  \E_{q_{-\+\beta_k}} [\+X^T \+\kappa_k + \+\Sigma_0^{-1} \+\mu_0 ] \\
&=\+X^T \+\kappa_k + \+\Sigma_0^{-1} \+\mu_0 \\
\intertext{and therefore,  by inverting the natural parameter transformation \eqref{eqn:gaussian_conversion_to_natural_parameters}}
 \qvecmu_k &=  (\qeta^\N_1)^{-1}  \qeta^\N_2 =  \qSigma_k  \bp{ \+X^T \+\kappa_k + \+\Sigma_0^{-1} \+\mu_0} 
\end{align*}

%
 
\paragraph{Updating the variational distribution on $\+\Omega$.}  By \eqref{eqn:gibbs_update_to_omega_i_under_pg_augmentation},  the complete conditional on each $\omega_{ik}$ has a \PG \, distribution.   Moreover,  
\begin{equation}
\+\eta_{ik}^{\PG}(c_{ik}) = c_{ik}^2
\label{eqn:polya_gamma_conversion_to_natural_parameters}
\end{equation}
is a natural parameter for the $\PG(1,  c_{ik} )$ distribution,  as is immediate from \eqref{eqn:density_of_polya_gamma_with_parameters_b_and_c}.

Thus,  we apply Eq.~\eqref{eqn:IB_CAVI_for_EFCC} to determine that the optimal variational update is also $\PG$ with natural parameter given by

\begin{align*}
\qeta_{ik}^{\PG} & \stackrel{Eq.~\eqref{eqn:IB_CAVI_for_EFCC}}{=} \E_{q_{-\omega_{ik}}} [\+\eta_{ik}^{\PG} ] \\
& \stackrel{\eqref{eqn:polya_gamma_conversion_to_natural_parameters}}{=} \E_{q_{-\omega_{ik}}} [c_{ik}^2 ] \\
&=   \Var_{q_{-\omega_{ik}}}[c_{ik}] + \E_{q_{-\omega_{ik}}} [c_{ik}]^2 \\
&\stackrel{\eqref{eqn:gibbs_update_to_omega_i_under_pg_augmentation}}{=}   \Var_{q_{\+\beta_k}} [\+x_i^T \+\beta_k] + \E_{q_{\+\beta_k}} [\+x_i^T \+\beta_k]^2 \\
&=\+x_i^T   \Var_{q_{\+\beta_k}} [\+\beta_k]  \, \+x_i  + (\+x_i^T \E_{q_{\+\beta_k}} [\+\beta_k])^2 \\
&=\+x_i^T   \qSigma_k \, \+x_i  + (\+x_i^T \qvecmu_k)^2 \\
\intertext{and therefore,  by inverting the natural parameter transformation \eqref{eqn:polya_gamma_conversion_to_natural_parameters}}
 \qc_{ik} &= \bp{\+x_i^T   \qSigma_k \, \+x_i  + (\+x_i^T \qvecmu_k)^2}^{1/2}
 \labelit \label{eqn:optimal_variational_parameter_for_bayesian_logreg_with_pga}
\end{align*}
where it suffices to take the positive square root since the density of $\PG(1,c)$ is symmetric around $c=0$.
\end{proof}

\subsubsection{The evidence lower bound}

Here we provide an expression for the ELBO.  

\begin{proposition}

The Evidence Lower Bound (ELBO) for the \IB-Logit with \pga (Eq.~\eqref{eqn:IB_model_with_pga}) when using the mean-field variational approximation \eqref{eqn:mf_family_for_IB_Logit_with_polya_gamma_augmentation} is given by
\begin{align*}
\ELBO[ q(\+B,  \+\Omega)] = \ds\sum_{k=1}^K \ELBO[ q(\+\beta_k,  \+\omega_k)]
\end{align*}
where
\begin{align*}
\ELBO&[ q(\+\beta_k,  \+\omega_k)] = \half d + \half \logdet{ \qSigma_k} + \half \logdet{\+\Sigma_0^{-1}} \\
&- \half (\qvecmu_k - \+\mu_0)^T \+\Sigma_0^{-1} (\qvecmu_k - \+\mu_0) - \half \tr (\+\Sigma_0^{-1} \qSigma_k) \\
&  + \ds\sum_{i=1}^N (\IBy_{ik} - \half) \, \+x_i^T \qvecmu_k - \log \big[ 1 +  \exp ( -\qcik) \big] - \half \qcik \\
\labelit \label{eqn:elbo_for_IB_Logit_with_pga} 
\end{align*}
and where $d$ is the number of rows of $\+B$.

\end{proposition}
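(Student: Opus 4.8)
The plan is to compute the ELBO directly from its definition, $\ELBO(q) = \E_q[\log p(\IBmatrixY, \+\Omega, \+B)] - \E_q[\log q(\+\Omega, \+B)]$, exploiting that both the augmented joint (Eq.~\eqref{eqn:IB_model_with_pga}) and the chosen variational family (Eq.~\eqref{eqn:variational_family_ib_logit}) factorize completely across the $K$ categories. This factorization splits the log-joint as $\log p = \sum_k [\log p(\+\beta_k) + \sum_i \log p(\IBy_{ik},\omega_{ik}\mid\+\beta_k)]$ and likewise for $\log q$, yielding $\ELBO[q(\+B,\+\Omega)] = \sum_{k=1}^K \ELBO[q(\+\beta_k,\+\omega_k)]$ with no cross-category coupling. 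It therefore suffices to evaluate a single category term, which I would organize into a ``weights'' group and an ``auxiliary/likelihood'' group.

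First I would collect $\E_q[\log p(\+\beta_k)] - \E_q[\log q(\+\beta_k)]$. Since the prior $p(\+\beta_k)=\N(\+\mu_0,\+\Sigma_0)$ and the variational factor $q(\+\beta_k)=\N(\qvecmu_k,\qSigma_k)$ are both Gaussian, this group is exactly $-\KL{q(\+\beta_k)}{p(\+\beta_k)}$. Applying the multivariate-Gaussian KL formula (Eq.~\eqref{eqn:kl_divergence_mvns}) and splitting the log-determinant ratio via $-\log|\+\Sigma_0| = \log|\+\Sigma_0^{-1}|$ reproduces the first line of the claimed expression, namely $\half d + \half\logdet{\qSigma_k} + \half\logdet{\+\Sigma_0^{-1}} - \half(\qvecmu_k-\+\mu_0)^T\+\Sigma_0^{-1}(\qvecmu_k-\+\mu_0) - \half\tr(\+\Sigma_0^{-1}\qSigma_k)$.

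Next I would handle the per-observation auxiliary/likelihood terms. Writing $\psi_{ik}=\+x_i^T\+\beta_k$ and $\kappa_{ik}=\IBy_{ik}-\half$, the Pólya--Gamma augmentation gives $\log p(\IBy_{ik},\omega_{ik}\mid\+\beta_k) = -\log 2 + \kappa_{ik}\psi_{ik} - \half\omega_{ik}\psi_{ik}^2 + \log g(\omega_{ik};1,0)$, whose $\omega$-marginal recovers the logistic Bernoulli mass (this is Eq.~\eqref{eqn:density_of_polya_gamma_with_parameters_b_and_c} with $c=\psi_{ik}$, using $\int e^{-\omega\psi^2/2}g(\omega;1,0)\,d\omega = 1/\cosh(\psi/2)$). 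The variational factor $q(\omega_{ik})=\PG(1,\qcik)$ contributes $\log\cosh(\qcik/2) - \half\qcik^2\omega_{ik} + \log g(\omega_{ik};1,0)$, again by Eq.~\eqref{eqn:density_of_polya_gamma_with_parameters_b_and_c}. Taking $\E_q$ of the difference, the shared base-measure terms $\log g(\omega_{ik};1,0)$ cancel; using the mean-field factorization to write $\E_q[\omega_{ik}\psi_{ik}^2] = \E_q[\omega_{ik}]\,\E_q[\psi_{ik}^2]$, the two $\omega$-weighted quadratics combine into $\half\E_q[\omega_{ik}]\big(\qcik^2 - \E_q[\psi_{ik}^2]\big)$. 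This vanishes precisely because the coordinate-ascent-optimal update $\qcik^2 = \+x_i^T\qSigma_k\+x_i + (\+x_i^T\qvecmu_k)^2 = \E_q[\psi_{ik}^2]$ (Eq.~\eqref{eqn:IB_cavi_updates_for_c_ik_from_CB_Logit}) is in force. What remains is $\kappa_{ik}\+x_i^T\qvecmu_k - \log(2\cosh(\qcik/2))$, and the elementary identity $\log(2\cosh(c/2)) = \log[1+e^{-c}] + \half c$ rewrites the summand as $(\IBy_{ik}-\half)\+x_i^T\qvecmu_k - \log[1+\exp(-\qcik)] - \half\qcik$, matching the final line.

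The main obstacle is the auxiliary-variable bookkeeping: I must verify that the Pólya--Gamma base measure $g(\cdot;1,0)$ enters the augmented model and the variational factor identically, so that it cancels without ever invoking the intractable infinite series defining $g$, and I should flag explicitly that the clean closed form is the ELBO \emph{evaluated at the optimal} $\qcik$, since that optimality is exactly what eliminates the residual quadratic terms. By contrast, the Gaussian KL computation and the $\cosh$ simplification are routine, and the cross-category decomposition follows immediately from the factorizations in Eqs.~\eqref{eqn:IB_model_with_pga} and~\eqref{eqn:variational_family_ib_logit}.
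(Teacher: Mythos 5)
Your proposal is correct and follows essentially the same route as the paper: the same category-wise decomposition, the same Gaussian KL computation for the weight factors, the same key cancellation of the residual quadratic term via the optimality condition $\qcik^2 = \+x_i^T \qSigma_k \+x_i + (\+x_i^T \qvecmu_k)^2 = \E_q[(\+x_i^T\+\beta_k)^2]$, and the same final simplification $\log\big(2\cosh(\qcik/2)\big) = \half\qcik + \log\big[1+\exp(-\qcik)\big]$. The only cosmetic difference is that where you derive the per-observation $\omega$-expectation directly from the P\'olya--Gamma density (with explicit cancellation of the base measure $g(\cdot\,;1,0)$), the paper packages that same step as an appeal to Lemma 1 of \citet{durante2019conditionally}, obtaining the Jaakkola--Jordan quadratic bound whose residual term then vanishes for the identical reason.
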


\begin{proof} 

{\footnotesize
\begin{align*}
\ELBO &[ q(\+B,  \+\Omega)] \\
&= \E_{q(\+B, \+\Omega)} [ \log p(\IBmatrixY, \+B, \+\Omega)] - \E_{q(\+B, \+\Omega)} [ \log q(\+B, \+\Omega)] \\
&= \ds\sum_{k=1}^K \bigg[ \E_{q(\+\beta_k)} [\log p(\+\beta_k)] \\
& \quad \quad \quad + \ds\sum_{i=1}^N \E_{q(\+\beta_k)}  \E_{q(\omega_{ik})} [\log p(\IBy_{ik},  \omega_{ik} \cond \+\beta_k)] \\
& \quad \quad \quad -   \E_{q(\+\beta_k)} [\log q(\+\beta_k)]  \\
& \quad \quad \quad -  \ds\sum_{i=1}^N \E_{q(\+\beta_k)}  \E_{q(\omega_{ik})} [\log q(\omega_{ik})]  \bigg] \\
&= \ds\sum_{k=1}^K \bigg[ - \KL{q_{\+\beta_k}(\+\beta_k)}{p_{\+\beta_k}(\+\beta_k)} \\
&\quad \quad \quad +  \ds\sum_{i=1}^N \E_{q(\+\beta_k)}  \E_{q(\omega_{ik})} [\log p(\IBy_{ik},  \omega_{ik} \cond \+\beta_k) \\
&\quad\quad\quad - \log q(\omega_{ik})  ]  \bigg] 
\labelit \label{eqn:elbo_for_IB_logit_with_pga_expanded_expression}
\end{align*}
}

The first term is the negative KL divergence between two Gaussians,  which is well-known (and is given by the first line of Eq.~\eqref{eqn:elbo_for_IB_Logit_with_pga}).

For the second term,  we read Lemma 1 of \cite{durante2019conditionally} from right to left to obtain 
\begin{align*}
\E_{q(\omega_{ik})} &[\log p(\IBy_{ik},  \omega_{ik} \cond \+\beta_k) - \log q(\omega_{ik})  ] = \log \bar{p}(\IBy_{ik} \cond \+\beta_k) \\
&= (\IBy_{ik} - \half) \, \+x_i^T \+\beta_k - \half \qcik  \\
& \quad\quad\quad - \frac{1}{4} \qcik^{-1} \tanh (\half \qcik) \big[ (\+x_i^T \+\beta_k)^2 - \qcik^2 \big]\\
&\quad \quad \quad  -  \log \big[ 1 +  \exp ( -\qcik) \big] 
 \labelit  \label{eqn:jakkola_lower_bound}
\end{align*}
where $\log \bar{p}(\IBy_{ik} \cond \+\beta_k) \leq \log p(\IBy_{ik} \cond \+\beta_k)$ is the well-known quadratic lower-bound given by \cite{jaakkola2000bayesian}.\footnote{That is, the \textit{exact} ELBO for the \IB-Logit model after \pga has a summand which can be expressed as the expected value of the the well-known quadratic lower-bound given by \cite{jaakkola2000bayesian}.}

Taking the expectation w.r.t $q_{\+\beta_k}$ of Eq.~\eqref{eqn:jakkola_lower_bound}, we obtain
\begin{align*}
\E_{q(\+\beta_k)}  &\E_{q(\omega_{ik})} [\log p(\IBy_{ik},  \omega_{ik} \cond \+\beta_k) - \log q(\omega_{ik})  ]   \\
&=  (\IBy_{ik} - \half) \, \+x_i^T \qvecmu_k - \half \qcik  -  \log \big[ 1 +  \exp ( -\qcik) \big] 
\labelit \label{eqn:expectation_of_jakkola_lower_bound_wrt_variational_beta}
\end{align*}

where the third term in the sum disappears since $\E_{q(\+\beta_k)}[(\+x_i^T \+\beta_k)^2] = \qcik^2$,  as we saw in the argument leading to Eq.~\eqref{eqn:optimal_variational_parameter_for_bayesian_logreg_with_pga}. 

Taking the sum of Eq.~\eqref{eqn:expectation_of_jakkola_lower_bound_wrt_variational_beta}
across $N$ observations produces the second term in Eq.~\eqref{eqn:elbo_for_IB_logit_with_pga_expanded_expression}.
\end{proof}

\subsection{Computational complexity}  
\label{sec:complexity_of_IB_Logit}

The complexity for each iteration of CAVI for an \IB-Logit model is $\mathcal{O}(M^3K + NM^2K)$. Details on each substep are given in  Table~\ref{tab:computational_complexity_of_IB_CAVI_for_Logit}. Note in particular that, unlike with the \IB-Probit model, the \IB-Logit model requires a matrix inversion \textit{at each step of inference}, rather than just once up-front.  This increases the per-iteration complexity from $\mathcal{O}(MNK)$. (Recall that Sec.~\ref{sec:computational_complexity_for_IB_probit} provides more information on the complexity of CAVI for \IB-Probit.)

If one imposes an additional variational assumption beyond that given in Eq.~\ref{eqn:mf_family_for_IB_Logit_with_polya_gamma_augmentation}, namely that each category's regression weights are independent across covariates, $q(\+\beta_k) = \prod_{m=1}^M q(\beta_{mk})$, then the additional computational complexity imposed by \IB-Logit over \IB-Probit can be avoided.  This strategy may make sense when the choice of link (Logit over Probit) is more important than modeling the correlations in regression weights across covariates.  

As with the \IB-Probit model, the entire inference procedure (across all iterations) is embarassingly parallel over the $K$ categories.  
Thus, distributed computation over $K$ nodes can reduce the complexity for the entire inference procedure to $\mathcal{O}((M^3 + NM^2)I)$, where $I$ is the number of CAVI iterations.  Recall also that sparsity of the matrix $\+X$ can reduce the complexity of these steps. 
 
\begin{table}[h!]
{\scriptsize
\begin{tabular}{r|l|l|l}
\textbf{Variable}&  \textbf{Step} & \textbf{Per-iteration} & \textbf{Note}\\
&& \textbf{complexity} & \\
\toprule 
$\+B$ & covariance \eqref{eqn:IB_cavi_updates_for_beta_k_covariance_from_CB_Logit}  & $\mathcal{O}(M^3K + NM^2K)$ & matrix inversion  \\
$\+B$ & mean \eqref{eqn:IB_cavi_updates_for_beta_k_mean_from_CB_Logit} & $\mathcal{O}(MNK + M^2K)$ & covariance matrix \\
& & & not pre-computable \\
$\+\Omega$ & \eqref{eqn:IB_cavi_updates_for_omega_ik_from_CB_Logit}  & $\mathcal{O}(NM^2 + NMK)$ & $ \explainterm{NxM}{\+X} \; \explainterm{MxM}{\wt{\+\Sigma}} \; \explainterm{MxN}{\+X^T}$ \\
& & & and $\explainterm{NxM}{\+X} \; \explainterm{MxK}{\wt{\+\mu}}$  \\
\end{tabular}
}
\caption{\textit{The computational complexity of CAVI updates for \IB-Probit.} $\+B$ is the matrix of regression weights and $\+\Omega$ are auxiliary variables added for conditional conjugacy.}
\label{tab:computational_complexity_of_IB_CAVI_for_Logit}	
\end{table}

\section{Alternative methods for Bayesian inference in categorical GLMs} \label{sec:supplemental_info_on_alternative_inference_methods}

In this section, we provide further information about alternative approaches to Bayesian inference for categorical GLMs.  For orientation, see Table~\ref{tab:comparison_of_Bayesian_categorical_models}, for which an extended caption is given in Sec.~\ref{sec:extended_caption_for_feature_table}.  In particular, the first five rows of Table~\ref{tab:comparison_of_Bayesian_categorical_models} provide alternative approaches to \CB-Models with IB-CAVI.     
\begin{itemize}
\item In Sec.~\ref{sec:ADVI}, we describe   automatic differentation variational inference, which can be used for Bayesian inference with the softmax model (row 1).   We include this approach in our experiments.
\item We do not consider the MNP models (rows 2 and 5) due to the lack of closed-form category probabilities, which can complicate inference in high dimensions.   
\item In Sec.~\ref{sec:gibbs_sampling_for_multi_logit_regression_with_pga}, we provide the construction of a Gibbs sampler for the softmax (more specifically, for the multi-logit model, which is the softmax model but with one category's vector of regression weights fixed to $\+0$ for identifiability) after \pga~(row 3).  We include this approach in our experiments. We cannot construct closed-form CAVI for softmax after \pga, as we show in Sec.~\ref{sec:blocker_to_cavi_for_multiclass_reg_with_canonical_link}.  
\item In Sec.~\ref{sec:multinomial_reg_with_stick_breaking_link}, we consider the stick-breaking construction of the softmax (row 4). We highlight the category asymmetry of this method, which causes us to not consider this approach further in our experiments. 
\end{itemize}

\red{TODO: Move to experiments?}

\subsection{Extended caption for Table~\ref{tab:comparison_of_Bayesian_categorical_models} } \label{sec:extended_caption_for_feature_table}

An extended caption for Table~\ref{tab:comparison_of_Bayesian_categorical_models} follows.  See the main body of the text for citations for these methods. 

\paragraph{Further details on columns:}
\begin{itemize}
\item \texttt{Category symmetry} refers to symmetric handling of categories. 
\item \texttt{Latent linear regression} reports the existence of latent auxiliary variables $z_i$, one for which observation, for which the regression weights $\+\beta$ have a linear regression likelihood. (This enables easy extensibility, e.g. to hierarchical models or variable selection priors.) 
\item \texttt{Auxiliary variable independence} is satisfied when the latent auxiliary variables, one for each observation, are conditionally independent across categories given all observations and all other unobserved random variables. (Non-existence of such auxiliary variables is considered to meet the criterion.) 
\item \texttt{Closed-form likelihood} refers to closed-form category probabilities in the marginal likelihood. 
\item \texttt{Conditional conjugacy} refers to the state whereby a (non-trivial) conjugate prior exists for each complete conditional.  
\item \texttt{Closed-form variational inference} refers to the existence of a known coordinate ascent variational inference algorithm with closed-form updates.
\item \texttt{Embarassingly parallel across categories} refers to the state where  the inference can be performed separately on each category's regression weights.
\end{itemize}

\paragraph{Further details on specific cells:} 
The lack of closed-form CAVI for Softmax+PGA is reviewed in Sec.~\ref{sec:blocker_to_cavi_for_multiclass_reg_with_canonical_link}. The category asymmetry of the SB-Softmax+PGA method is discussed in Sec.~\ref{sec:multinomial_reg_with_stick_breaking_link}.  The latent linear regression property of IB-CAVI can be exploited for closed-form hierarchical modeling, as mentioned in Sec.~\ref{sec:general_variational_algorithm}, but we do not consider such models in this paper. 

\red{TODO: Need to update the table to track developments that have been made to the table in the main body of the paper.}

\subsection{Automatic differentiation variational inference} \label{sec:ADVI}

Automatic differentiation variational inference (ADVI) \cite{kucukelbir2017automatic} is a generic variational inference method that applies to a large class of Bayesian models.  Its objective function is known as the ADVI evidence lower bound (ADVI ELBO):
\begin{align}
 \mathcal{L}(\+\lambda) &= \E_{\norm (\+\epsilon; \+0, \+I)}  \bigg[ 
	\log p \bigg( \+y, T^{-1} \big( S^{-1}_{\+\lambda} (\+\epsilon) \big) \bigg) \nonumber \\
	& + \logabsdet{J_{T^{-1}} \big( S^{-1}_\+\lambda (\+\epsilon) \big)}  \bigg]  + \mathbb{H} [q(\+\zeta ; \+\lambda )]
\label{eqn:ADVI_ELBO}	
\end{align}
where $\+\lambda$ are the variational parameters, $T : \Theta \to \R^P, \; \+\theta \mapsto \+\zeta$ is a  differentiable bijection to give the model parameters $\+\theta$ unbounded support, and $S_\+\lambda : \R^P  \to \R^P, \, \+\zeta \mapsto \+\epsilon$ is a (deterministic) standardization function.
	
The gradients for the ADVI objective are given in \cite{kucukelbir2017automatic}.   We assume a Gaussian mean-field variational family on $\+\zeta \in \R^p$, the transformed unobserved random variables, i.e. the variational parameters are $\+\lambda = \big(\qvecmu, \text{diag}(\qvecsigma^2)\big)$ and the variational density is given by
\[ q(\+\zeta ; \+\lambda) = \prod_{p=1}^P  \explaintermbrace{$\N(\qmu_p, \qsigma_p^2)$}{q(\zeta_p ; \+\lambda_p)}  \] 
\begin{subequations}
Under this Gaussian mean field assumption, the gradients are given by \cite{kucukelbir2017automatic}
\begin{align}
\nabla_{\qvecmu} \mathcal{L} &= \E_{\N(\+\epsilon; \+0, \+I)} \bigg[ \explaintermbrace{$1 \times p$}{\nabla_{\+\theta} \log p(\+y, \+\theta)} \, \explaintermbrace{$p \times p$}{\nabla_{\+\zeta} T^{-1} (\+\zeta)} \nonumber\\
& \quad +\explaintermbrace{$1 \times p$}{\nabla_{\+\zeta} \logabsdet{J_{T^{-1}}(\+\zeta)}} \bigg] \label{eqn:advi_gradient_wrt_mu} \\
\nabla_{\qvecomega} \mathcal{L} &= \E_{\N(\+\epsilon; \+0, \+I)} \bigg[ \bigg( \explaintermbrace{$1 \times p$}{\nabla_{\+\theta} \log p(\+y, \+\theta)}\, \explaintermbrace{$p \times p$}{\nabla_{\+\zeta} T^{-1} (\+\zeta)} \nonumber\\
& \quad + \explaintermbrace{$1 \times p$}{\nabla_{\+\zeta} \logabsdet{J_{T^{-1}}(\+\zeta)}} \bigg)  \explaintermbrace{$p \times p$}{\nabla_{\qvecomega} S_{\+\lambda}^{-1}(\+\epsilon)} \bigg] + \+1 \label{eqn:advi_gradient_wrt_omega}
\end{align}
where we have defined $\qvecomega = (\qomega_1, \hdots \qomega_p) \in \R^p$ as the element-wise log of the variational standard deviations, $\qomega_p = \log \qsigma_p$.  This transformation gives $\qvecomega$ unbounded real-valued support.
\label{eqn:advi_gradient}
\end{subequations}

In the case of categorical GLMS (Eq.~\eqref{eqn:categorical_regression}), the model parameter is given by $\+\theta = \text{vec}(\+B)$.  Here the model parameter $\+\theta$ already has unconstrained real-valued support, so the ADVI gradients \eqref{eqn:advi_gradient} simplify greatly. Since $T$ is the identity function, we have  $
J_{T^{-1}}(\+\zeta) = \nabla_{\+\zeta} T^{-1} (\+\zeta) = \+I_p$ and $\nabla_{\+\zeta} \logabsdet{J_{T^{-1}}(\+\zeta)} = \+0_p$.  Therefore, the ADVI gradients become
\begin{subequations}
\begin{align}
\nabla_{\qvecmu} \mathcal{L} &= \E_{\N(\+\epsilon; \+0, \+I)} \bigg[ \explaintermbrace{$1 \times p$}{\nabla_{\+\theta} \log p(\+y, \+\theta)}  \bigg] \label{eqn:advi_gradient_wrt_mu_under_real_parameter} \\
\nabla_{\qvecomega} \mathcal{L} &= \E_{\N(\+\epsilon; \+0, \+I)} \bigg[  \explaintermbrace{$1 \times p$}{\nabla_{\+\theta} \log p(\+y, \+\theta)} \;  \explaintermbrace{$p \times p$}{\nabla_{\qvecomega} S_{\+\lambda}^{-1}(\+\epsilon)} \bigg] + \+1_p \label{eqn:advi_gradient_wrt_omega_under_real_parameter}
\end{align}
\label{eqn:advi_gradient_under_real_parameter}
\end{subequations}
If we take $S_\lambda$ to be an elliptical standardization \cite{kucukelbir2017automatic}, we obtain 
\begin{align*}
\nabla_{\qvecomega} S_{\+\lambda}^{-1}(\+\epsilon) &= \text{diag}(\+\epsilon^T \exp(\qvecomega)) \\
&=   
\begin{bmatrix}
    \epsilon_1 \exp(\qomega_1) & & \\
    & \ddots & \\
    & &  \epsilon_p \exp(\qomega_p)
  \end{bmatrix}
\end{align*}
So (stochastic) gradient steps to optimize the ADVI ELBO are conceptually straightforward to compute using Monte Carlo sampling and automatic differentiation of the joint density with respect to the parameter $\+\theta$.   However, the generic framework comes at a price.  Whereas CAVI provides exact analytic solutions to each coordinate ascent update, ADVI must chase gradients, and these gradients are stochastic.  As we will see, this can slow down inference; moreover, ADVI introduces multiple optimization hyperparameters (learning rate, number of Monte Carlo samples, and more).  For a given problem, finding appropriate values of these hyperparameters can be challenging.

\subsection{Gibbs sampling for multi-logit regression with \pga} \label{sec:gibbs_sampling_for_multi_logit_regression_with_pga}

\subsubsection{Bayesian Binomial Regression with \pga}
\label{sec:derivation_complete_conditionals_bayes_binom_reg_with_pga}

Here we derive the complete conditionals for Bayesian Binomial Regression with \pga.  Bayesian logistic regression is a special case, and Bayesian multiclass logistic regression is an extension (see Sec.~\ref{sec:bayesian_multiclass_logistic_regression_with_pga}).   This derivation will be useful for Sec.~\ref{sec:blocker_to_cavi_for_multiclass_reg_with_canonical_link}, where we demonstrate the lack of closed-form CAVI for softmax regression with \pga. 

Our derivation largely follows the simple derivation given in Section 2 of \cite{choi2013polya},  but provides some extra detail.\footnote{We also make the generalization from logistic to binomial regression explicit.  Although the tweak is straightforward, this expression is nicely more general and is also something we use when we handle the stick-breaking multi-class logistic regression.}   For the derivation,  recall the density of a $\PG(b,c)$ random variable \cite{polson2013bayesian}:
\begin{align*}
f(x; b, c) =  \cosh^b(c/2) e^{-\frac{c^2}{2} x} g(x; b,  0)
\labelit \label{eqn:density_of_polya_gamma_with_parameters_b_and_c}
\end{align*}

where $h(\omega) := g(x; b, 0)$ is the density of a $\PG(b,0)$ random variable
\begin{align*}
g(x; b,  0)  = \df{2^{b-1}}{\Gamma(b)} \ds\sum_{n=0}^\infty  (-1)^n \df{\Gamma(n+b)}{\Gamma(n+1)}  \df{(2n+b)}{\sqrt{2 \pi x^3}} e^{-\frac{(2n+b)^2}{8x}}
\labelit \label{eqn:definition_of_g_in_expansion_of_the_PG_density}
\end{align*}

So $f$ is constructed from $g$ via an \textit{exponential tilt} and a renormalization.  Note that to derive the complete conditionals, we will not need the form of $g(x;b,0)$ anywhere in the derivation;  we merely use \eqref{eqn:density_of_polya_gamma_with_parameters_b_and_c}.   


\begin{proposition}

For the Bayesian Binomial Regression Model\footnote{Note that we use $N$ to denote the number of observations and $n_i$ to denote the number of binomial trials per observation.} with \pga

\begin{align*}
\+\beta &\sim \N(\+\mu_0, \+\Sigma_0 ) \\
y_i \cond \+\beta &\indsim \Binomial \bp{n_i,  \frac{\exp \{\+x_i^T \+\beta \}}{1+\exp \{\+x_i^T \+\beta \}}},  \quad i=1,...,N \\
\omega _i \cond \+\beta &\indsim  \PG(n_i,  \+x_i^T \+\beta),  \quad i=1,...,N \\
\labelit \label{eqn:appendix_bayesian_logreg_model_with_pga}
\end{align*}


the complete conditional distributions are 

\begin{align*}
(\omega_i \cond \+\beta) &\sim \PG(n_i,  \+x_i^T \+\beta) 
 \labelit \label{eqn:appendix_gibbs_update_to_omega_i_under_pg_augmentation}\\
(\+\beta \cond \+y,  \+\omega) &\sim \N(  \+\mu_{\+\omega},   \+\Sigma_{\+\omega}) 
 \labelit \label{eqn:appendix_gibbs_update_to_beta_under_pg_augmentation}\\
\intertext{where}
\+\Sigma_{\+\omega} &= \bp{\+X^T \+\Omega_{\+\omega} \+X + \+\Sigma_0^{-1} }^{-1} 
 \labelit \label{eqn:appendix_sampler_update_Sigma} \\
\+\mu_{\+\omega} &= \+\Sigma_{\+\omega} \bp{ \+X^T \+\kappa + \+\Sigma_0^{-1} \+\mu_0} 
 \labelit \label{eqn:appendix_gibbs_sampler_update_mu}
\intertext{where}
\+\kappa  &= (y_1 - n_1 /2, ..., y_N - n_N /2)  \\
\+\Omega_{\+\omega} & \text{\,  is the diagonal matrix of  $\+\omega_i$'s} \\ 
\labelit \label{eqn:appendix_def_of_kappa_and_Omega}
\end{align*}

\label{prop:complete_conditionals_for_bayesian_binomial_regression_with_pga}
\end{proposition}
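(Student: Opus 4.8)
The plan is to follow the Polya-Gamma data-augmentation strategy of \citet{polson2013bayesian}, which rests on the single integral identity
\[
\df{(e^{\psi})^{a}}{(1+e^{\psi})^{n}} = 2^{-n} e^{\kappa \psi} \ds\int_0^\infty e^{-\omega \psi^2 / 2} \, g(\omega; n, 0) \wrt{\omega},
\]
where $\kappa = a - n/2$ and $g(\cdot; n, 0)$ is the $\PG(n,0)$ density of Eq.~\eqref{eqn:definition_of_g_in_expansion_of_the_PG_density}. Setting $\psi = \+x_i^T \+\beta$ and $a = y_i$, the right-hand side is exactly the $i$th binomial likelihood factor of the model in Eq.~\eqref{eqn:appendix_bayesian_logreg_model_with_pga}. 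First I would use this identity to write the augmented joint factor as $p(y_i, \omega_i \cond \+\beta) = 2^{-n_i} e^{\kappa_i \psi_i} e^{-\omega_i \psi_i^2/2} g(\omega_i; n_i, 0)$, and verify that integrating $\omega_i$ back out reproduces the binomial likelihood, so that the augmented model retains the original target posterior on $\+\beta$ as a marginal.

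For the complete conditional of $\omega_i$, I would discard every factor not involving $\omega_i$ from this joint, leaving $p(\omega_i \cond \+\beta, y_i) \propto e^{-\omega_i \psi_i^2/2} g(\omega_i; n_i, 0)$. Comparing this against the exponential-tilt-and-renormalize form of the $\PG$ density in Eq.~\eqref{eqn:density_of_polya_gamma_with_parameters_b_and_c} identifies it immediately as $\PG(n_i, \+x_i^T \+\beta)$, establishing Eq.~\eqref{eqn:appendix_gibbs_update_to_omega_i_under_pg_augmentation}. I would emphasize that because the tilt $e^{-\omega_i \psi_i^2/2}$ carries no dependence on $y_i$, this conditional depends on $\+\beta$ alone, which is precisely why it is stated as $(\omega_i \cond \+\beta)$.

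For the complete conditional of $\+\beta$, I would collect all terms in the log joint that depend on $\+\beta$: the Gaussian prior $-\half(\+\beta - \+\mu_0)^T \+\Sigma_0^{-1}(\+\beta - \+\mu_0)$ together with the summed exponents $\sum_i \big[\kappa_i \+x_i^T \+\beta - \half \omega_i (\+x_i^T \+\beta)^2\big]$. Writing $\+\kappa = (y_1 - n_1/2, \ldots, y_N - n_N/2)^T$ and letting $\+\Omega_{\+\omega} = \text{diag}(\omega_i)$, the summed quadratic term becomes $-\half \+\beta^T \+X^T \+\Omega_{\+\omega} \+X \+\beta$ and the summed linear term becomes $(\+X^T\+\kappa)^T \+\beta$. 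This is a Gaussian kernel in $\+\beta$; completing the square (equivalently, matching quadratic and linear coefficients against a generic $\N(\+\mu, \+\Sigma)$ log-density) yields precision $\+\Sigma_{\+\omega}^{-1} = \+\Sigma_0^{-1} + \+X^T \+\Omega_{\+\omega} \+X$ and mean $\+\mu_{\+\omega} = \+\Sigma_{\+\omega}(\+X^T \+\kappa + \+\Sigma_0^{-1}\+\mu_0)$, which are Eqs.~\eqref{eqn:appendix_sampler_update_Sigma}--\eqref{eqn:appendix_gibbs_sampler_update_mu}.

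The only genuinely delicate step is the first one: confirming the integral identity in the binomial rather than merely Bernoulli case, and checking that the exponential tilt appearing in it lines up exactly with the renormalization structure of the $\PG$ density so that the $\omega_i$ conditional can be read off without residual constants. Once that identity is secured, both complete conditionals follow by routine pattern-matching and a standard Gaussian completion of the square, which I would not grind through in full detail.
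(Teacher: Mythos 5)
Your proposal is correct and takes essentially the same approach as the paper's proof: the integral identity you invoke is just the exponential-tilt structure of the $\PG$ density in integral form, your identification of the $\omega_i$ conditional coincides with the paper's observation that $\omega_i$ is conditionally independent of $y_i$ given $\+\beta$, and both arguments reduce the $\+\beta$ conditional to the kernel $p(\+\beta)\exp\{\sum_i \kappa_i \+x_i^T\+\beta - \tfrac{\omega_i}{2}(\+x_i^T\+\beta)^2\}$ followed by a Gaussian completion of the square. The only difference is mechanical: the paper cancels the binomial denominator against $\cosh^{n_i}$ using $\cosh(z) = \tfrac{1+e^{2z}}{2e^z}$ and finishes via a pseudo-inverse/natural-parameter argument, while you match quadratic and linear coefficients directly --- same computation, same result.
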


\begin{proof}

That \eqref{eqn:appendix_gibbs_update_to_omega_i_under_pg_augmentation}  is the complete conditional for $\omega_i$ follows immediately from the conditional independence of $\+\omega$ and $\+y$ in the model.    In particular,  the posterior density is given by
\begin{align*} 
p(\+\beta, \+\omega \cond \+y) \propto  \bb{\ds\prod_{i=1}^N  p(y_i \cond \+\beta) p(\omega_i \cond \+\beta)} p(\+\beta)
\labelit \label{eqn:appendix_joint_density_bayesian_binom_reg_with_pga}
\end{align*}

Hence,  clearly, 
\[ p(\omega_i \cond \+\beta,  \+y,  \+\omega_{-i}) \propto  p(\omega_i \cond \+\beta)\]

It remains to show that \eqref{eqn:appendix_gibbs_update_to_beta_under_pg_augmentation}  is the complete conditional for $\+\beta$
{\scriptsize 
\begin{align*} 
p(\+\beta &\cond \+\omega,  \+y) \propto   \bb{\ds\prod_{i=1}^N  p(y_i \cond \+\beta) p(\omega_i \cond \+\beta)} p(\+\beta) \\
&\stackrel{1}{\propto}  \bigg[   \ds\prod_{i=1}^N \frac{e^{ (\+x_i^T \+\beta) y_i}}{ \big( 1+e^{\+x_i^T \+\beta} \big)^{n_i} } \bigg]  \bigg[   \cosh^{n_i} \big(\df{\+x_i^T \+\beta}{2} \big) e^{-\half (\+x_i^T \+\beta)^2 \omega_i}  h(\omega_i) \bigg]  p(\+\beta) \\
&\stackrel{2}{\propto}   \bigg[   \ds\prod_{i=1}^N \frac{e^{ (\+x_i^T \+\beta) y_i}}{ \cancel{ \big(1+e^{\+x_i^T \+\beta} \big)^{n_i} }} \bigg]  \bigg[   \df{\cancel{ \big( 1+e^{\+x_i^T \+\beta} \big)^{n_i} }}{ \cancel{2^{n_i}} e^{\half ( \+x_i^T \+\beta) n_i }} e^{-\half (\+x_i^T \+\beta)^2 \omega_i}  \cancel{h(\omega_i)} \bigg]  p(\+\beta) \\
&\stackrel{3}{\propto}   p(\+\beta)  \exp \bigg\{  \ds\sum_{i=1}^N  (y_i - \df{n_i}{2}) (\+x_i^T \+\beta) - \frac{\omega_i}{2} (\+x_i^T \+\beta)^2  \bigg\}
\labelit \label{eqn:gist_of_proof_of_ccs_for_bayes_binom_reg_with_pga}
\end{align*}
}where (1) fills in forms for densities (using $h(\omega) := g(\omega; 1, 0)$),  (2) uses that $\cosh(z) = \frac{1+e^{2z}}{2e^z}$ (and absorbs $h(\omega_i)$ and $2^{-n_i}$ into the constant of proportionality),  and (3) reveals that the complete conditional is Gaussian.

To obtain an explicit form for the multivariate Gaussian,  we need what \cite{choi2013polya} calls a routine Bayesian regression-type calculation:
{\scriptsize 
\begin{align*} 
p(\+\beta \cond \+\omega,  \+y) &\propto  p(\+\beta)  \exp \bigg\{  \ds\sum_{i=1}^N  (y_i - \df{n_i}{2}) (\+x_i^T \+\beta) - \frac{\omega_i}{2} (\+x_i^T \+\beta)^2  \bigg\}  \\
\intertext{setting $\kappa_i := y_i - \df{n_i}{2}$}
&\stackrel{1}{\propto}   p(\+\beta)  \exp \bigg\{  \ds\sum_{i=1}^N  -\frac{\omega_i}{2} \bigg( \+x_i^T \+\beta - \frac{\kappa_i}{\omega_i} \bigg)^2  \bigg\} \\
\intertext{defining $\+z := (\frac{\kappa_1}{\omega_1},  ...,  \frac{\kappa_N}{\omega_N})$ and $\+\Omega := \text{diag} (\omega_1, ..., \omega_N)$} 
&\stackrel{2}{\propto}  p(\+\beta)   \exp \bigg\{  -\half  (\+z - \+X \+\beta)^T \+\Omega   (\+z - \+X \+\beta) \bigg\} \\ 
&\stackrel{3}{\propto}  p(\+\beta)   \exp \bigg\{  -\half  (\+X^+ \+z - \+\beta)^T  \+X^T \+\Omega \+X   (\+X^+ \+z - \+\beta) \bigg\} 
\labelit \label{eqn:deriving_the_posterior_for_beta_for_bayesian_logistic_regression_with_pga}
\end{align*}
}where (1) is by completing the square,  (2) writes the weighted sum of squares in matrix notation, and (3) isolates $\+\beta$,  using $\+X^{+}$,  the Moore-Penrose psuedo-inverse of $\+X$.\footnote{Specifically,  since $\+X\+X^+ = \+I$,  we use 
{\scriptsize 
\begin{align*}
(\+z - \+X \+\beta)^T \+\Omega   (\+z - \+X \+\beta) &= (\+X\+\beta - \+z)^T \+\Omega (\+X\+\beta - \+z)\\ &= \bigg( \+X (\+\beta - \+X^+ \+z ) \bigg)^T \+\Omega  \bigg(  \+X (\+\beta - \+X^+ \+z ) \bigg) \\
&= (\+\beta - \+X^+ \+z)^T \+X^T \Omega \+X   (\+\beta - \+X^+ \+z) 
\end{align*}
}
.}  

Thus,  we see that $p(\+\beta \cond \+\omega,  \+y)$ is proportional to the product of two multivariate Gaussians:  $p(\+\beta)$,  which has mean $\+\mu_0$ and covariance $\+\Sigma_0$,  and another Gaussian,  which has mean $\+X^+ \+z$ and covariance $(\+X^T \+\Omega \+X)^{-1}$.   We know from the exponential family representation of the Gaussian that the result can be obtained by summing at the scale of natural parameters -- which for the Gaussian are the precision and precision-weighted mean.   Using this,  we obtain
\begin{align*}
p(\+\beta \cond \+\omega,  \+y) & \sim \N(\+\mu_{\+\omega},  \+\Sigma_{\+\omega} )
\intertext{where}
\+\Sigma_{\+\omega} &= \bp{\+\Sigma_0^{-1} +  \+X^T \+\Omega_{\+\omega} \+X }^{-1}  \\
\+\mu_{\+\omega} &= \+\Sigma_{\+\omega} \bp{   \+\Sigma_0^{-1} \+\mu_0 + \+X^T \+\Omega_{\+\omega} \cancel{\+X} \cancel{\+X^+}  \+z }  \\
&= \+\Sigma_{\+\omega} \bp{   \+\Sigma_0^{-1} \+\mu_0 + \+X^T \+\kappa }  
\end{align*}

\end{proof}

\begin{remark}
In the special case where $n_i \equiv 1$, Bayesian binomial regression reduces to Bayesian logistic regression.	
\end{remark}

\subsubsection{Bayesian multi-logit regression with \pga}  \label{sec:bayesian_multiclass_logistic_regression_with_pga}

\red{TODO: Relate multinomial to categorical, or reduce to categorical.} \cite{held2006bayesian} show that for multi-class logistic regression with the standard,  canonical (multi-logit) link,  the conditional likelihood $L(\+\beta_k \cond \+y,  \+\beta_{-k})$ over categorical outcomes $\+y \in \set{1,\hdots, K}^N$ has the form of a logistic regression on the class indicators $\IBy_{ik} \in \set{0,1}$. This observation motivates the conversion of Bayesian multinomial regression into a conditionally conjugate model.  We present the complete conditionals here as they are used to construct a Gibbs sampler in the experiments (see Sec.~\ref{sec:modeling_strategies}).   However, in Sec.~\ref{sec:blocker_to_cavi_for_multiclass_reg_with_canonical_link} we show that the construction does not yield closed-form CAVI updates (as reported in row 3 of Table 1).  

First,  following \cite{held2006bayesian},  note that we can represent the complete conditionals for $\+\beta_k,  k=1,...,K-1$ in terms of the conditional likelihoods $L(\+\beta_k \cond \+y,  \+\beta_{-k})$
\[ p(\+\beta_k \cond \+y, \+\beta_{-k}) \propto p(\+\beta_k) \, L(\+\beta_k \cond \+y,  \+\beta{-k})  \]
where the conditional likelihoods satisfy
\begin{align*}
L(\+\beta_k \cond \+y,  \+\beta_{-k}) & \propto \ds\prod_{i=1}^N \ds\prod_{k=1}^K p_{ik}^{\IBy_{ik}} \\
&\propto \ds\prod_{i=1}^N (\gamma_{ik})^{\IBy_{ik}} (1-\gamma_{ik})^{1-\IBy_{ik}}  \\
\intertext{where} 
\gamma_{ik} &= \df{\exp ( \+x_i^T \+\beta_k - C_{ik}) }{1+\exp ( \+x_i^T \+\beta_k - C_{ik})}  \labelit \label{eqn:conditional_category_prob_held_and_holmes} \\
C_{ik} & := \log \sum_{j \neq k} \exp (\+x_i^T \+\beta_j) 
\end{align*}
which reveals that the conditional likelihood $L(\+\beta_k \cond \+y,  \+\beta_{-k})$ has the form of a logistic regression on class indicators $\IBy_{ik}$.

The form of \eqref{eqn:conditional_category_prob_held_and_holmes} and the success of \pga with standard (binary) logistic regression suggests that we should construct an augmented model for Bayesian multi-class logistic regression  by taking, for $i=1,...,N$ and $k=1,...K-1$,
\[  \omega_{ik} \cond \+\beta_k \indsim \PG(1,  \+x_i^T \+\beta_k - C_{ik}),  \quad \] 
a slight tweak on the construction for standard (binary) logistic regression  \eqref{eqn:appendix_bayesian_logreg_model_with_pga},  where we had $\omega_{i} \cond \+\beta \indsim \PG(1,  \+x_i^T \+\beta)$.

Following \eqref{eqn:gist_of_proof_of_ccs_for_bayes_binom_reg_with_pga},  but using the conditional likelihood and altered construction for the \polyaGamma auxiliary variables\footnote{Note that,  for this example,  we are unnecessarily restricting to the case of multiclass logistic regression ($n_i \equiv 1$).  The same argument that we make here would of course also hold for multinomial regression,  which is a generalization.} we find 
{\tiny
\begin{align*} 
p(&\+\beta _k \cond \+\omega,  \+y,  \+\beta _{-k} )  \\
&\propto    \bigg[   \ds\prod_{i=1}^N \frac{e^{ (\+x_i^T \+\beta - C_{ik}) \IBy_{ik}}}{ \big( 1+e^{\+x_i^T \+\beta - C_{ik}} \big)}  \bigg]  \\ 
&\quad\quad\quad\quad\quad  \times \bigg[   \cosh \big(\df{\+x_i^T \+\beta - C_{ik}}{2} \big) e^{-\half (\+x_i^T \+\beta - C_{ik})^2 \omega_{ik}}  h(\omega_{ik}) \bigg]  p(\+\beta_k) \\
&\propto   \bigg[   \ds\prod_{i=1}^N \frac{e^{ (\+x_i^T \+\beta - C_{ik}) \IBy_{ik}}}{ \cancel{ \big(1+e^{\+x_i^T \+\beta - C_{ik}} \big) }} \bigg]   \\
& \quad\quad\quad\quad\quad \times \bigg[   \df{\cancel{ \big( 1+e^{\+x_i^T \+\beta - C_{ik}} \big) }}{ \cancel{2} e^{\half ( \+x_i^T \+\beta - C_{ik})}} e^{-\half (\+x_i^T \+\beta - C_{ik})^2 \omega_{ik}}  \cancel{h(\omega_{ik})} \bigg]  p(\+\beta_k) \\
&\propto   p(\+\beta_k)  \exp \bigg\{  \ds\sum_{i=1}^N  (\IBy_{ik} - \df{1}{2}) (\+x_i^T \+\beta_k - C_{ik}) - \frac{\omega_{ik}}{2} (\+x_i^T \+\beta_k - C_{ik})^2  \bigg\}  \labelit \label{eqn:complete_conditional_for_cc_for_beta_for_one_category_for_a_multinomial_logit_with_pga}
\end{align*}
}
Continuing to parallel the argument of Section \ref{sec:derivation_complete_conditionals_bayes_binom_reg_with_pga},  using Eq.~(\ref{eqn:complete_conditional_for_cc_for_beta_for_one_category_for_a_multinomial_logit_with_pga}) instead of Eq.~(\ref{eqn:gist_of_proof_of_ccs_for_bayes_binom_reg_with_pga}),  we find that the complete conditionals are given by 
\begin{subequations}
\begin{align}
\+\beta_k \cond \+\omega_k,  \+y & \sim \N (\+\mu_k, \+\Sigma_k) \label{eqn:complete_conditional_for_beta_k_under_multi_logit_regression_with_pga}\\
\omega_{ik} \cond \+\beta_k & \sim \PG(1, \+x_i^T \+\beta_k - C_{ik}) \label{eqn:complete_conditional_for_omega_ik_under_multi_logit_regression_with_pga}\\
\intertext{where}
\+\Sigma_k &= \bp{\+\Sigma_0^{-1} +  \+X^T \+\Omega_k \+X }^{-1}  \nonumber \\
\+\mu_{\+\omega} &= \+\Sigma_k \bp{   \+\Sigma_0^{-1} \+\mu_0 + \+X^T \+\Omega_k \+z_k } \nonumber 
\intertext{for}
\+\Omega_k & = \text{diag}\big(\omega_{1k}, \hdots , \omega_{Nk} \big) \nonumber \\ 
\+z_k  &= 
\begin{bmatrix}
\frac{\IBy_{1k} - 1/2}{\omega_{1k}} + C_{1k} \\ 
\vdots \\
\frac{\IBy_{Nk} - 1/2}{\omega_{Nk}}  + C_{Nk} 
 \end{bmatrix}  \nonumber 
\end{align}  
\end{subequations}

A valid Gibbs sampler is obtained by iteratively sampling from Eqs.~\eqref{eqn:complete_conditional_for_beta_k_under_multi_logit_regression_with_pga} and~\eqref{eqn:complete_conditional_for_omega_ik_under_multi_logit_regression_with_pga}.

\subsection{Lack of closed-form CAVI for Bayesian multi-logit regression and \pga}  \label{sec:blocker_to_cavi_for_multiclass_reg_with_canonical_link}

Here we demonstrate the lack of closed-form CAVI for Bayesian multi-logit regression under \pga (as reported in row 3 of Table 1).  To do so, we focus on the complete conditionals for $\omega_{ik}$.  Namely,  if we would like to perform coordinate ascent variational inference (CAVI),  we parallel the argument of \eqref{eqn:optimal_variational_parameter_for_bayesian_logreg_with_pga} \red{TODO: Fix; once CFB-Logit is in the appendix we can refer to it here.},  but with our current multi-class situation whereby we work with the complete conditional for $\omega_{ik}$ as $\PG(1,  c_{ik})$,  where  $c_{ik} = \+x_i^T \+\beta_k - C_{ik}$.  This differs slightly from the standard (binary) logistic regression case,   where the complete conditional for $\omega_{i}$ was $\PG(1,  c_{i})$,  where  $c_{i} = \+x_i^T \+\beta$.   In the current case,  we find by paralleling  \red{TODO: Fix}\eqref{eqn:optimal_variational_parameter_for_bayesian_logreg_with_pga} that we eventually need
{ \scriptsize 
\begin{align*}
\E_{q_{-\+\omega}} [c_{ik}]^2 &=  \bigg( \+x_i^T \E_{q_{\+\beta_k}} [\+\beta_k] -  \E_{q_{-\omega_{ik}}} [C_{ik }] \bigg)^2 \\
&=  \bigg( \+x_i^T \E_{q_{\+\beta_k}} [\+\beta_k] -  \E_{q_{\+\beta_{-k}}} \big[ \log \sum_{j \neq k} \exp (\+x_i^T \+\beta_j ) \big] \bigg)^2 
\labelit \label{eqn:blocker_to_cavi_in_multiclass_logistic_regression}
\end{align*}
}and while the first expectation in equation \eqref{eqn:blocker_to_cavi_in_multiclass_logistic_regression} is straightforward and parallels what we computed in binary logistic regression,  the expected log-sum-exp is distinct to the multiclass case,  and has no closed form.  Indeed, the expected log-sum-exp is a notorious blocker to closed-form CAVI. Indeed, precisely this fact motivated Delta Variational Inference  \cite{braun2010variational} \cite{wang2013variational}. For an enumeration of many bounds to this expression,  see \cite{depraetere2017comparison}.     

Thus,  if one seeks variational inference with closed-form updates,  \pga solves the problem for Bayesian binomial regression,  but not for Bayesian multi-class logistic regression (or more generally Bayesian multinomial regression),  at least not when using the standard canonical (multi-logit) link.

\subsection{Stick-breaking multi-class logistic regression} \label{sec:multinomial_reg_with_stick_breaking_link}

The stick-breaking construction of the multi-class logistic regression regression \cite{linderman2015dependent} is useful for the purpose of exploiting \pga for efficient inference.   First, the density of a \textit{categorical} distribution over $K$ categories with  parameter $\+\pi=(\pi_1, ..., \pi_k) \in \Delta_{K-1}$ can be represented in a stick-breaking manner as a product of $K-1$ Bernoullis. The density can be expressed as 
\begin{align}
\prod_{k=1}^{K-1} \wt{\pi_k}^{\IBy_{ik}} (1 - \wt{\pi_k})^{1-\IBy_{ik}}
\label{eqn:stick_breaking_categorical_density}	
\end{align}
where $\wt{\pi_k} := \df{\pi_k}{1 - \sum_{j<k} \pi_j}$ is the Bernoulli parameter and $\IBy_{ik} =1$ if the $i$th observation is the $k$th category, and $\IBy_{ik} =0$ otherwise.   

Stick-breaking multi-class logistic regression uses Eq.~\ref{eqn:stick_breaking_categorical_density} to construct a multi-class logistic regression over $K$ categories as a product of $K-1$ logistic regressions
\[ \prod_{k=1}^{K-1} \bigg(  \df{e^{\+x_i^T \+\beta^\stickBreaking_k}}{1+e^{\+x_i^T \+\beta^\stickBreaking_k}}\bigg)^{\IBy_{ik}} \bigg(  \df{1}{1+e^{\+x_i^T \+\beta^\stickBreaking_k}}\bigg)^{1-\IBy_{ik}}  \]

The multinomial parameter $\+\pi_i$ has explicit form given by
\begin{align*}
 \pi_{ik}  =   \df{e^{\+x_i^T \+\beta^\stickBreaking_k}}{1+e^{\+x_i^T \+\beta^\stickBreaking_k}}  \ds\prod_{j<k}  \df{1}{1+e^{\+x_i^T \+\beta^\stickBreaking_j}}  ,  \quad k=1,...,K
\labelit \label{eqn:stick_breaking_multinomial_parameter_explicit_form_for_components}
\end{align*}
where $\+\beta_K \equiv 0$.
\paragraph{Label asymmetry.} The stick-breaking formulation induces a label asymmetry, which can complicate prior-setting and reduce representational capacity \cite{zhang2017permuted}.   For example,  consider the case of multiclass logistic regression (so $n_i =1$).   In standard multinomial regression,  we have 
\begin{align}
p(\IBy_{ik} = 1 \cond \+\beta^\multiLogit) =  \df{e^{\+x_i^T \+\beta^\multiLogit_k}}{1+ \sum_{k=1}^{K-1} e^{\+x_i^T \+\beta^\multiLogit_k}} 
\label{eq:category_probability_under_standard_multinomial_regression}
\end{align}
whereas in stick-breaking multinomial regression,   we have (via Eq.~\ref{eqn:stick_breaking_multinomial_parameter_explicit_form_for_components})
\begin{align*}
 P(\IBy_{ik} = 1  \cond \+\beta^\stickBreaking)  =   \df{e^{\+x_i^T \+\beta^\stickBreaking_k}}{1+e^{\+x_i^T \+\beta^\stickBreaking_k}}  \ds\prod_{j<k}  \df{1}{1+e^{\+x_i^T \+\beta^\stickBreaking_j}}  
 \end{align*}
which differs from \eqref{eq:category_probability_under_standard_multinomial_regression} in that it clearly imposes fewer geometric constraints on the classification decision boundaries for smaller $k$. 
 For instance,  $p_{i1}$ can be larger than 50 \% if $\+x_i^T \+\beta_1 >0$,  whereas $p_{i2}$ can be larger than 50 \% only if $\+x_i^T \+\beta_1 <0$ and $\+x_i^T \+\beta_2 >0$.  Because of label asymmetry, predictive performance can be sensitive to how the $K$ different categories are ordered.  The geometric constraints implied by any given ordering of the labels can cause the model to struggle to learn the true decision boundaries.  

\section{Supplemental information for experiments} \label{sec:supplemental_info_for_experiments}

Open-source python code for reproducing experiments can be found at
\url{\codeURL}.

\subsection{Data simulations} \label{sec:simulating_data}
\subsubsection{Dataset generation} \label{sec:data_generation}

We generate simulated datasets from a categorial distribution with a softmax (multi-logit) inverse link function and given specifications ($N$ samples, $K$ categories, $M$ covariates).  For a given context ($N,K,M$), we may generate $D$ different datasets by setting the random seed to a different value.   

First, we generate covariate matrices $\+X \in \R^{N \times M}$ such that all entries are drawn i.i.d from $\N(0,1)$.  We use $\+x_i$ to refer to the $i$th row of $\+X$ for $i=1,\hdots,N$.

Next, we draw regression weights $\+B \in \R^{(M + 1) \times K}$  in a way that allows us to control category predictability.  We describe how to sample entries $\beta_{mk}$ for $m=0,1,\dots,M$ and $k=1,\hdots,K$.  We begin by generating intercepts for each category by sampling $\beta_{0k} \iid \N(0, \sigma^2_{\text{int}})$.  For covariates $m =1, \hdots, M$, we draw $\beta_{mk} \indsim \N(0,\sigma^2_{mk})$, where
\[ \sigma^2_{mk} = 
\begin{cases}
\sigma^2_{\text{high}}, & \text{ if $k$ = $\ceil{m /S}$}\\
\sigma^2_{\text{low}}, & \text{ otherwise }\\
\end{cases}
\]
for $\sigma^2_{\text{high}} > \sigma^2_{\text{low}}$ and $S:=\floor{M/K}$.  The motivation is as follows: We partition the $M$ covariates into $K+1$ covariate groups.  Covariate groups $k=1,...,K$ each have $S$ members that are potentially predictive of the $k$th category. Such covariates have regression entries $\beta_{mk} \sim \N(0,\sigma^2_{\text{high}})$; the relatively high variance $\sigma^2_{\text{high}} > \sigma^2_{\text{low}}$ allows the regression coefficients to escape the mean of zero.    As the value of $\sigma^2_{\text{high}}$ increases, the overall predictability of the categories given the covariates increases.   Note that there may be an additional $(k=0)$th group, with $M \bmod K$ members, which is not predictive of a specific category.  

Finally, we generate categorical observations by associating the covariates and regression weights via the softmax (multi-logit) inverse link function.  

Overall, our data generating mechanism is 
\begin{align*}
x_{im} &\iid \N(0,1), \quad\quad \tiny{i=1,\hdots, N, \; m=1,\hdots, M} \\
\beta_{mk} & \indsim N(0, \sigma^2_{mk}), \quad \tiny{m=0,\hdots, M, k=1,\hdots, K}\\
& \text{ where } \sigma^2_{mk}  =
\begin{cases}
\sigma^2_{\text{int}}, & \text{ if $m=0$}\\
\sigma^2_{\text{high}}, & \text{ if $m \geq 1$ and $k$ = $\ceil{m /S}$}\\
\sigma^2_{\text{low}}, & \text{ otherwise }\\
\end{cases} \\
y_i \cond \+x_i, \+B &\sim \text{Softmax}(\+B^T \widedot{\+x}_i)
\end{align*}
for observations $i=1,\hdots,N$, covariates $m=1,\hdots,M$ and categories $k=1,\hdots,K$, and where $\widedot{\+x}_i = (1,\+x_i^T)^T$ are the covariates prepended with a  value of 1 to correspond to the intercept term.  

Unless otherwise specified, we fix $\sigma^2_{\text{low}} = 0.001$ and $\sigma^2_{\text{int}} = 0.25$. We vary $\sigma^2_{\text{high}}$ throughout the experiments to control predictability.
    
\subsubsection{Metrics} \label{sec:metrics}
 
 To estimate the predictability of categories, we estimate the mean covariate-conditional category entropy for each dataset:
{\small 
\begin{align*}
\E_X \H[Y \cond X] \approx - \ds\sum_{i=1}^{N} \sum_{k=1}^K  p(y_i=k \cond \+x_i, \+B)\log p(y_i=k \cond \+x_i, \+B)
\labelit \label{eqn:mean_covariate_conditional_category_entropy}
\end{align*}}
where $p$ refers to the category probabilities and $\+B$ refers to the known regression weights from the true data generating process (softmax). 

The mean holdout log-likelihood for the $r$th prediction method  is computed by:

\begin{align}
\df{1}{N_{\text{test}}} \ds\sum_{i=1}^{N_\text{test}} \log   p_r(y_i=k \cond \+x_i, \+B_r) 
\label{eqn:mean_holdout_log_likelihood}
\end{align}

where the category probability formula $p_r$ and point estimate for $\+B_r$ are determined by the values of the corresponding columns for the $r$th row of Table \ref{tab:summary_of_modeling_strategies}.\footnote{Two of the modeling strategies - namely \texttt{Softmax (via MLE)} and \texttt{Baserate frequency} - can produce predictive probabilities of exact or numerical zero, e.g. when a category is observed in the test set that was never observed in the training set.  A single such instance will drive the log-likelihood metric to $-\infty$ regardless of the log likelihoods for any other sample.  To handle this issue, we renormalize these models to produce a minimum predictive probability of $\epsilon := 10^{-10}$ for each category.}


\subsection{Bayesian model averaging experiment: Supplemental information}

\subsubsection{Methodology} \label{sec:methodology_BMA_supplemental}

\paragraph{Data generation.}
We generated multiple datasets from a categorial distribution with a softmax (multi-logit) using the technique described in Sec.~\ref{sec:data_generation}.  In particular, we randomly generated 16 datasets by taking the number of categories to be $K \in \set{3, 10}$, the number of covariates to be a multiple of the number of categories via $M =aK$ for $a \in \set{1,2}$, the number of samples to be a multiplier on the number of parameters via $N=bP$ for $b \in \set{10,20, 40, 80, 160}$ (where recall that the number of parameters is given by $P=K(M+1)$ due the presence of an intercept), and  $\sigma^2_{\text{high}} \in \set{0.1, 4.0}$ to control the predictability of the categorical observations.

\paragraph{Training.} For each dataset, we used 80\% of the data for model training and held out the remaining 20\% for evaluation.  We applied our IB-CAVI inference technique with the logit link (so $H$ was taken as the standard logistic cdf).  For each dataset, we ran IB-CAVI until the surrogate lower bound $\ELBO_\IB$ had a mean value (across samples and categories) that dropped by 0.1 or less on consecutive iterations. 

\paragraph{Predictive likelihoods.} Recall that we have partitioned each dataset into training data $\+y^\text{train} \in \set{1,...,K}^{N_{\text{train}}}$ and hold-out test data $\+y^{\text{test}} \in \set{1,...,K}^{N_{\text{test}}}$. After training the model on $\+y^\text{train}$,  we consider three different predictive likelihoods for test set observations $y^{\text{test}}_i$, $i \in 1,..., N_\text{test}$. In particular, we can compute $p_\CBC(y^{\text{test}}_i \cond \+y_{\text{train}})$,  $p_\CBM(y^{\text{test}}_i \cond \+y_{\text{train}})$, and $p_\BMA(y^{\text{test}}_i \cond \+y_{\text{train}})$.   The former two quantities are estimated by substituting IB-CAVI's variational posterior expectation into the relevant model's category probability formulae, Eqs.~\ref{eqn:CBM_category_probabilities} and \ref{eqn:CBC_category_probabilities}.  The latter quantity is computed from the former two quantities via Eq.~\eqref{eqn:bayesian_model_average} using the method of Sec.~\ref{sec:determining_a_good_target_for_an_IB_approximation}.   

\paragraph{Discrepancy from true category probabilities.} Since the data is simulated, we have access to the ``ground truth" predictive likelihood $p_\text{true}(y^{\text{test}}_i \cond \+y_{\text{train}})$ for each test set sample, obtained by substituting the true regression weights $\+B_{\text{true}}$ into the softmax likelihood.  We can therefore evaluate the performance of our three estimated predictive likelihoods by computing the discrepancy between each approximation and this ground truth:  
\[ d_i : = D_\text{KL} \big[p_\text{true}(y^{\text{test}}_i \cond \+y_{\text{train}}) \parallel p_\mathcal{M}(y^{\text{test}}_i \cond \+y_{\text{train}})\big] \]
where $\mathcal{M} \in \set{\CBC, \CBM, \BMA}$, $ D_\text{KL}$ is the Kullback-Leibler divergence, and $i=1,...,N_\text{test}$.   Our performance measure for each estimated predictive likelihood is then the mean discrepancy across the test set, i.e. $\frac{1}{N_\text{test}} \sum_{i=1}^{N_\text{test}} d_i$.  

\subsubsection{Results} \label{sec:results_BMA_supplemental}

\begin{figure*}[htp!]
\begin{tabular}{cc}
\centering
\includegraphics[width=.5\textwidth]{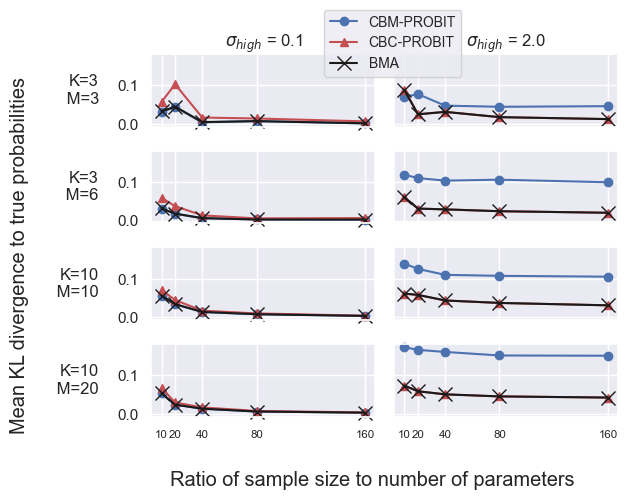} &
\includegraphics[width=.5\textwidth]{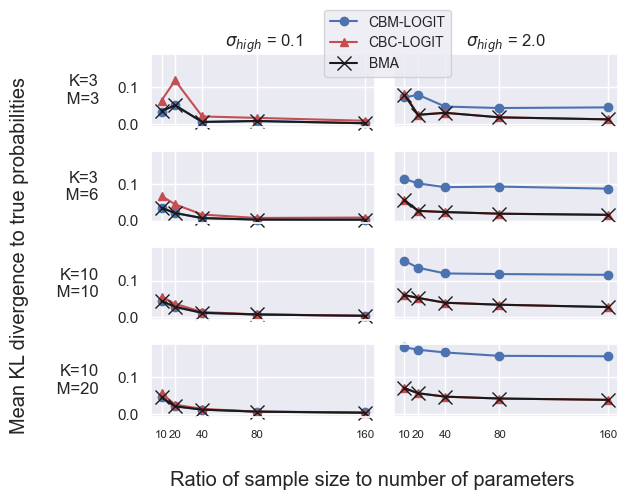} 
\end{tabular}
\caption{\textbf{Determining a good target for the \IB~approximation.}  Each point corresponds to a simulated dataset with some number of categories $K$ and covariates  $M$.   Plotted are the mean KL divergences on hold-out test data to the true categorical probabilities  for predictions of the \CBM and \CBC models (both estimated with IB-CAVI), as well as a Bayesian model averaging (BMA) of the two \IB~ approximation targets.  The left subplot use \CB~Probit models, whereas the right plot uses \CB~Logit models.  Within each subplot, the level of predictability of the categorical responses is \textit{weak} for datasets in the left column ($\sigma_\text{high} = 0.1$) and \textit{strong} for those in the right column ($\sigma_\text{high} = 2.0$). }
\label{fig:BMA_results_extended}
\end{figure*}.

Figure~\ref{fig:BMA_results_extended} provides an expanded version of Figure~\ref {fig:BMA_results_logit_ordered}.
   Some patterns of interest:

\begin{itemize}
\item As the number of categories and covariates and predictability ($K,M,\sigma^2_{\text{high}}$) are fixed, the error in the IB approximation decreases as the number of samples $N$ increases (as a multiple $b \in \set{10,20,40,80,120}$ on the number of parameters).  
\item As the predictability of the categorical response ($\sigma^2_{\text{high}}$) increases, the \CBC~model becomes better than \CBM~at serving as a target of the approximation. (To see this, compare the left column to the right column in Fig.~\ref{fig:BMA_results_logit_ordered}.) Since the predictability of the dataset may not be known in advance, this fact might seem to create a difficult model selection problem. Luckily, the Bayesian model   averaging (BMA) tracks the relative appropriateness of each model change by toggling the weight on the \CBC~model $w_\CBC$.  \red{TODO: Fix the sigma-high vs ssq-high thing.}
\item The relative advantage of the \CBC~model over the \CBM~model also seems to increase as the number of parameters $P=K(M+1)$ increases.  (To see this, compare the top rows to the bottom rows in Fig.~\ref{fig:BMA_results_extended}.) 
\end{itemize}

Table~\ref{tab:BMA_weights} provides more detailed information about the results shown in Fig.~\ref{fig:BMA_results_extended}.

\begin{table*}[htp!]
\caption{\textit{Additional results from applying the approximate Bayesian Model Averaging technique of Sec.~\ref{sec:determining_a_good_target_for_an_IB_approximation} to simulated datasets.}  This table provides more detailed information about the analysis depicted in Fig.~\ref{fig:BMA_results_logit_ordered}.  $w_\CBC$ gives the weight that the technique assigns to the \CBC~model.}
\label{tab:BMA_weights}
\centering 
\begin{tabular}{rrrr|r|rrr}
\toprule
 N &  K &  M &  $\sigma_\text{high}$ &  $w_\CBC$ & \multicolumn{3}{c}{Mean KL divergence to true probabilities from:} \\
    & & & & & \CBM & \CBC & \BMA \\
\midrule
  120 &  3 &  3 &     0.100 &       0.021 &                                0.016 &                                0.017 &                                0.015 \\
  120 &  3 &  3 &     2.000 &       0.861 &                                0.067 &                                0.046 &                                0.040 \\
  240 &  3 &  3 &     0.100 &       0.005 &                                0.031 &                                0.065 &                                0.031 \\
  240 &  3 &  3 &     2.000 &       0.999 &                                0.043 &                                0.027 &                                0.026 \\
  480 &  3 &  3 &     0.100 &       0.002 &                                0.007 &                                0.021 &                                0.007 \\
  480 &  3 &  3 &     2.000 &       1.000 &                                0.023 &                                0.018 &                                0.018 \\
  960 &  3 &  3 &     0.100 &       0.000 &                                0.002 &                                0.018 &                                0.002 \\
  960 &  3 &  3 &     2.000 &       1.000 &                                0.029 &                                0.020 &                                0.020 \\
 1920 &  3 &  3 &     0.100 &       0.000 &                                0.002 &                                0.008 &                                0.002 \\
 1920 &  3 &  3 &     2.000 &       1.000 &                                0.030 &                                0.018 &                                0.018 \\
  210 &  3 &  6 &     0.100 &       0.001 &                                0.030 &                                0.069 &                                0.030 \\
  210 &  3 &  6 &     2.000 &       1.000 &                                0.116 &                                0.093 &                                0.093 \\
  420 &  3 &  6 &     0.100 &       0.001 &                                0.016 &                                0.030 &                                0.016 \\
  420 &  3 &  6 &     2.000 &       1.000 &                                0.090 &                                0.030 &                                0.030 \\
  840 &  3 &  6 &     0.100 &       0.000 &                                0.009 &                                0.033 &                                0.009 \\
  840 &  3 &  6 &     2.000 &       1.000 &                                0.071 &                                0.023 &                                0.023 \\
 1680 &  3 &  6 &     0.100 &       0.000 &                                0.007 &                                0.025 &                                0.007 \\
 1680 &  3 &  6 &     2.000 &       1.000 &                                0.077 &                                0.018 &                                0.018 \\
 3360 &  3 &  6 &     0.100 &       0.000 &                                0.001 &                                0.017 &                                0.001 \\
 3360 &  3 &  6 &     2.000 &       1.000 &                                0.076 &                                0.014 &                                0.014 \\
 1100 & 10 & 10 &     0.100 &       0.002 &                                0.042 &                                0.054 &                                0.042 \\
 1100 & 10 & 10 &     2.000 &       1.000 &                                0.114 &                                0.057 &                                0.057 \\
 2200 & 10 & 10 &     0.100 &       0.008 &                                0.028 &                                0.034 &                                0.028 \\
 2200 & 10 & 10 &     2.000 &       1.000 &                                0.098 &                                0.052 &                                0.052 \\
 4400 & 10 & 10 &     0.100 &       0.023 &                                0.011 &                                0.013 &                                0.011 \\
 4400 & 10 & 10 &     2.000 &       1.000 &                                0.079 &                                0.017 &                                0.017 \\
 8800 & 10 & 10 &     0.100 &       0.925 &                                0.008 &                                0.008 &                                0.008 \\
 8800 & 10 & 10 &     2.000 &       1.000 &                                0.074 &                                0.017 &                                0.017 \\
17600 & 10 & 10 &     0.100 &       1.000 &                                0.004 &                                0.004 &                                0.004 \\
17600 & 10 & 10 &     2.000 &       1.000 &                                0.073 &                                0.017 &                                0.017 \\
 2100 & 10 & 20 &     0.100 &       0.005 &                                0.041 &                                0.051 &                                0.041 \\
 2100 & 10 & 20 &     2.000 &       1.000 &                                0.135 &                                0.061 &                                0.061 \\
 4200 & 10 & 20 &     0.100 &       0.000 &                                0.021 &                                0.026 &                                0.021 \\
 4200 & 10 & 20 &     2.000 &       1.000 &                                0.128 &                                0.053 &                                0.053 \\
 8400 & 10 & 20 &     0.100 &       0.006 &                                0.012 &                                0.015 &                                0.012 \\
 8400 & 10 & 20 &     2.000 &       1.000 &                                0.111 &                                0.028 &                                0.028 \\
16800 & 10 & 20 &     0.100 &       1.000 &                                0.006 &                                0.007 &                                0.007 \\
16800 & 10 & 20 &     2.000 &       1.000 &                                0.109 &                                0.023 &                                0.023 \\
33600 & 10 & 20 &     0.100 &       1.000 &                                0.004 &                                0.004 &                                0.004 \\
33600 & 10 & 20 &     2.000 &       1.000 &                                0.103 &                                0.022 &                                0.022 \\
\bottomrule
\end{tabular}
\label{tab:approximation_error_when_using_IB_CAVI_and_BMA}
\end{table*}

\subsection{Variational Bayes vs. Maximum Likelihood: Supplemental information} \label{sec:vb_vs_mle_supplemental}
\subsubsection{Methodology} \label{sec:data_generation_for_cavi_vs_mle}

We generate data using the method described in Sec.~\ref{sec:data_generation}.  For this experiment, we generate $D=10$ datasets per simulation context, which is a particular choice of $K=3, M=2K, N \in \set{1, 100} *P \text{ where $P=K(M+1)$ }, \sigma_{\text{high}} \in \set{0.01, 0.5, 1, 2, 5, 10, 20, 50, 100}, \sigma_{\text{low}}=0.01, \sigma_{\text{int}} =1.0$. The choice of $M=2K$ could be imagined as the number of covariates under a light (order 2) autoregressive structure.  For each dataset, we used 80\% of the data for model training and held out the remaining 20\% for evaluation.  

\subsubsection{Modeling Strategies} \label{sec:modeling_strategies}
 We compare a number of different modeling strategies:  
\begin{enumerate}
\item \textit{Data generating process}: We take the known regression coefficients $\+B_{\text{true}}$ and plug it into the softmax (multi-logit) categorical probability function.  
\item \textit{Softmax (via MLE)}: We estimate the MLE, $\+B_{\text{MLE}}$, for and softmax a.k.a. multi-logit model.  The optimization was computed using automatic differentiation in \texttt{jax} (and default convergence parameters).  We can interpret the results of the optimization as an (approximate) MLE due to the convexity of the multi-logit function. The solver used was BFGS, which is the only solver that \texttt{jax} supports\footnote{as of documentation revision 1182e7aa}.  We can make predictions on new samples by plugging in $\+B_{\text{MLE}}$ to the multi-logit categorical probability function. 
 \item \textit{\CB~(via IB-CAVI)}: We compute CAVI for  \CB~models (specifically, the \CBC-Probit and \CBM-Probit) with a $\N(\+0,\+I)$ prior using the variational technique with independent binary approximation  described in the main body of the text.   CAVI was run until the The variational posterior mean $\E_q[\+B]$ was used as a point estimate for $\+B$, and then substituted into the category probability formula for either the \CBC-Probit or \CBM-Probit.  
 \item \textit{Baserate frequency}: We use the raw frequencies of each category in the training set and use those as the predicted category probabilities for test set data, regardless of the value of the covariates, i.e.
\begin{align*}
p(y_i = k \cond \+x_i) = f_k 
\labelit \label{eqn:baserate_frequency}
\end{align*}
where $f_k$ is the frequency with which the $k$th category was observed in the training set. 
 \end{enumerate}
 
 The differences between the modeling strategies are summarized in Table \ref{tab:summary_of_modeling_strategies}.

\begin{table*}[htp!]
\caption{Summary of various modeling strategies for categorical data as used in the simulations experiment.}
\label{tab:summary_of_modeling_strategies}
\centering 
 \begin{tabular}{c|cc}
\textbf{Modeling Strategy} &  \textbf{Model} & \textbf{Inference for $\+B$} \\
\hline 
Data generating process & Softmax & $\+B$ is known \\
Softmax (via MLE) & Softmax & MLE on softmax  model \\
\CBC-Probit (via IB-CAVI) & \CBC & Variational posterior mean for the IB model, $\E_q[\+B]$ \\ 
\CBM-Probit  (via IB-CAVI) & \CBM & Variational posterior mean for the IB model, $\E_q[\+B]$ \\ 
Baserate frequency & Equation \eqref{eqn:baserate_frequency} &  N/A \\
 \end{tabular}
\end{table*}

\paragraph{Training.}  The MLE and IB-CAVI were both   initialized at  the zero matrix.  For each dataset, we ran IB-CAVI until the surrogate lower bound $\ELBO_\IB$ had a mean value (across samples and categories) that dropped by 0.1 or less on consecutive iterations. 

\red{TODO: We probably now need to move these modeling strategies up in the appendix, because they are probably relevant to the BMA experiment, which now comes first.}

\subsection{Holdout performance over time: Supplemental information} \label{sec:holdout_performance_over_time_supplemental}

The purpose of this experiment is to compare IB-CAVI against other methods for Bayesian inference with categorical GLMs. In particular, we compare the performance on holdout data as a function of training time.

\subsubsection{Datasets} \label{sec:datasets_for_perfromance_over_time}

\paragraph{Simulated datasets.} We construct simulated datasets using the method described in Sec.~\ref{sec:data_generation} for given specifications ($N$ samples, $K$ categories, $M$ covariates).  We set $\sigma^2_{\text{high}}=2.0$.

\paragraph{Real datasets.} We investigate the following real datasets:
\begin{enumerate}
\item 	The \textit{Detergent Purchase} dataset  \cite{imai2005bayesian}, which has 2,657 observations, 6 covariates, and 6 categorical responses.  Each record represents the purchase of a laundry detergent by a household in Sioux Falls, South Dakota.  The prediction goal is to identify which of the 6 laundry detergents was purchased given the prices of all 6 detergents. This data is available via the GPL-3 license at \url{https://github.com/kosukeimai/MNP/blob/master/data/detergent.txt.gz}. For the original paper using this dataset, see \cite{chintagunta1998empirical}.

\item The \textit{Anuran Frog Calls} dataset \citep{anuranFrogDataset2017}, which has 7,195 observations, 22 covariates, and 10 categorical responses. Each record represents extracted audio features (mel-frequency cepstrum coefficients (MFCCs) from a recording of a frog making some natural noises. The prediction goal is to identify which of 10 species the frog belongs to. This data is available from the UC-Irvine Machine Learning Repository via an open-access CC-BY license. For an in-depth paper using this data, see \citet{anuranFrogPaper2016}.

\item The \textit{Glass Identification} dataset \citep{glassDataset1987}, which has 214 samples, 9 covariates, and 6 response categories that were observed. Each record represents observable properties of a physical sample of glass, with the prediction goal being to identify which of six types of glass the sample represents. A seventh possible category is noted in the data description but never observed. This data is available from the UC-Irvine Machine Learning Repository under an open-access CC-BY license. For the original paper using this dataset, see \citet{glassPaper1987}.

\item A \textit{Single-User Process Start} dataset, which has 17,724 observations, 1,553 covariates, and 1,553 categorical responses. The dataset is constructed from the Comprehensive, Multi-Source Cybersecurity Events Dataset \citep{kent2015comprehensive} using the methods of Sec.~\ref{sec:intrusion_detection_experiment_additional_info}. Each record contain one process start from one user account \texttt{U293@DOM1} along with the identity and timing of the $W=5$ immediately preceding process starts. The prediction goal is to identify the next process start.  The data is open-access with all copyrights waived, and the preprocessing used is available at \url{\codeURL}.
\end{enumerate}

For all real datasets, we z-transformed all covariates, as the range of some variables is very small (e.g. consider the \texttt{RI} variable in the \textit{glass identification} dataset, which only varies from 1.51 to 1.52).  This lets us use independent $\N(0,1)$ priors on the regression weights for each covariate-category combination. No missing data occurred in any of the datasets.

\subsubsection{Modeling Strategies} \label{sec:modeling_strategies_performance_over_time}
Here we describe the various modeling strategies we used for Bayesian categorical regression modeling of the provided datasets. For motivation on which methods to include vs. exclude in the experiment, see the discussion of Sec.~\ref{sec:supplemental_info_on_alternative_inference_methods}. 
  
\begin{enumerate}
 \item \textit{\CB-Probit and \CB-Logit (via IB-CAVI)}: We compute CAVI for \CB-Probit and \CB-Logit models  with a $\N(\+0,\+I)$ prior using the variational technique with independent binary approximation  described in the main body of the text.  
\item \textit{Softmax regression (via automatic differentiation variational inference (ADVI)).}  The gradient updates for softmax regression (whose parameters have support of unconstrained reals) are described in Sec.~\ref{sec:ADVI}.  We implement these updates in \texttt{jax}, and optimize using  Algorithm 1 of \cite{kucukelbir2017automatic}. We follow the recommendations of that paper to guide the optimization details: one Monte Carlo sample per update, and adaptive step-size sequences with  varying learning rates but all other hyper-parameters kept at their recommended defaults.

 \item \textit{Softmax regression (via the No U-Turn Sampler (NUTS))}  We sample from the posterior of softmax regression using the No U-Turn Sampler (NUTS) \cite{hoffman2014no} as implemented in the Python package \texttt{numpyro}.  \red{MCH: Should I say anything more about the defaults?}
 \item \textit{Softmax regression (via Gibbs after \pga)} Here we model the data with softmax regression (more specifically the identified version of it  which is obtained by setting $\+\beta_K \equiv 0$; this is often called multi-logit regession), but using the Gibbs sampler which is available after \pga.  The complete conditionals for the Gibbs sampler are given in Sec.~\ref{sec:gibbs_sampling_for_multi_logit_regression_with_pga}. 
 \end{enumerate}

\subsubsection{General experimental methodology}

For training, we used 80\% of the data for model training and held out the remaining 20\% for evaluation for all datasets except glass identification.  Due to the small size of the glass identification dataset, we instead used a 90\%/10\% split.  All methods were initialized to have their matrix of regression weights be $\+B = \+0$.   Each inference method was run for a preset number of iterations (ADVI, IB-CAVI) or samples (NUTS, Gibbs) in an attempt to make the running time for each method similar.  The performance of NUTS is dependent upon the number of tuning samples, which was set to be 25-33\% as large as the number of samples retained afterwards. 

For prediction on holdout test data, the posterior mean (for NUTS and Gibbs) or variational posterior mean (for ADVI and IB-CAVI) was used as a point estimate $\widehat{\+B}$ for $\+B$. This value $\widehat{\+B}$ was then substituted into the appropriate category probability formula -- softmax, multi-logit (i.e. identified softmax), \CB-Probit, or \CB-Logit.  For a given \CB~link function (probit or logit), the \CB~variant (\CBM~or \CBC) was chosen that yielded the largest training likelihood.  This strategy provides a cheap heuristic approximation to BMA, as most datasets have sufficiently many observations that the BMA weights tend to be very close to 0.0 or 1.0. 

For performance metrics, we used mean holdout log-likelihood and mean predictive accuracy.  The mean holdout log-likelihood was  computed in the standard way (Eq.~\eqref{eqn:mean_holdout_log_likelihood}).  For the accuracy metric, the category with the largest probability was considered to be the predicted category.  If a test set observation had $C$ categories predicted with the same probability, then the model was given credit for $1/C$ rather than $1$ correct response.   For simulated data, we also computed these performance metrics under random guessing and when using the true model (i.e. softmax regression, using $\+B_{\text{true}}$.)

\subsubsection{Results}

The primary results were given in Sec.~\ref{sec:holdout_perf_over_time}.  Supplemental results are provided in Fig.~\ref{fig:holdout_perf_over_time_supplemental}.

\begin{figure*}[h]
\centering
\begin{tabular}{c c c l }    
	Glass 
    & Frog Calls 
    & Simulated 
\\
	{\scriptsize $K=6, M=9, N =214$}
    & {\scriptsize $K=10, M=22, N=7195$ } 

    &  {\scriptsize $K{=}100, M{=}200, N{=}20000$}  
\\ 
    \includegraphics[height=3cm]{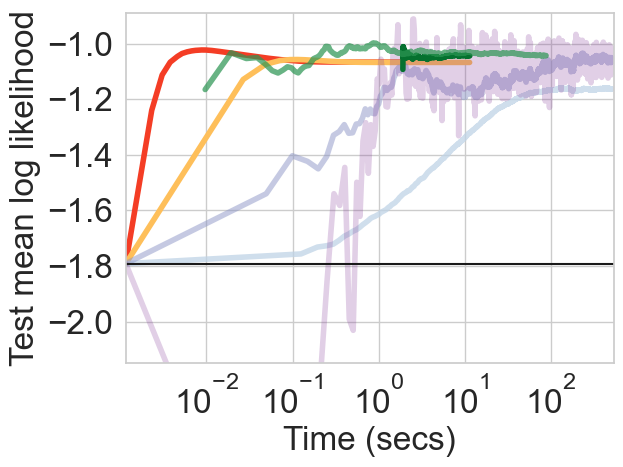} 
&
    \includegraphics[height=3cm]{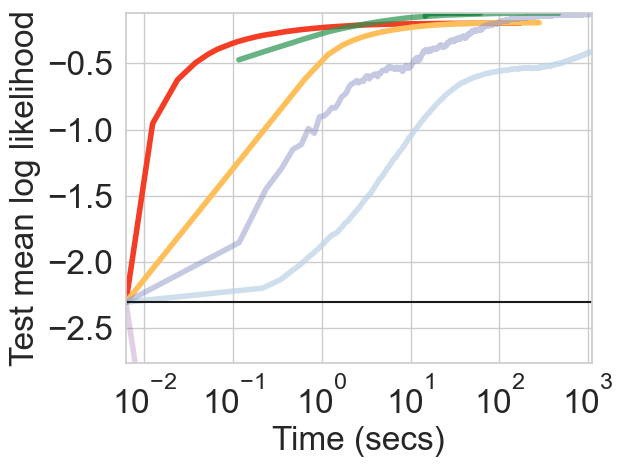}
& \includegraphics[height=3cm]{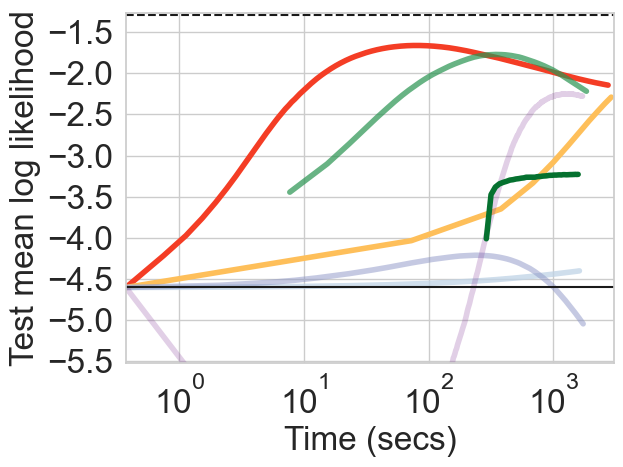}
& \includegraphics[height=2.5cm]{images/perf_over_time/legend_only_show_CB_logit=True.png}
\\
    \includegraphics[height=3cm]{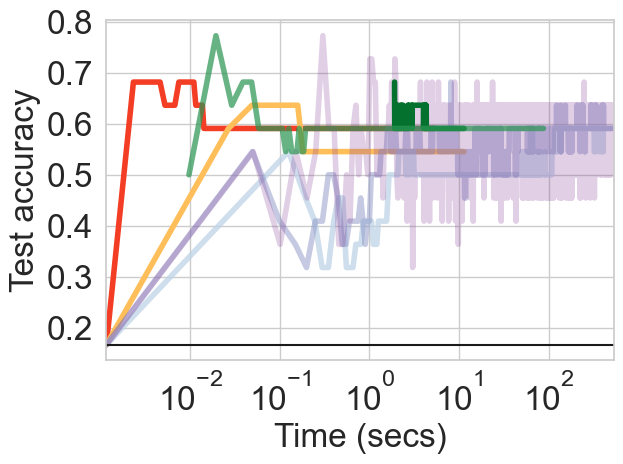}
   & \includegraphics[height=3cm]{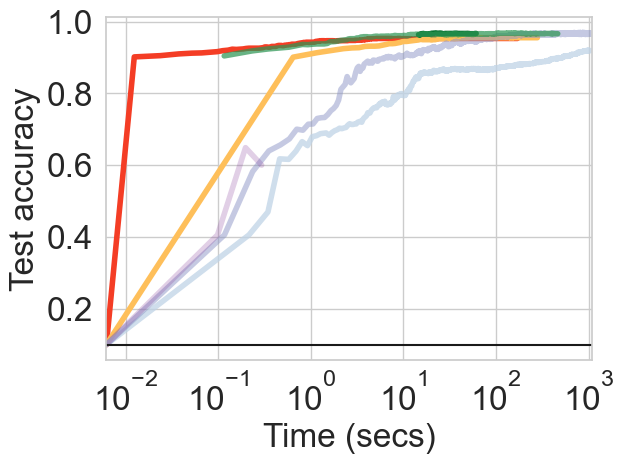}
&
    \includegraphics[height=3cm]{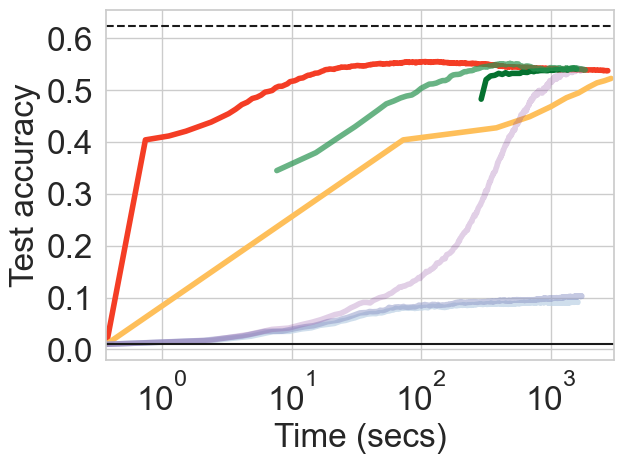}
\end{tabular}
    \caption{Supplemental comparisons of holdout log likelihood (top) and accuracy (bottom) over training time on real  and simulated datasets with $K$ categories, $M$ covariates, and $N$ instances.  For ADVI, we try the learning rates $\mathcal{L}:=(0.01, 0.1, 1.0, 10, 100)$ recommended by \cite{kucukelbir2017automatic}, adjusted to $10^{-1} \mathcal{L}$ in the largest dataset to reduce divergence. If a line is absent for ADVI, the method diverged.  Note that for IB-CAVI, the parallelism over $K$ could be exploited to yet further reduce training time.}
\label{fig:holdout_perf_over_time_supplemental}
\end{figure*}

\subsection{The impact of the \IB-approximation on posterior over category probabilities} 


In this section, we directly investigate the quality of the posterior over category probabilities that is learned by \IB-CAVI. By \textit{posterior over category probabilities}, we refer to the categorical likelihoods $p(y = k | \+B)$ obtained by drawing the regression weights from the  approximate posterior density over weights $q(\+B \cond \+y_{1:N})$, where $\+y_{1:N}$ is the training data.   While a direct analysis of $q(\+B \cond \+y_{1:N})$ is possible, this is an intermediate quantity less relevant to applications (see Sec.~\ref{sec:problem_formulation}) and may be confounded by identifiability issues.

 Thus, we compare \IB-CAVI's posterior over category probabilities against that learned by other methods that do not make an \IB-approximation.  We would like to obtain a concrete visualization of  how the \IB-approximation impacts the bias and variance of this posterior over category probabilities.  Of particular interest is the comparison to the NUTS sampler, which can be taken as the gold standard.

\subsubsection{Methodology}

\paragraph{Dataset.} We construct a simulated dataset using the method described in Sec.~\ref{sec:data_generation} with $N$=1000 samples, $K$=4 categories, and $M$=8 covariates.  We set $\sigma^2_{\text{high}}=4.0$.

\paragraph{Methods.} We train a \CB-Probit model with \IB-CAVI (Algorithm~\ref{alg:ib_cavi_for_cb_probit}) until the drop in the mean ELBO (with the mean taken across the number of samples $N$ and categories $K$) was less than 0.01  across consecutive iterations.  Bayesian model averaging (BMA; Sec.~\ref{sec:determining_a_good_target_for_an_IB_approximation}) reveals that the weight on the \CBC~model, $\pi_\CBC$, was very close to 1.0; thus, the predictions of the \CB-Probit model with BMA is virtually identical to the predictions of the \CBC-Probit model.  For this reason, our baseline black-box inference methods use the \CBC-Probit (rather than \CBM-Probit, or some mixture).  In this case, the baseline methods used were Automatic Differentiation Variational Inference (ADVI) or the No-U-Turn Sampler (NUTS) (see Sec.~\ref{sec:modeling_strategies_performance_over_time}).

\subsubsection{Results}

Fig.~\ref{fig:violin_plots_posterior_over_category_probabilities} gives the posterior over category probabilities for the first 9 training set observations as approximated by IB-CAVI, ADVI, and NUTS. 

\subsubsection{Discussion}

\IB-CAVI delivers posteriors over category probabilities that are reasonably good approximations to those obtained by ADVI and NUTS.  However, the procedure does appear to reduce variance and introduce some bias.  For applications where fidelity to the true posterior is critical, one could use \IB-CAVI for warm-starting.  That is, one could use \IB-CAVI's quickly learned approximate posterior to initialize a more expensive procedure that delivers greater fidelity.  For example, one might use \IB-CAVI to initialize NUTS, which is computationally expensive but asymptotically exact.   

\begin{figure*}
\centering
\begin{tabular}{c c c}
& 
\includegraphics[height=2.5cm]{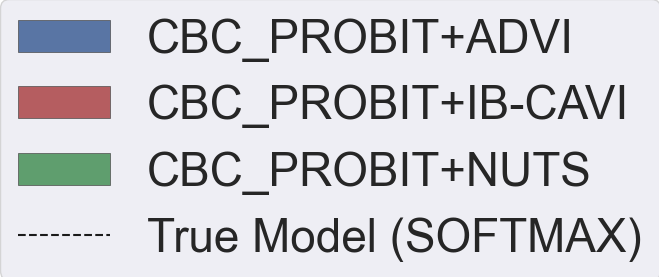}
& \\
  \includegraphics[height=4.5cm]{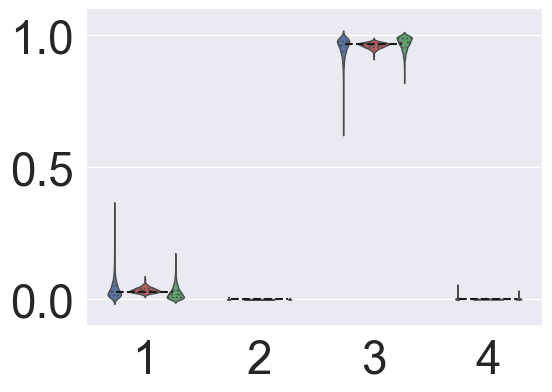} 
&
    \includegraphics[height=4.5cm]{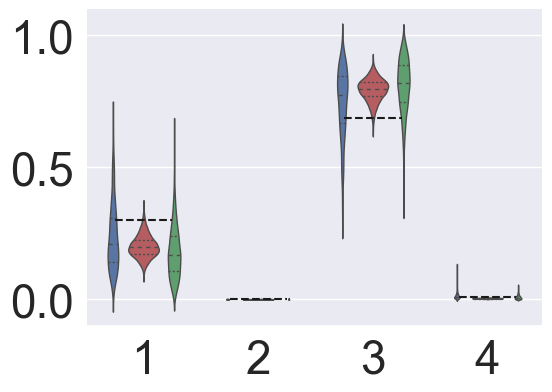}
& \includegraphics[height=4.5cm]{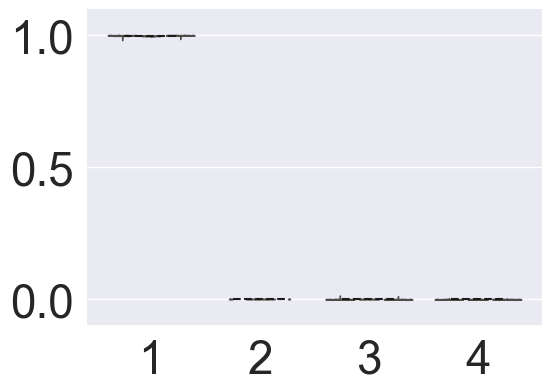}
\\
    \includegraphics[height=4.5cm]{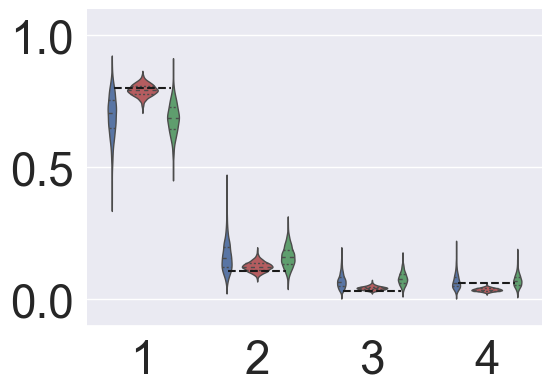}
   & \includegraphics[height=4.5cm]{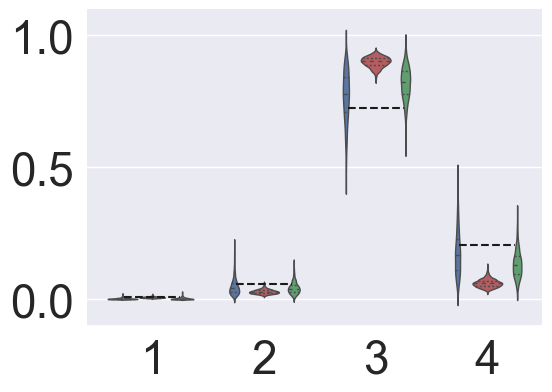}
& \includegraphics[height=4.5cm]{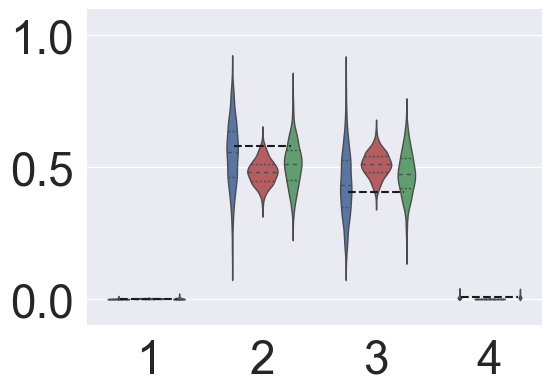} \\
\includegraphics[height=4.5cm]{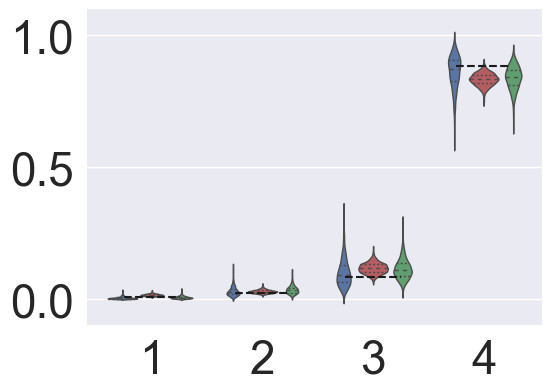}
   & \includegraphics[height=4.5cm]{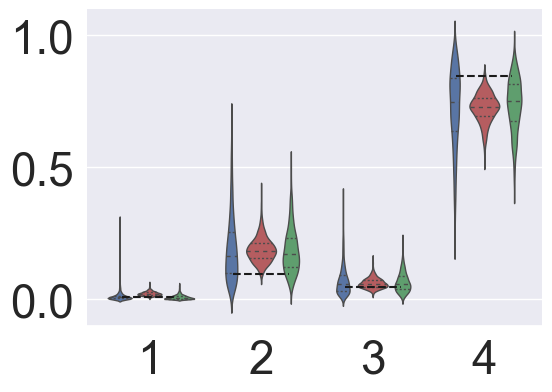}
& \includegraphics[height=4.5cm]{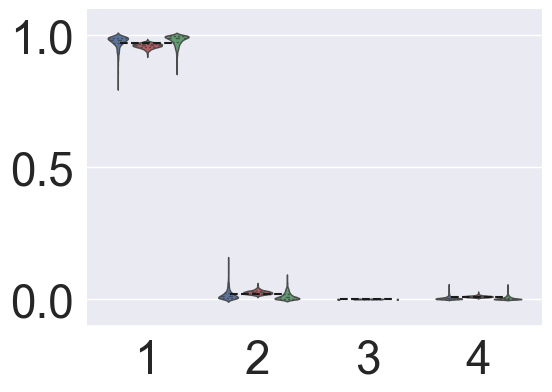} 
\end{tabular}
    \caption{Violin plots showing the approximate posteriors over category probabilities (y-axis) across four possible categories (x-axis).  The approximate posteriors over category probabilities are given by three approximate Bayesian inference methods applied to simulated softmax regression data. Each subplot represents the  approximate posterior predictive distribution over categories given the covariates for a single training set observation.   
}
\label{fig:violin_plots_posterior_over_category_probabilities}
\end{figure*}

\subsection{Intrusion detection experiment: Supplemental information} \label{sec:intrusion_detection_experiment_additional_info}

\subsubsection{Data}  \label{sec:cyber_data}

The raw multi-source cyber-security behavioral event data \cite{kent2015comprehensive} contains 58 consecutive days of computer usage behavior from within Los Alamos National Laboratory's corporate, internal computer network.   Behavior is represented in terms of events from five modalities (process starts,  network activity,  etc.) For earlier work on intrusion detection with this dataset, see \cite{turcotte2016poisson}.

 We restrict our analysis to the raw process data, which represents process start and stop events collected from individual Windows-based desktop computers and servers.  In addition, we  restrict our attention to process starts from human users on active directory domain accounts. This restriction requires three pre-processing steps:

\begin{enumerate}
\item Use only the process starts (discard the process ends).
\item Restrict to human users. Users with names beginning with a 'U' are human accounts while those beginning with a 'C' are computer accounts (managed by computers not actual people).  Correspondingly, users which start with 'C'  seemed to have less variation across users in process starts. 
\item Restrict to active directory domain accounts. Domains  starting with `C` are local accounts typically used on individual workstations.  Domains starting with `DOM` can  be used to authenticate on local machines, but they are also the standard way of authenticating for network resources (email, databases, servers, etc.).  There is a lot more user overlap for `DOM` resources, making them less predictable.\footnote{This information was provided by personal communication with Aaron Scott Pope, the current contact from Los Alamos National Laboratories provided by \url{https://csr.lanl.gov/data/cyber1/}.} 	
\end{enumerate}

\subsubsection{Featurization} \label{sec:featurization_for_intrusion_detection}
    
 During an exploratory data analysis, we notice:
 \begin{enumerate}
 \item Regularities in process start subsequences can have minor permutations.  For instance, see user U1788, who deterministically cycles through P111-P296-P298-P299, until breaking out of this pattern for the last 10 or so process starts.  The same processes are used, but in a different pattern.  
 \item Multiple processes can be launched simultaneously (up to the single second resolution with which time is reported), and the order in which simulataneous processes are listed is not invariant. (Note this provides a partial explanation for item (1).)
 \end{enumerate}

 To accommodate these features of the data, we do not use a strict autoregressive featurization, but a softer version which should be more tolerant of noise.  In particular, we choose a lookback window of $W$ processes, and then featurize each of these $W$ processes with $\exp(-\Delta t/\tau)$ seconds, where $\tau$ is a temperature parameter and $\Delta t$ refers to how long ago (in seconds) the process was launched.   

    The window size $W$ could perhaps be justified by plotting the distribution on the number of simultaneous
    process launches, and saying that the window size is the whatever-th percentile of that distribution.
    
\subsubsection{Methodology for experiment}
Here we describe the methodology used for the experiment discussed in Sec~\ref{sec:intrusion_detection_experiment}. We selected $U$=32 users from the database who had moderately many process starts. The number of processes started per user, $N_u$, over the course of the 58 days of data collection ranged from 17,678 to 19,261.\footnote{The target number $N_u$ serving as an  inclusion criterion was chosen out of convenience: the Python package in its currently implementation can handle $N_u \approx 20,000$ without a memory error, but cannot handle the largest value of $N_u$ in the dataset, due to memory constraints. No attempt was made to model the largest $N_u$, because the experiment as is seems sufficient to prove the point.  Further scalability could be obtained by improving the implementation (in terms of handling of sparsity and/or further exploiting the fact that the algorithm is embarassingly parallel across categories), or by incorporating memoization \cite{hughes2013memoized} or  stochastic variational inference \cite{hoffman2013stochastic} strategies within the IB-CAVI framework.}

We learn each user's process start behavior by training a separate model for each user. We use the featurization strategy of Section \ref{sec:featurization_for_intrusion_detection}, somewhat arbitrarily choosing the window size to be $W=5$ and the temperature to be $\tau=60$ seconds.   We take the number of categories to be $K=1,553$, the number of unique processes in the entire dataset. 

We take the first 80\% of the process start events to be training data, and the remainder to be hold-out test data.  We use IB-CAVI to approximately learn the \CBC-Probit model. We ran inference for 100 iterations.  Each iteration required approximately 5 to 20 seconds of computation time.

\subsection{Glass identification: Supplemental analysis}

Here we provide further analysis of the glass identification dataset that was also analyzed in the holdout performance over time experiment (Sec.~\ref{sec:holdout_performance_over_time_supplemental}).  Here, following \cite{johndrow2013diagonal}, we perform 10-fold cross validation, randomly splitting the dataset 10 times into a training set and test set, where each split put 90\% of the original dataset into the training set.  Thus, each data split had $N_\text{train} = 192$ training samples, and $N_\text{test} = 22$ test samples.  

We z-transformed all variables, as the range of some variables is very small (e.g. consider the \texttt{RI} variable, which only varies from 1.51 to 1.52).  This lets us use independent $\N(0,1)$ priors on the regression weights for each covariate-category combination.

\subsubsection{Methodology}
We applied two different Bayesian inference methods : MCMC sampling and variational inference.  For MCMC sampling, we applied the implementation of the No U-Turn Sampler (NUTS) \cite{hoffman2014no} given in the \texttt{numpyro} library. We  obtained 10,000 total samples (3,000 burn-in samples).  \red{TODO: Add better description and cite for numpyro, as well as NUTS.}  For variational inference, we applied IB-CAVI, and concluded convergence when the drop in the mean ELBO (with the mean taken across the number of samples $N$ and categories $K$) was less than 0.005 across consecutive iterations.   For both inference methods, we initialized the regression weights $\+B$ to the zero matrix. 

\subsubsection{Results}

Table~\ref{tab:glass_identification_results} shows the results. We find that IB-CAVI gives results that are close to those obtained by NUTS, but between 44 and 1,110 times faster.  We also note from the NUTS results that the \CB~models perform competitively with the softmax model, a much more familiar categorical GLM.  

\begin{table*}[htp!]
\caption{\textit{Glass identification results}. The (geometric) mean holdout likelihood is given by $\exp \big( \frac{1}{F N_{\text{test}}}\sum_{f=1}^F \sum_{n=1}^{N_{\text{test}}} \log p(y_n^{\text{test}} \cond \+B^*) \big)$, where $F$ is the number of cross-validation folds and $\+B^*$ is the posterior expectation from IB-CAVI or NUTS. It represents the typical probability score that the fitted model assigns to categorical outcomes in the test set.  Computation time is measured in seconds. Note that accuracy will always be identical for \CBC~and \CBM~models with the same \IB~base model when fit with \IB-CAVI, as guaranteed by Prop.~\ref{prop:IB_plus_OHT_and_IB_plus_NSP_make_the_same_predictions}.}
\label{tab:glass_identification_results}
\centering
\begin{tabular}{lr|rr|rr|rr|rr}
\toprule
Model & Softmax & \multicolumn{2}{c|}{\CBC-Logit} & \multicolumn{2}{c|}{\CBM-Logit} & \multicolumn{2}{c|}{\CBC-Probit} & \multicolumn{2}{c}{\CBM-Probit} \\
Inference &     NUTS &       NUTS & IB-CAVI &       NUTS & IB-CAVI &        NUTS & IB-CAVI &        NUTS & IB-CAVI \\
\midrule
Mean likelihood  &    0.38 &      0.38 &    0.36 &      0.38 &    0.36 &       0.34 &    0.35 &       0.41 &    0.37 \\
Accuracy         &    0.64 &      0.65 &    0.64 &      0.64 &    0.64 &       0.65 &    0.65 &       0.64 &    0.65 \\
Computation time &   20.17 &     26.07 &    0.30 &     15.41 &    0.30 &     333.04 &    0.35 &      59.98 &    0.35 \\
\bottomrule
\end{tabular}
\end{table*}

\end{document}